\title{Symmetries, Flat Minima, and the Conserved Quantities of Gradient Flow}
\author{Bo Zhao\thanks{Equal contribution.} ~\thanks{Work done during an internship at IBM.}
\\
University of California, San Diego \\
\texttt{bozhao@ucsd.edu} \\
\And
Iordan Ganev$^*$ \\
Radboud University \\
\texttt{iganev@cs.ru.nl \hspace{40pt}~} \\
\And
Robin Walters \\
Northeastern University \\
\texttt{r.walters@northeastern.edu} \\
\And
Rose Yu \\
University of California, San Diego \\
\texttt{roseyu@ucsd.edu} \\
\And
Nima Dehmamy \\
IBM Research\\
\texttt{nima.dehmamy@ibm.com} \\
}
\newcommand{\edit}[1]{{\color{black}#1}}
\begin{document}

\maketitle

\begin{abstract}
Empirical studies of the loss landscape of deep networks have revealed that many local minima are connected through low-loss valleys. 
Yet, little is known about the theoretical origin of such valleys. 
We present a general framework for finding continuous symmetries in the parameter space, which carve out low-loss valleys. 
Our framework uses equivariances of the activation functions and can be applied to different layer architectures. 
To generalize this framework to nonlinear neural networks, we introduce a novel set of nonlinear, data-dependent symmetries. 
These symmetries can transform a trained model such that it performs similarly on new samples, which allows ensemble building that improves robustness under certain adversarial attacks.
We then show that conserved quantities associated with linear symmetries can be used to define coordinates along low-loss valleys. 
The conserved quantities help reveal that using common initialization methods, gradient flow only explores a small part of the global minimum. 
By relating conserved quantities to convergence rate and sharpness of the minimum, we provide insights on how initialization impacts convergence and generalizability. 

\out{ 
The loss landscape of deep learning seems hopelessly complex. 
Most structures discovered in this landscape have been empirical observations. 
The loss landscape is determined by the model architecture and the dataset. 
Yet, the interplay between architecture and the structure of local minima is not well understood. 
We uncover a key part of this relation using symmetries, meaning transformation that keep the loss invariant. 
We show that many neural networks admit many continuous symmetries. 
These symmetries show that many local minima are valleys, having directions in the parameter space where the loss remains invariant. 
Additionally, we show that these symmetries imply certain quantities are conserved during gradient flow. 
We derive the explicit form of conserved quantities associated with continuous symmetries. 
These conserved quantities allow us to define coordinates along the valley of a local minimum. 
These symmetries can be used to create ensembles of trained models from a single trained model.
}
\end{abstract}
\section{Introduction}
\begin{wrapfigure}{R}{0.4\textwidth}
\vskip -20pt
\centering
\includegraphics[width=0.4\textwidth]{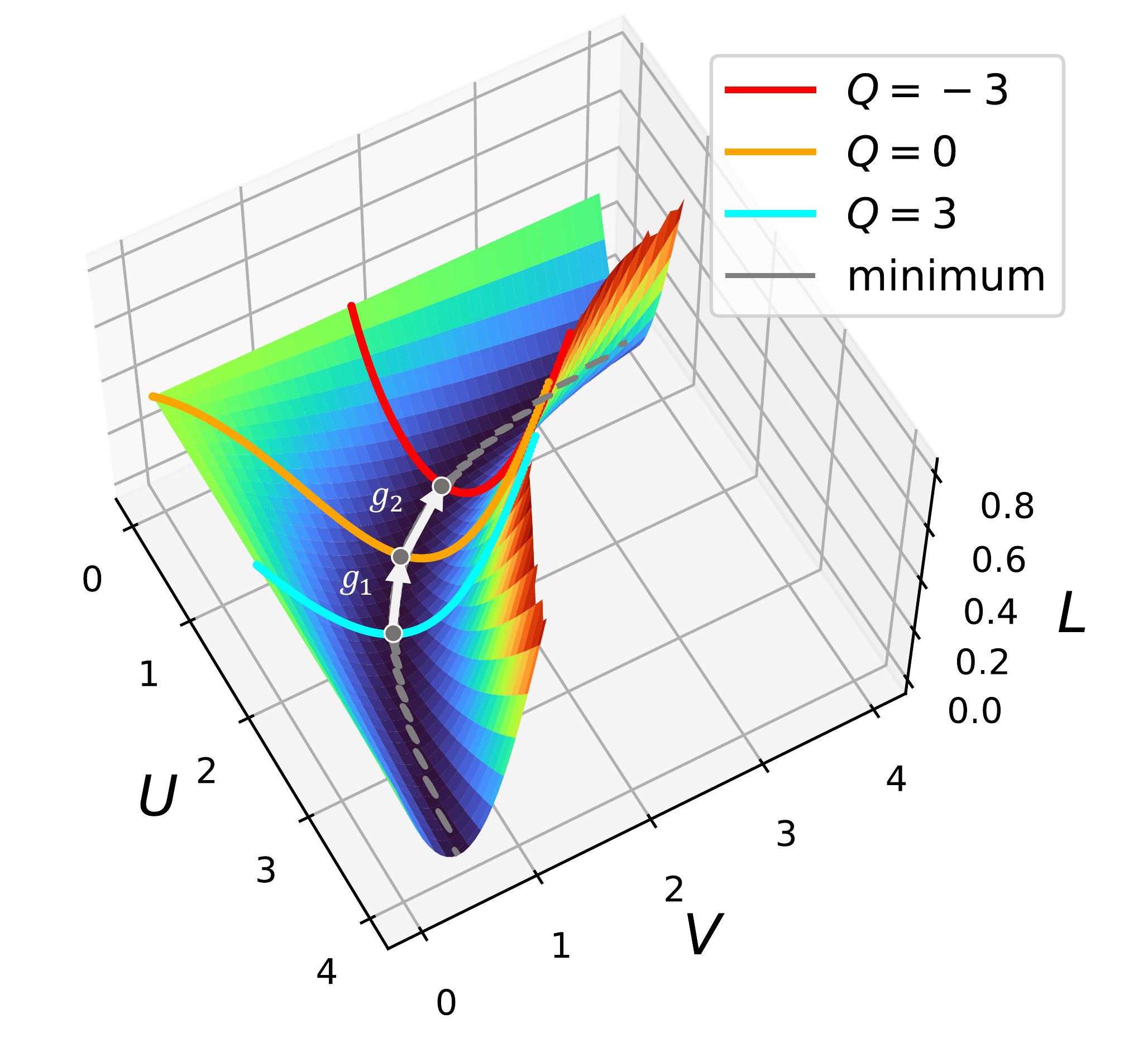}
\caption{
Visualization of the extended minimum in a 2-layer linear network with loss $\L=\|Y-UVX\|^2$. 
Points along the \edit{minima} are related to each other by scaling symmetry $U\to Ug^{-1}$ and $V\to gV$. 
Conserved quantities, $Q$, associated with scaling symmetry parametrize points along the minimum. 
}
\label{fig:Q-param-valley}
\vskip -10pt
\end{wrapfigure}

Training deep neural networks (NNs) is  a highly non-convex  optimization problem. The loss landscape of a NN, which is  shaped by the model architecture and the dataset, is generally very rugged, with the number of local minima growing rapidly with model size \citep{bray2007statistics,simsek2021geometry}. 
Despite this complexity, recent work has revealed many interesting structures in the loss landscape.
For example, NN loss landscapes often contain approximately flat directions along which the loss does not change significantly \citep{freeman2017topology, garipov2018loss}. 
Flat minima have been used to build ensemble or mixture models by sampling different parameter configurations that yield similar loss values \citep{garipov2018loss, benton2021loss}. 
However, finding such flat directions is mostly done empirically, 
with few theoretical results. 

One source of flat directions is parameter transformations that keep the loss invariant (i.e.\ symmetries). Specifically, moving in the parameter space from a minimum in the direction of a symmetry takes us to another minimum. 
Motivated by the fact that  continuous symmetries of the loss result in flat directions in local minima, we derive a general class of such symmetries in this paper. 

Our key insight is to focus on \textit{equivariances of the nonlinear activation functions}; most known continuous symmetries  can be derived using this framework.
Models related by exact equivalence cannot behave differently on different inputs. 
Hence, for ensembling or robustness tasks, we need to find \textit{data-dependent symmetries}. 
Indeed, aside from the familiar ``linear symmetries'' of NN, the framework of equivariance allows us to introduce a novel class of symmetries which act \textit{nonlinearly} on the parameters and are data-dependent. 
These nonlinear symmetries cover a much larger class of continuous symmetries than their linear counterparts, as they apply for almost any activation function.
We  provide preliminary experimental evidence that ensembles using these nonlinear symmetries are more robust to adversarial attacks.

Extended flat minima arise frequently in the loss landscape of NNs; we show that symmetry-induced flat minima can be parametrized using \emph{conserved quantities}. 
Furthermore, we provide a method of deriving explicit conserved quantities (CQ) for different continuous symmetries of NN parameter spaces. 
\edit{
CQ had previously been derived from symmetries for one-parameter groups \citep{kunin2021neural,tanaka2021noether}.  
Using a similar approach we derive the CQ for general continuous symmetries.
This approach \textit{fails} to find CQ for rotational symmetries. 
Nevertheless, we find the conservation law resulting from the symmetry implies a \textit{cancellation of angular momenta} between layers. 
}
To summarize, our contributions are:
\begin{enumerate}
    \item A general framework based on \textbf{equivariance} for finding symmetries in NN loss landscapes. 
    \item A derivation of the \textbf{dimensions of minima} induced by symmetries. 
    \item A new class of \textbf{nonlinear, data-dependent symmetries} of NN parameter spaces. 
    \item An expansion of prior work on \textbf{deriving conserved quantities} (CQ) associated with symmetries, and a discussion of its failure for rotation symmetries. 
    \item A \textbf{cancellation of angular momenta} result for between layers for rotation symmetries. 
    \item A \textbf{parameterization} of symmetry-induced flat minima via  the associated CQ. 
\end{enumerate}

This paper is organized as follows. 
First, we review existing literature on flat minima, continuous symmetries of parameter space, and conserved quantities.
In Section \ref{sec:symmetry}, we define continuous symmetries and flat minima, and show how linear symmetries lead to extended minima.
We illustrate our constructions through examples of linear symmetries of NN parameter spaces. 
In Section \ref{sec:nonlinear}, we  define nonlinear, data-dependent symmetries. 
\edit{
In Section \ref{sec:conserved-Q}, we use infinitesimal symmetries to derive conserved quantities for parameter space symmetries, extending the results in \cite{kunin2021neural} to larger groups and more activation functions.
}
Additionally, we show how CQ can be used to define coordinates along flat minima. 
We close with experiments involving nonlinear symmetries, conserved quantities and a discussion of potential use cases.

\out{\nd{
Major points about structure: For space limitations, keep it short. 
\begin{enumerate}
    \item shortest version of math that is sufficient
    \item Omit bias (absorbed into weights) 
    \item Always use $U,V$ example instead of $W_i$ (less indices, clearer) 
    \item All examples should build upon $F(X) = U\sigma(VX) = U\sigma(H)$, including multi-layer (defer general form to appendix). 
    \item minimal notation: if a symbol is used less than 5 times, don't use it in main text.
    \item intro to symmetry must be $<$ 1 page. 
    Derivation of $Q$, 1 page. 
    Examples 2 pages.
\end{enumerate}
}
}

\out{
This leads to the following puzzles regarding
{\bf Parameter Space Symmetries and Flat Minima in the Loss Landscape}:
\begin{enumerate}
    \item How is the model architecture related to regular structures in the loss landscape? 
    \item Do flat directions in the loss landscape arise from symmetries in the dataset, or does the model architecture also play a role? 
\end{enumerate}
}

\out{
Symmetries may also have another unexpected consequence: conserved quantities. Continuous symmetries leading to conservation laws is a well-known fact in physical dynamics, known as Noether's theorem. 
However, while most such conservation laws appear in frictionless systems, GD corresponds to the strong friction regime of physical dynamics (reviewed below). 
Thus, it is not clear what quantities should remain conserved during GD. 
Yet, as shown in \citet{tanaka2021noether} a version of Noether's theorem can be applied to GD. 
However, they find that the conserved quantity vanishes \nd{check}. \bz{They didn't find the conserved quantity. Instead, they state that the Noether charge is not conserved and derived the motion of the Noether charge.} 
While their results relating the procedure to adaptive learning rate is interesting, the work falls short of discussing any general symmetries or other conserved quantities in GD. 
Additionally, other works 
\citep{saxe2014exact, du2018algorithmic, arora2018convergence, arora2018optimization, tarmoun2021understanding, min2021explicit} have proven that certain DNN have at least one conserved quantity called the ``imbalance''. 
However, the relation between imbalance and symmetries in DNN remains elusive. 
Hence the following natural questions arise about \textbf{Existence and Relation between Parameter Space Symmetries and Conserved Quantities}:
\begin{enumerate}
    \item Are there general continuous symmetries in DNN, implying flat directions in the loss landscape? 
    \item Do such symmetries result in conserved quantities during GD and is imbalance related to them? 
\end{enumerate}

We show here that the answer to all of these questions is yes. 
We show that even nonlinear DNN can have a large group of parameter space symmetries. 
We show that the action of this symmetry group is generally nonlinear and depends on the dataset. 
We also derive a general Noether's theorem for such symmetries, deriving an explicit form for their conserved quantities. 
Our contributions can be summarized as follows: 

\begin{enumerate}
    \item We show that nonlinear DNN admit a large class of symmetries based on their architecture. 
    \item These architecture symmetries define flat directions in the loss landscape.
    \item We show that any continuous architecture symmetry yields a conserved quantity (Noether's theorem), which include layer imbalance as a special case.
    \item We use these symmetries and conserved quantities to construct ensemble and mixture of expert models, affording us with an inexpensive way to improve model performance. 
    \item we find that 
    the distribution of data can break symmetries, suggesting that using validation data to restore broken symmetries may help with generalization.
\end{enumerate}

\nd{think of spectral or other init that sets the value of $Q$ as another way of ensemble building. }

}
\section{Related Work}

\textbf{Continuous symmetry in parameter space.}
Overparametrization in neural networks leads to symmetries in the parameter space \citep{gluch2021noether}.  Continuous symmetry has been identified in fully-connected linear networks \citep{tarmoun2021understanding}, homogeneous neural networks \citep{badrinarayanan2015symmetry,du2018algorithmic}, radial neural networks \citep{ganev2021universal}, and softmax and batchnorm functions \citep{kunin2021neural}. 
We provide a unified framework that generalizes previous findings, and identify nonlinear group actions that have not been studied before. 

\textbf{Conserved quantities.}
The imbalance between layers in linear or homogeneous networks is known to be invariant during gradient flow and related to convergence rate \citep{saxe2014exact, du2018algorithmic, arora2018convergence, arora2018optimization, tarmoun2021understanding, min2021explicit}.
\cite{huh2020curvature} discovered similar conservation laws in natural gradient descents.
\cite{kunin2021neural} develop a more general approach for finding conserved quantities for certain one-parameter symmetry groups.
\cite{tanaka2021noether} relate continuous symmetries to dynamics of conserved quantities using an approach similar to Noether's theorem \citep{noether1971invariant}.
We develop a procedure that determines conserved quantities from infinitesimal symmetries, which is closely related to Noether's theorem.

\textbf{Topology of minimum.}
The global minimum of overparametrized neural networks are connected spaces instead of isolated points. 
We show that parameter space symmetries lead to extended flat minima.
Previously, \cite{cooper2018loss} proved that the global minima is usually a manifold with dimension equal to the number of parameters subtracted by the number of data points.
We derive the dimensionality of the symmetry-induced flat minima and show they are related to the number of infinitesimal symmetry generators and dimension of weight matrices. 
\cite{simsek2021geometry} study permutation symmetry and show that in certain overparametrized networks, the minimum related by permutations are connected.
\cite{entezari2021role} hypothesize that SGD solutions can be permuted to points on the same connected minima.
\cite{ainsworth2022git} develop algorithms that find such permutations.
Additional discussion on mode connectivity, sharpness of minima, and the role of symmetry in optimization can be found in Appendix \ref{ap:related}.

\section{Continuous symmetries in deep learning \label{sec:symmetry} }
In this section, we first summarize our notation for basic neural network constructions (see Appendix \ref{ap:group-actions} for more details). 
Then we consider transformations on the parameter space that leave the loss invariant and demonstrate how they lead to extended flat minima.  

\subsection{The parameter space and loss function}
The parameters of a neural network consist of  weights\footnote{For clarity, we suppress the bias vectors; all results can be extended to include bias; see appendix \ref{ap:group-actions}.} 
$W_i \in \R^{n_i \times m_i}$ for each layer $i$, where $n_i$ and $m_i$ are the layer output and input dimensions, respectively. For feedforward networks, successive output and input dimensions match:  $m_i=n_{i-1}$. We group the widths into a tuple $\mathbf{n} = (n_L, \dots, n_1, n_0)$, and the parameter space becomes
$\Par =  \R^{n_L \times n_{i-1}} \times \cdots  \times  \R^{n_1 \times n_0}$. 
We denote an element therein as a tuple of matrices ${\vtheta}=(W_i \in \R^{n_i \times n_{i-1}})_{i=1}^L$. 
The activation of the $i$-th layer is a piecewise differentiable function $\sigma_i : \R^{n_i} \to \R^{n_i}$, which may or may not be pointwise. For $\vtheta \in \Par$ and input $x \in \R^{n_0}$, the feature vector of the $i$th layer in feedforward network is $Z_{i+1}(x) = W_{i+1} \sigma(Z_i(x))$, where the juxtaposition `$W\sigma(Z)$' denotes an arbitrary linear operation depending on the context; for example, matrix product, convolution, etc. For simplicity, we largely focus on the case of multilayer perceptrons (MLPs). We denote the final output by  $F_{\vtheta} : \R^{n_0} \to \R^{n_L}$,  defined as $F_{\vtheta}(x) = \sigma_L(Z_{L}(x))$. The ``loss function'' $\L$ of our model is defined as:
\begin{align}
\L : \Par \times \mathbf{Data} \to \R, \qquad\L (\boldsymbol{\theta},(x,y)) = \mathrm{Cost}(y, F_{\vtheta}(x)). 
\end{align}
where $\mathbf{Data} = \R^{n_0}\times \R^{n_L}$ is the space of data  and  $\mathrm{Cost} : \R^{n_L} \times \R^{n_L} \to \R$ is a differentiable cost function, such as mean square error or cross-entropy. 
In the case of multiple samples, we have matrices $X \in \R^{n_0 \times k}$  and $Y \in \R^{n_L \times k}$ whose columns are the $k$ samples\footnote{We use capital letters for matrix data and small letters for individual samples.}, and retain the same notation for the feedforward function, namely,
$F_\vtheta : \R^{n_0 \times k} \to \R^{n_L \times k}$. Most of our results concern properties of $\L$ that hold for any training data.  Hence, unless specified otherwise, we take a fixed batch of data $\{(x_i, y_i)\}_{i=1}^k \subseteq \mathbf{Data}$, and consider the loss as a function of the  parameters only.

\begin{example}[Two-layer network with MSE] 
	Consider a network with $
	\bfn = (n,h,m)$, the identity output activation ($\sigma_L(x)=(x)$), and no biases. 
	The parameter space is
	$\Par(\bfn) = \R^{n \times h} \times \R^{h \times m}$ and we denote an element as  $\vtheta = (U,V)$. 	Taking the mean square error cost function, the loss function for data $(X,Y) \in \R^{n\times k} \times \R^{m\times k}$ takes the form $\L(\vtheta,(X,Y)) =  {1\over k} \|Y - U \sigma(VX) \|^2$. 
\end{example}

\subsection{Action of continuous groups and flat minima}\label{subsec:action-infaction}

Let $G$ be a group. An action of  $G$ on the parameter space $\Par$ is a function $\cdot :G \times \Par \to \Par $, written as $g\cdot \vtheta $, that satisfies the unit and multiplication axioms of the group, meaning $\id \cdot \vtheta = \vtheta$ where $\id $ is the identity of $G$, and $g_1\cdot (g_2 \cdot \vtheta) = (g_1g_2)\cdot \vtheta$ for all $g_1,g_2 \in G$ . 
\begin{definition}[Parameter space symmetry]\label{def:sym}
    The action $G \times \Par \to \Par$ is a symmetry of $\L$ if it leaves the loss function invariant, that is:
    \begin{align}
        \L(g \cdot \boldsymbol{\theta}) &= \L(\boldsymbol{\theta}), & 
        \forall \boldsymbol{\theta} \in \Par, \quad g \in G.
        \label{eq:L-G-invariance}
    \end{align}
\end{definition}

We describe examples of parameter space symmetries in the next section. Before doing so, we show how a parameter space symmetry leads to flat minima (see Appendix \ref{ap:flat-minima}): 

\begin{proposition}\label{prop:grad-equivariance}
	Suppose $G \times \Par \to \Par$ is a symmetry of $\mathcal L$. If $\vtheta^*$ is a critical point (resp. local minimum) of $\L$, then so is $g \cdot \vtheta^*$ for any $g \in G$.
\end{proposition}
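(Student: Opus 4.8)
The plan is to exploit the fact that the action of any fixed group element $g$ is a bijection of $\Par$ onto itself, with inverse given by the action of $g^{-1}$; this is immediate from the unit and multiplication axioms, since $g^{-1}\cdot(g\cdot\vtheta) = (g^{-1}g)\cdot\vtheta = \id\cdot\vtheta = \vtheta$ and symmetrically. For the local-minimum statement, only the topological content is needed: I would first note that $\phi_g : \vtheta \mapsto g\cdot\vtheta$ is a homeomorphism (continuity of the action in the $\Par$ argument, together with the continuous inverse $\phi_{g^{-1}}$). Given a neighborhood $N$ of $\vtheta^*$ on which $\L(\vtheta)\ge \L(\vtheta^*)$, the image $\phi_g(N)$ is a neighborhood of $g\cdot\vtheta^*$, and for any $\psi = g\cdot\vtheta \in \phi_g(N)$ the invariance~\eqref{eq:L-G-invariance} gives $\L(\psi) = \L(g\cdot\vtheta) = \L(\vtheta) \ge \L(\vtheta^*) = \L(g\cdot\vtheta^*)$, so $g\cdot\vtheta^*$ is a local minimum. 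Replacing $\ge$ by $=$ handles the ``flat'' version of the claim.

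For the critical-point statement I would differentiate the identity $\L\circ\phi_g = \L$. Assuming $\phi_g$ is differentiable (which holds for all the group actions in this paper, where $g\cdot\vtheta$ is built from matrix products and similar operations), the chain rule gives, at $\vtheta^*$,
\[
0 = \nabla_\vtheta(\L\circ\phi_g)(\vtheta^*) = \bigl(D\phi_g|_{\vtheta^*}\bigr)^{\!\top}\, \nabla\L\bigl(\phi_g(\vtheta^*)\bigr).
\]
Since $\phi_g$ has the differentiable inverse $\phi_{g^{-1}}$, the Jacobian $D\phi_g|_{\vtheta^*}$ is invertible, hence $\nabla\L(g\cdot\vtheta^*) = 0$, i.e.\ $g\cdot\vtheta^*$ is a critical point. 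At points where $\L$ is only piecewise differentiable one replaces the gradient by the relevant one-sided directional derivatives and runs the same argument, since $\phi_g$ maps a neighborhood diffeomorphically and therefore carries the directional structure over.

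The one real subtlety, and the step I would be most careful about, is justifying that $\phi_g$ is a diffeomorphism rather than merely a bijection: one needs continuity/smoothness of $\vtheta \mapsto g\cdot\vtheta$ together with the observation that its inverse $\vtheta\mapsto g^{-1}\cdot\vtheta$ is equally regular. For the linear symmetries discussed here this is transparent, since the action is linear (or rational with an everywhere-defined inverse on the relevant domain) in $\vtheta$; for the nonlinear, data-dependent symmetries of Section~\ref{sec:nonlinear} one should additionally check that the action is defined and smooth on an open set containing the minimum. Everything else follows directly from invariance plus the chain rule.
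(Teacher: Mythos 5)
Your proof is correct and takes essentially the same route as the paper's own argument (Proposition \ref{prop:nabla-equivariance} in Appendix \ref{ap:flat-minima}): differentiate the invariance identity $\L \circ T_g = \L$ with the chain rule, use invertibility of the differential of the action (inverse given by acting with $g^{-1}$) to conclude $\nabla_{g\cdot\vtheta^*}\L = 0$, and deduce the local-minimum statement from invariance of $\L$. The only difference is cosmetic: the paper's appendix proof specializes to linear actions, where $DT_g = T_g$, while you phrase the same chain-rule argument for a general differentiable action and spell out the neighborhood argument for local minima explicitly.
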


The proof of this result relies on using  the differential of the action of $g$ to relate the gradient  of $\L$ at $\theta^*$ with the gradient at $g \cdot \theta^*$. We see that, if $\vtheta^*$ is a local minimum, then so is every element of the set $\{ g \cdot \vtheta^* \ | \ g \in G\}$.
This set is known as the {\it orbit} of $\vtheta^*$ under the action of $G$. The orbits of different parameter values may be of different dimensions. However, in many cases, there is a ``generic'' or most common dimension, which is the orbit dimension of any randomly chosen $\vtheta$.

\subsection{Equivariance of the activation function
\label{sec:sigma-equivar}
\label{sec:Ex-lin-sym}}

In this section, we describe a large class of linear symmetries of $\L$ using an \textit{equivariance} property of the activations between layers.  For accessibility, we focus on the  example of two layers with output $F(x) = U\sigma(Vx)$ for $(U,V) \in \Par = \R^{m \times h} \times \R^{h \times n}$ and $x \in \R^n$. All results  generalize to multiple layers by letting $U=W_i$ and $V=W_{i-1}$ be weights of two successive layers in a deep neural network (see Appendix \ref{ap:multi-layer-linear}).  Let $G\subseteq \GL_{h}(\R)$ be a subgroup of the general linear group, and let $\pi : G \to \GL_{h}(\R)$ a representation (the simplest example is $\pi(g)= g$). 
We consider the following action of the group $G$ on the parameter space $\Par$:
\begin{equation}
g \cdot U =  U \pi(g\inv),  \qquad  g\cdot V  =  gV
\label{eq:g-pi-act-UV-0}
\end{equation}
This action becomes a symmetry of $\L$ if and only if the following identity holds:
	\begin{align}
	    \sigma(g z)  = \pi(g) \sigma(z) \qquad \qquad \forall g \in G, \quad \forall z \in \R^h
	    \label{eq:sig-equivar}
	\end{align}
We now turn our attention to examples. To ease notation, we write $\GL_h$ instead of $\GL_h(\R)$.

\begin{example}[Linear networks]
    A simple example of \eqref{eq:sig-equivar} is that of linear networks, where $\sigma$ is the identity function: $\sigma(x) = x$. One can take $\pi(g)= g$ and $G=\GL_{h}$.
\end{example}

\begin{example}[Homogeneous activations]
  \edit{Suppose the activation $\sigma : \R^h \to \R^h$ is  {\it homogeneous}, meaning that (1) $\sigma$ is applied pointwise in the standard basis and (2) there exists $\alpha>0$ such that  $\sigma(c z) = c^\alpha \sigma(z)$ for all $c\in \R_{>0}$ and $z \in \R^h$.}   Such an activation is equivariant under the {\it positive scaling group} $G\subset\GL_h$ consisting of diagonal matrices with positive diagonal entries.   Explicitly, the group $G$ consists of diagonal matrices $g = \mathrm{diag}(\mathbf{c})$ with $\mathbf{c}= (c_1, \dots, c_h)\in \R_{>0}^h$.    For $z= (z_1, \dots, z_h) \in \R^h$ and $g \in G$, we have $\sigma(gz) = \sum_j \sigma(c_j z_j) = \sum_j c_j^\alpha \sigma(z_j) = g^\alpha \sigma(z)$.  Hence, the equivariance equation is satisfied with $\pi(g)=g^\alpha$.  
\end{example}

\begin{example}[LeakyReLU]
    This is a special case of \edit{homogeneous} activation,     defined as  $\sigma(z) = \max(z,0) + s \min(z,0)$, with $s\in \R_{\geq 0}$.  We have $\alpha = 1$, and $\pi(g)=g$. 
\end{example}

\begin{example}[Radial rescaling activations]
    A less trivial example of continuous symmetries is the case of a radial rescaling activation \citep{ganev2021universal} where for $z\in \R^h$,  we have $\sigma(z) = f(\|z\|)z $ for some function $f : \R \to \R$. 
    Radial rescaling activations are equivariant under rotations of the input: for any orthogonal transformation $g\in O(h)$ (that is, $g^Tg=I$) we have $\sigma(gz) = g\sigma(z)$ for all $z \in \R^h$. Indeed, 
    $\sigma(gz) = f(\|gz\|) (gz) = g ( f(\|z\|) z) = g \sigma(z),$ where we use the fact that $\|gz\| = z^T g^T gz = z^T z = \|z\|$ for $g \in O(h)$. 
    Hence, \eqref{eq:sig-equivar} is satisfied with $\pi(g)=g$. 
\end{example}

We arrive at our first novel result, whose proof appears in Appendix \ref{ap:flat-minima}.

\begin{theorem} 
The dimension of a generic orbit in $\Par$ under the appropriate symmetry group is given as follows. The cases are divided based on whether $h \leq \max(n,m)$ or not. 

\begin{center}
		\begin{tabular}{ c|c|c|c} 

   \multicolumn{2}{c|}{\text{\rm }} &    \multicolumn{2}{|c}{\text{\rm Orbit Dimension}}  \\
			\text{\rm Activation}  & \text{\rm Symmetry Group}	 & 	\text{\rm $h \leq \max(n,m)$} & \text{\rm $h \geq \max(n,m)$} \\ 
			\hline
					\text{\rm Identity}  &   $\GL_h(\R)$ &  
				$h^2$ & $h(n+m) - nm$ \\ 
					\text{\rm Homogeneous} &   \text{\rm Positive rescaling} & $h$ & $\max(n,m)$ \\ 
					\text{\rm Radial rescaling}	 &    	$O(h)$ &  $
					{h \choose 2} $ & ${h \choose 2} - {h - \max(m,n) \choose 2}$ 
		\end{tabular}
	\end{center}
\end{theorem}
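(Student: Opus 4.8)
The strategy is to compute the generic orbit dimension via the orbit-stabilizer principle: for a smooth action of a Lie group $G$ on $\Par$, the dimension of the orbit through a generic point $\vtheta=(U,V)$ equals $\dim G - \dim \Stab(\vtheta)$, where $\Stab(\vtheta) = \{g\in G : U\pi(g^{-1}) = U,\ gV = V\}$. So I would first record $\dim G$ in each of the three cases: $\dim\GL_h(\R) = h^2$; the positive rescaling group has dimension $h$; and $\dim O(h) = \binom{h}{2}$. The remaining work is to compute the generic stabilizer dimension, i.e., the dimension of the set of $g\in G$ fixing both $U$ (under $U\mapsto U\pi(g^{-1})$) and $V$ (under $V\mapsto gV$) for random $U\in\R^{m\times h}$, $V\in\R^{h\times n}$.

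For the \emph{identity} case $\pi(g)=g$: the condition $gV=V$ says $g$ acts as identity on the column span of $V$, a generic $\min(h,n)$-dimensional subspace; the condition $Ug^{-1}=U$ (equivalently $gU^T = U^T$ after transposing appropriately — careful: it is $U\pi(g^{-1})=U$, i.e. $g$ fixes the row span of $U$ from the right, so $g^T$ fixes the column span of $U^T$, a generic $\min(h,m)$-dimensional subspace). If $h\le\max(n,m)$, say $h\le n$, then $V$ has full column rank $h$ so $gV=V$ forces $g=I$, stabilizer is trivial, orbit dimension $=h^2$. If $h\ge\max(n,m)$: $g$ must fix pointwise an $m$-dimensional subspace (from $U$) and an $n$-dimensional subspace (from $V$), and these two subspaces are generically in general position, together spanning a $\min(h, n+m)=n+m$ dimensional subspace (using $h\ge n+m$ is the relevant subcase, or one handles $n+m>h$ separately — here the table's formula $h(n+m)-nm$ suggests the accounting is $h^2 - (h-n)(h-m)$, matching "$g$ is free on the $(h-n)\times(h-m)$ block complementary to both fixed subspaces", so $\dim\Stab = (h-n)(h-m)$ and orbit dimension $= h^2-(h-n)(h-m) = h(n+m)-nm$). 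I would make the "general position" claim precise by exhibiting one explicit generic pair $(U,V)$ and invoking that the stabilizer dimension is upper semicontinuous, so the generic value is the minimum.

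For the \emph{homogeneous / positive rescaling} case, $G\cong\R_{>0}^h$ is abelian of dimension $h$ with $\pi(g)=g^\alpha$; here the action on $\log$-coordinates is linear, and the stabilizer is the kernel of the linear map $\R^h\to\R^{m\times h}\times\R^{h\times n}$ sending $\log\mathbf c$ to the derivative of the action. Concretely $g=\diag(\mathbf c)$ fixes $V$ iff $c_j = 1$ whenever row $j$ of $V$ is nonzero, and fixes $U$ iff $c_j^\alpha=1$ (so $c_j=1$) whenever column $j$ of $U$ is nonzero. Generically every row of $V$ and every column of $U$ is nonzero when $h\le\max(n,m)$ — wait, that's automatic regardless; the subtlety is different: the constraint "$g\cdot V = gV = V$" for diagonal $g$ with $c_j$ scaling the $j$-th row is $c_j v_{j\ell} = v_{j\ell}$, forcing $c_j=1$ as long as some $v_{j\ell}\neq 0$, which holds generically for every $j$. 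So the stabilizer is trivial and the orbit dimension is $h$ — but the table says it drops to $\max(n,m)$ when $h\ge\max(n,m)$. The resolution: the relevant group for which the action is a \emph{faithful symmetry} is really the image in $\GL_h$, and when $h$ is large the effective action on $(U,V)$ factors through a smaller quotient because scalings that multiply $V$'s rows and compensate on $U$'s columns can be undone by a GL-change that the loss doesn't see — more carefully, one should compute $\dim\{(\,\delta U,\delta V)\}$ in the tangent space to the orbit directly as the rank of the infinitesimal action map, and that rank is $\min(h,\dim(\text{row space of }V)+\dots)$; I would do this by writing the infinitesimal generator $\xi = \diag(\mathbf a)$ acting by $\delta U = -\alpha U\diag(\mathbf a)$, $\delta V = \diag(\mathbf a)V$, and computing the rank of $\mathbf a\mapsto(\delta U,\delta V)$, which is $h$ minus the number of indices $j$ with both column $j$ of $U$ zero and row $j$ of $V$ zero — generically none — giving $h$; the reduction to $\max(n,m)$ must instead come from the fact that for $h>\max(n,m)$ some directions $\mathbf a$ produce $(\delta U,\delta V)$ that are \emph{tangent to the GL-orbit intersected with... } — I would reconcile this against Appendix~\ref{ap:flat-minima}'s precise definition of "the appropriate symmetry group," since the table evidently uses, for large $h$, the orbit inside the fibered structure where $F(x)=U\sigma(Vx)$ has redundancies; the honest move is to compute $\dim\{(U',V') : U'\sigma(V'x)=U\sigma(Vx)\ \forall x\}$ minus lower-order terms.

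Likewise for the \emph{radial rescaling / $O(h)$} case: $g\in O(h)$ fixes $V$ (via $gV=V$) iff $g$ is identity on $\mathrm{col}(V)$, a generic subspace of dimension $\min(h,n)$, so $g$ lives in $O(h-\min(h,n)) = O((h-n)_+)$ acting on the orthogonal complement; additionally it must fix $\mathrm{col}(U^T)$ under the orthogonal action, generically reducing further. When $h\le\max(n,m)$, say $h\le n$, $V$ has full column rank so $\Stab$ is trivial, orbit dimension $\binom{h}{2}$. When $h\ge\max(n,m)$: $g$ fixes an $n$-dim subspace and (generically independently, within the complement) an $m$-dim subspace, so lies in $O(h - \max(m,n) \text{ or } h-m-n\dots)$; the table's answer $\binom{h}{2} - \binom{h-\max(m,n)}{2}$ says $\dim\Stab = \binom{h-\max(m,n)}{2}$, i.e. only the \emph{larger} of $U,V$ actually constrains $g$ once $h$ is big, because in the radial case the two constraints can be arranged so one contains the other after using rotational freedom. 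I would prove the stabilizer dimension equals $\binom{h-\max(m,n)}{2}$ by an explicit generic choice (e.g. take $V$ to have its columns be the first $n$ standard basis vectors and $U^T$ the first $m$, then $\Stab = O(h-\max(m,n))$, whose dimension is $\binom{h-\max(m,n)}{2}$) together with semicontinuity to get the generic case, then subtract from $\dim O(h)=\binom h2$.

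\textbf{Main obstacle.} The routine part is orbit–stabilizer plus generic full-rank of $U,V$; the genuinely delicate point is the large-$h$ regime, where the "naive" stabilizer computation (intersect two generic fixed subspaces) does not immediately reproduce the table, and one must correctly identify which group the table calls "the appropriate symmetry group" — in particular for the homogeneous case where the $h\to\max(n,m)$ collapse strongly suggests the orbit is being measured modulo the functional redundancy $U\sigma(Vx)$ rather than as a bare group orbit. I would resolve this by consulting the precise statement in Appendix~\ref{ap:flat-minima} / Appendix~\ref{ap:multi-layer-linear} and, if the orbit really is the bare $G$-orbit, locating the extra stabilizer directions that the general-position heuristic misses (likely: scalings supported on indices $j$ where a whole row of $V$ or column of $U$, in a suitable basis adapted to $\ker$/$\mathrm{coker}$, vanishes — which becomes generic precisely when $h>\max(n,m)$). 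The cleanest writeup picks one explicit maximally-generic pair $(U,V)$ in each case, computes the stabilizer there by hand, and cites upper semicontinuity of $g\mapsto\dim\Stab(g\cdot\vtheta)$ to conclude this is the generic (minimal) value.
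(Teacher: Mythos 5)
Your first row is handled exactly as in the paper: Proposition~\ref{prop:GL-orbit-dim} also argues via orbit--stabilizer, passes to the explicit full-rank representative $U_0 = \begin{bmatrix} \id_m & 0 \end{bmatrix}$, $V_0 = \begin{bmatrix} \id_n \\ 0 \end{bmatrix}$, and computes the stabilizer in block form to get $\dim\mathrm{Stab}_{\GL_h}(U_0,V_0) = (h-n)(h-m)$, hence orbit dimension $h^2-(h-n)(h-m)=h(n+m)-nm$; your plan for that case is the same and is fine.

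The genuine gap is that you never actually derive the second and third rows, and the repair you sketch cannot deliver them. Your own ``truly generic'' computations are carried out correctly: for the positive rescaling group, a point with every row of $V$ and every column of $U$ nonzero has trivial stabilizer, which yields orbit dimension $h$ rather than $\max(n,m)$; for $O(h)$, the subspaces $\mathrm{col}(V)$ and $\mathrm{col}(U^T)$ of a Lebesgue-random pair are in general position, so the pointwise-fixing stabilizer is $O\!\left((h-n-m)_+\right)$ rather than $O(h-\max(n,m))$. Your fallback --- compute the stabilizer at the nested coordinate configuration and invoke upper semicontinuity --- goes in the wrong direction: semicontinuity only shows the generic stabilizer dimension is \emph{at most} the value at the special point, i.e.\ the generic orbit dimension is \emph{at least} the tabulated value, so it cannot establish the table's entries as the generic dimension; and your heuristic that ``one constraint can be arranged to contain the other after using rotational freedom'' is not a valid step, since moving the point by a non-orthogonal element of $\GL_h$ changes the $O(h)$- (or $T_+(h)$-) stabilizer. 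What the paper actually does (Corollary following Proposition~\ref{prop:GL-orbit-dim}) is compute $\mathrm{Stab}_{\GL_h}(U_0,V_0)\cap T_+(h)$ and $\mathrm{Stab}_{\GL_h}(U_0,V_0)\cap O(h)$ at that special nested representative, obtaining $\max(0,h-\max(n,m))$ and $\binom{h-\max(n,m)}{2}$ respectively; note that the $\GL_h$-conjugation used there to reduce an arbitrary full-rank $(U,V)$ to $(U_0,V_0)$ conjugates stabilizers by elements $g_1,g_2$ that do not lie in (or normalize) the subgroups, so it does not transfer the subgroup-stabilizer dimension from $(U_0,V_0)$ to a random $(U,V)$ --- which is precisely the tension your rescaling computation uncovered. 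To close your write-up you must either reproduce the paper's intersection computation at $(U_0,V_0)$, making explicit that this is the configuration whose orbit is being measured, or carry your general-position stabilizer computation through and state clearly where it departs from the tabulated values; stopping at ``I would reconcile this against the appendix'' leaves two of the three rows unproven.
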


As an aside, we note that a familiar example where \eqref{eq:sig-equivar} is satisfied involves the permutation of neurons. More precisely, suppose $\sigma$ is pointwise and let $G$ be the finite group of $h \times h$ permutation matrices. Then \eqref{eq:sig-equivar} holds with $\pi(g) = g$. However, the permutation group is finite ($0$-dimensional), and so does not imply the presence of flat minima.  

\subsection{Infinitesimal symmetries}

Deriving conserved quantities from symmetries requires the   infinitesimal versions of parameter space symmetries.  Recall that any smooth action of a matrix Lie group $G \subseteq \GL_h$ induces an action of the infinitesimal generators of the group, i.e., elements of its Lie algebra. Concretely,
let $\g = \mathrm{Lie}(G) = T_I G$ be the Lie algebra, which can be identified with a certain subspace of matrices in $\mathfrak{gl}_h = \R^{h \times h}$. For every $M \in \g$, we have an exponential map $\exp_M : \R \to G$ defined as $\exp_M(t) = \sum_{k=0}^\infty \frac{(tM)^k}{k!}$. 
 If $\rho:G \to \GL_{h}$ is a (linear) representation, then the {\it infinitesimal action} is given by  $d\rho:\g \to \gl_{h}$  by $d\rho(M)=\frac{d}{dt} \big|_0 \rho(\exp_M(t))$. In the case of the action appearing in \eqref{eq:g-pi-act-UV-0}, the corresponding infinitesimal action of the Lie algebra $\g$ induced by \eqref{eq:g-pi-act-UV-0} is given by:
\begin{align}
    M \cdot U = - U d\pi(M), \qquad M \cdot V = MV
    \label{eq:M-UV-inf-act}
\end{align}
More generally, suppose $G$ acts linearly on parameter space (see Appendix \ref{ap:group-actions} for non-linear versions). Set $d$ to be the dimension of the parameter space\footnote{In terms of the widths, we have $d = \sum_{i=1}^L n_i n_{i-1}$.}, and make the identification $\Par \simeq \R^d$ by flattening matrices into column vectors. The general linear group $\GL(\Par) \simeq \GL_d(\R)$ consists of all invertible linear transformations of $\Par$. Suppose $G$ is a subgroup of $\GL(\Par)$, so its Lie algebra $\g$ is a Lie subalgebra of $\gl_d = \R^{d \times d}$. For $M \in \g$ and $\vtheta \in \Par$, the {\it infinitesimal action} is given simply by matrix multiplication: $M \cdot \vtheta$.

In the case of a parameter space symmetry, the invariance of $\L$ translates into the following orthogonality condition, where the inner product $\bk{,}:\Par\times \Par \to \R $ is calculated by contracting all indices, e.g. $\bk{A, B} = \sum_{ijk\dots}A_{ijk\dots}B_{ijk\dots} $.
\begin{proposition} 
\label{prop:inf-linear}
    Let $G$ be a matrix Lie group and a symmetry of $\L$.
    Then the gradient vector field is point-wise orthogonal to the action of any $M \in \g$:
    \begin{align}
\bk{\grad_\vtheta \L \ , \ M\cdot \vtheta }=0, 
\qquad \qquad
\forall \vtheta \in \Par    
\label{eq:M-inf-act}
\end{align}
\end{proposition}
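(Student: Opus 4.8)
The plan is to differentiate the invariance identity $\L(g\cdot\vtheta) = \L(\vtheta)$ along a one-parameter subgroup and read off the orthogonality condition. Fix $\vtheta \in \Par$ and $M \in \g$, and consider the smooth curve $t \mapsto \gamma(t) := \exp_M(t)\cdot\vtheta$ in $\Par$, which satisfies $\gamma(0) = \vtheta$ and $\dot\gamma(0) = M\cdot\vtheta$ (this last equality is exactly the definition of the infinitesimal action, cf.\ \eqref{eq:M-UV-inf-act}; for a linear action on $\Par \simeq \R^d$ it is literally $\exp_M(t)\vtheta$ differentiated at $t=0$, giving $M\vtheta$). Because the action is a symmetry of $\L$, Definition~\ref{def:sym} gives $\L(\gamma(t)) = \L(\exp_M(t)\cdot\vtheta) = \L(\vtheta)$ for all $t \in \R$, so the function $t\mapsto \L(\gamma(t))$ is constant.

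Next I would apply the chain rule. Since $\L$ is differentiable (it is built from a differentiable cost and piecewise-differentiable activations; assume we are at a point of differentiability, or restrict to the smooth locus), we have
\begin{align}
0 = \frac{d}{dt}\Big|_{0} \L(\gamma(t)) = \bk{\grad_\vtheta \L(\vtheta)\,,\, \dot\gamma(0)} = \bk{\grad_\vtheta \L(\vtheta)\,,\, M\cdot\vtheta}.
\end{align}
Since $\vtheta$ and $M$ were arbitrary, this is precisely \eqref{eq:M-inf-act}. A brief remark should note that $\grad_\vtheta\L$ here is taken with respect to the inner product $\bk{\,,}$ defined by contracting all indices, which is why the pairing in the conclusion is that same inner product — so there is no metric mismatch between the gradient and the directional derivative.

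The only real subtlety, and the step I would be most careful about, is the regularity of $\L$: activations like LeakyReLU are only piecewise differentiable, so $\L$ may fail to be differentiable on a measure-zero set. Two clean ways to handle this: either state the proposition for $\vtheta$ in the open dense set where $\L$ is differentiable (which suffices for the downstream conserved-quantity arguments, since gradient flow is defined there), or observe that $\L\circ\gamma$ is a constant function of $t$ for \emph{every} $\vtheta$, hence its derivative vanishes wherever it exists, and the directional derivative of $\L$ in the direction $M\cdot\vtheta$ agrees with $\frac{d}{dt}|_0\L(\gamma(t))$ whenever $\L$ is differentiable at $\vtheta$ (using that $\gamma$ is a smooth curve through $\vtheta$). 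Beyond that caveat, the proof is a one-line consequence of the chain rule, so I would keep the exposition short and instead spend a sentence connecting it back to Proposition~\ref{prop:grad-equivariance}: there the differential of the group action moved gradients between orbit points, whereas here the infinitesimal generator shows the gradient is orthogonal to the orbit directions themselves, i.e.\ the orbit of $\vtheta$ lies in a level set of $\L$ and $\grad_\vtheta\L$ is normal to it.
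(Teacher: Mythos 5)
Your proof is correct and is essentially the paper's own argument: differentiate the invariance $\L(\exp_M(t)\cdot\vtheta)=\L(\vtheta)$ along the one-parameter subgroup at $t=0$ and apply the chain rule, identifying $\frac{d}{dt}\big|_0\exp_M(t)\cdot\vtheta$ with $M\cdot\vtheta$. Your added remarks on piecewise differentiability and the orbit-level-set interpretation are reasonable elaborations but do not change the route.
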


\section{Nonlinear data-dependent symmetries}\label{sec:nonlinear}

For common activation functions, the equivariance	 $\sigma(gz) = \pi(g)\sigma(z) $ of  \eqref{eq:sig-equivar} holds only for $g$ belonging to a relatively small subgroup of $\GL_h$. For ReLU, $g$ must be in the positive scaling group, while for the usual sigmoid activation, the equation only holds for trivial $g=\id $. However, under certain conditions, it is possible to define a nonlinear action of the \textit{full} $\GL_h$ which applies to many different activations. The subtlety of such an action is that it is data-dependent, which means that, for any $g \in \GL_h$, the transformation of the parameter space depends on the input data\footnote{That is, rather than being a map $\GL_h \times \Par \to \Par$ satisfying the group action axioms, a data-dependent action is a map $\GL_h \times (\Par \times \R^n) \to \Par \times \R^n$ satisfying the same axioms.} $x$.

\paragraph{The nonlinear action.}

For any  nonzero vector $z \in \R^h$, let $(r, \alpha_1, \dots, \alpha_{h-1})$ be the spherical coordinates\footnote{Hence, $r = |z|$ is the norm, and the $i$-th coordinate of $z$ is $z_i = r \cos(\alpha_i) \prod_{k=1}^{i-1} \sin(\alpha_k)$, where $\alpha_h = 0$.} of $z$, and define  the following $h$ by $h$ matrix:
	$$ \left(R_z\right)_{ij} = \begin{cases}
 	 z_i \cos(\alpha_{j-1})  \left(\prod_{k=1}^{j-1} \sin(\alpha_k)\right)\inv
 \qquad &  \text{\rm if $j \leq i$ and  $\prod_{k=1}^{i-1} \sin(\alpha_k) \neq 0$} \\
	-  r \sin(\alpha_i) \qquad & \text{\rm if $j = i+1$} \\
	0\qquad & \text{\rm otherwise}
	\end{cases}$$
	where $\alpha_0 = 0$ by convention. We observe that $R_z$ is the product of a rotation matrix and rescaling by $|z|$. Moreover, since  $z \neq 0$, the first column of $R_z$ is the unit vector $z/|z|$ and $R_z$  has inverse given by $R_z\inv = \frac{1}{|z|^2} R_z^T$. Using these facts, one arrives at the following result, stated in the case of a  two-layer neural network with notation from Section \ref{sec:sigma-equivar}, and proven in Appendix \ref{ap:nonlinear}:
	
	\begin{theorem} 
 \label{thm:nonlinear}
 Suppose $\sigma(z)$ is nonzero for any  $z \in \R^h$.  Then there is an action $\GL_h \times (\Par \times \R^n) \to \Par \times \R^n$ given by
 \begin{equation}
 \label{eq:non-linear-action}
 g \cdot (U,V,x) = (U R_{\sigma(Vx)} R_{\sigma(gVx)}\inv  \ , \ gV \ , \ x ).
 \end{equation}
The evaluation of the feedforward function at $x$ is unchanged:  $F_{(U,V)}(x) = F_{(U R_{\sigma(Vx)} R_{\sigma(gVx)}\inv , gV)}(x)$.
	\end{theorem}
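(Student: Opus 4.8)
The plan is to verify directly that the formula in \eqref{eq:non-linear-action} defines a group action and that it preserves the feedforward function, exploiting the two structural properties of $R_z$ highlighted just before the theorem: namely that the first column of $R_z$ is $z/|z|$, and that $R_z^{-1} = \frac{1}{|z|^2}R_z^T$ (so $R_z$ is $|z|$ times an orthogonal matrix). The first thing I would pin down is the key identity that makes everything work: for any nonzero $w \in \R^h$, we have $R_w e_1 = w$, where $e_1$ is the first standard basis vector. This follows because the first column of $R_w$ is $w/|w|$ times... wait, let me restate: the first column of $R_w$ is $(R_w)_{i1} = z_i \cos(\alpha_0)(\prod_{k=1}^{0}\dots)^{-1} = z_i$, so in fact $R_w e_1 = w$ exactly. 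This is the crucial fact.

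Granting $R_w e_1 = w$, the invariance of the feedforward function is almost immediate: the layer-2 preactivation under the transformed parameters, evaluated at $x$, is
\begin{align}
\left(U R_{\sigma(Vx)} R_{\sigma(gVx)}^{-1}\right)\sigma\!\left((gV)x\right) = U R_{\sigma(Vx)} R_{\sigma(gVx)}^{-1} \sigma(gVx) = U R_{\sigma(Vx)}\, e_1 = U \sigma(Vx),
\end{align}
where the middle step uses $R_w^{-1} w = e_1$ (the inverse of $R_w e_1 = w$) with $w = \sigma(gVx)$, and the last step uses $R_w e_1 = w$ with $w = \sigma(Vx)$. Applying $\sigma_L$ to both sides (here $\sigma_L$ is the output activation, identity in the running example but the argument is the same) gives $F_{(UR_{\sigma(Vx)}R_{\sigma(gVx)}^{-1}, gV)}(x) = F_{(U,V)}(x)$. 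Note this also shows the hypothesis $\sigma(z)\neq 0$ is exactly what is needed so that $R_{\sigma(Vx)}$ and $R_{\sigma(gVx)}$ are well-defined invertible matrices.

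Next I would check the group action axioms for the map $g\cdot(U,V,x) = (UR_{\sigma(Vx)}R_{\sigma(gVx)}^{-1}, gV, x)$ on $\Par\times\R^n$. The identity axiom is clear: $\id\cdot(U,V,x) = (UR_{\sigma(Vx)}R_{\sigma(Vx)}^{-1}, V, x) = (U,V,x)$. For the composition axiom, I would compute $g_1\cdot(g_2\cdot(U,V,x))$. Writing $(U',V',x') = g_2\cdot(U,V,x) = (UR_{\sigma(Vx)}R_{\sigma(g_2 Vx)}^{-1}, g_2 V, x)$, applying $g_1$ gives second component $g_1 g_2 V$ (good) and first component $U' R_{\sigma(V'x')}R_{\sigma(g_1 V'x')}^{-1} = UR_{\sigma(Vx)}R_{\sigma(g_2Vx)}^{-1} R_{\sigma(g_2Vx)} R_{\sigma(g_1 g_2 Vx)}^{-1} = UR_{\sigma(Vx)}R_{\sigma(g_1g_2Vx)}^{-1}$, which is exactly $(g_1g_2)\cdot(U,V,x)$. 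The telescoping of the $R$-factors is the whole point.

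The main obstacle, and the only place requiring genuine care, is establishing the identity $R_w e_1 = w$ and the inverse formula $R_w^{-1} = \frac{1}{|w|^2}R_w^T$ rigorously from the piecewise definition of $R_z$ — in particular handling the degenerate cases where some $\prod_{k=1}^{i-1}\sin(\alpha_k) = 0$, i.e. where $z$ lies on a coordinate subspace and the spherical coordinates are not unique. I would argue that $R_z$, as constructed, is always $|z|$ times a genuine orthogonal matrix (its columns are the standard orthonormal frame adapted to the direction $z/|z|$: the first column is $z/|z|$ and the remaining columns complete it to an orthonormal basis), so $R_z^T R_z = |z|^2 I$ holds by orthogonality regardless of coordinate degeneracies, giving the inverse formula; and $R_z e_1 = z$ holds by inspection of the first column. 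Once these two facts are in hand, the proof is the short telescoping computation above, and I would then remark that the multilayer generalization follows by applying the two-layer argument to a consecutive pair $(W_i, W_{i-1})$ with all other weights fixed, as deferred to Appendix \ref{ap:nonlinear}.
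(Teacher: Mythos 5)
Your proof is correct and follows essentially the same route as the paper's (Proposition \ref{prop:rotations} plus Theorem \ref{thm:non-linear-general} in Appendix \ref{ap:nonlinear}): establish $R_z e_1 = z$ and $R_z^{-1} = \frac{1}{|z|^2}R_z^T$, then verify invariance via the $e_1$ trick and the group axioms via the telescoping of the $R$-factors. Your observation that the first column of $R_z$ is $z$ itself (not $z/|z|$) agrees with the appendix, and your orthogonality claim for $R_z/|z|$ is exactly what the paper proves in Lemma \ref{lem:beta}.
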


We emphasize that a necessary and sufficient condition for the particular action of Theorem \ref{thm:nonlinear} to be well-defined is that $\sigma(z)$ be nonzero for any $z \in \R^h$; this is the case for usual sigmoid. Moreover, in Appendix \ref{apsubsec:non-linear-two-layer}, we provide a generalization to the case where $\sigma(z)$ is only required to be nonzero for any {\it nonzero} $z \in \R^h$, a condition satisfied by hyperbolic tangent, leaky ReLU, and many other activations. The cost of such a generalization is a restriction to a `non-degenerate locus' of $\Par \times \R^n$ where $Vx \neq 0$.  Theorem \ref{thm:nonlinear} also generalizes to mutli-layer networks, as explained in Appendix \ref{apsubsec:non-linear-multi-layer}. We have the following explicit algorithm to compute the action of Theorem \ref{thm:nonlinear}:
\begin{enumerate}
	\setcounter{enumi}{-1}
	\item Input: weight matrices $(U,V)$, input vector $x \in \R^n$, matrix $g \in \GL_h$. 
	\item Determine the spherical coordinates of $\sigma(Vx)$ and $\sigma(gVx)$, and construct the matrices $R_{\sigma(Vx)}$ and  $R_{\sigma(gVx)}$. 
	\item Compute the inverse $R_{\sigma(gVx)}\inv = \frac{1}{|\sigma(gVx)|^2} R_{\sigma(gVx)}^T$. 
	\item  Set
	$U^\prime = U R_{\sigma(Vx)} R_{\sigma(gVx)}\inv  $ and $ V^\prime = gV .$
	\item Output: the transformed weights $(U^\prime, V^\prime)$. The data $x \in \R^n$ remains unchanged. 
\end{enumerate}

 \paragraph{Lipschitz bounds.}
		
Unlike the exact symmetries of Section \ref{sec:symmetry}, a data-dependent action may alter the loss in the \textit{function space}. This is evident from  \eqref{eq:non-linear-action}: while the transformed and original feedforward functions have the same value at $x$, they will differ at other points. That is, if $\tilde{x} \in \R^h$ is an input value different from $x$, then  $F_{(U,V)}(\tilde{x}) \neq F_{(U R_{\sigma(Vx)} R_{\sigma(gVx)}\inv , gV)}(\tilde{x})$ in general.

However, the transformed feedforward function will differ from the original one in a controlled way. More precisely, when $\sigma$ is Lipschitz continuous, we show that there is a bound on how much the Lipschitz bound of the feedforward changes after the nonlinear action. 
The relevance of such a bound originates in the fact that we expect the distance between data  points to encode important information about shared features. To be more specific, fix weight matrices $(U,V)$, which provide the feedforward function $F(\tilde{x}) = U \sigma(V\tilde{x})$. For any input vector $x \in \R^n$ and  matrix $g \in \GL_h$, the transformed weight matrices $(UR_{\sigma(Vx)} R_{\sigma(gVx)}\inv, gV)$ provide a new feedforward function given by:
		\begin{equation}
		F^{(g,x)}_{(U,V)} : \R^n \to \R^m \qquad 	F^{(g,x)}_{(U,V)} (\tilde{x}) = U R_{\sigma(Vx)} R_{\sigma(gVx)}\inv \sigma(gV\tilde{x})
		\end{equation}

			\begin{proposition}[Lipschitz bounds from equivariance]
					\label{prop:Lipschitz}
			Let $\sigma$ be Lipschitz continuous with Lipschitz constant $\eta$. Then $F^{(g,x)}_{(U,V)}$ is Lipschitz continuous with bound $  \eta \| U \| \|V\| \frac{|\sigma(Vx)| \| g\| }{|\sigma(gVx)|}.$
		\end{proposition}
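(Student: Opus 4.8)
The plan is to compute the Lipschitz constant of $F^{(g,x)}_{(U,V)}$ directly from its definition $F^{(g,x)}_{(U,V)}(\tilde x) = U R_{\sigma(Vx)} R_{\sigma(gVx)}^{-1} \sigma(gV\tilde x)$ by bounding the operator norm of each factor in the composition. For any two inputs $\tilde x_1, \tilde x_2 \in \R^n$ we have
\begin{equation}
\left| F^{(g,x)}_{(U,V)}(\tilde x_1) - F^{(g,x)}_{(U,V)}(\tilde x_2) \right| \leq \|U\| \, \|R_{\sigma(Vx)}\| \, \|R_{\sigma(gVx)}^{-1}\| \, |\sigma(gV\tilde x_1) - \sigma(gV\tilde x_2)|,
\end{equation}
and then $|\sigma(gV\tilde x_1) - \sigma(gV\tilde x_2)| \leq \eta\, |gV\tilde x_1 - gV\tilde x_2| \leq \eta\, \|g\|\, \|V\|\, |\tilde x_1 - \tilde x_2|$ by Lipschitz continuity of $\sigma$ and submultiplicativity of the operator norm. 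So the whole argument reduces to evaluating $\|R_z\|$ and $\|R_z^{-1}\|$ for $z = \sigma(Vx)$ and $z = \sigma(gVx)$ respectively.

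The key observation, already noted in the text just before Theorem~\ref{thm:nonlinear}, is that $R_z$ is the product of a rotation matrix and a rescaling by $|z|$; equivalently $\tfrac{1}{|z|} R_z \in O(h)$. Hence $\|R_z\| = |z|$ (an orthogonal matrix has operator norm $1$), and similarly, using the given formula $R_z^{-1} = \tfrac{1}{|z|^2} R_z^T$, we get $\|R_z^{-1}\| = \tfrac{1}{|z|^2}\|R_z^T\| = \tfrac{1}{|z|^2}\|R_z\| = \tfrac{1}{|z|}$. I would either take the ``product of rotation and rescaling'' claim as given from the surrounding discussion, or verify it quickly by checking $R_z^T R_z = |z|^2 I$ directly from the explicit entries (this is the one place a short computation is needed, but it is exactly the content of the fact $R_z^{-1} = |z|^{-2} R_z^T$ already asserted). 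Substituting $\|R_{\sigma(Vx)}\| = |\sigma(Vx)|$ and $\|R_{\sigma(gVx)}^{-1}\| = 1/|\sigma(gVx)|$ into the chain of inequalities yields the Lipschitz bound
\begin{equation}
\eta \, \|U\| \, \|V\| \, \frac{|\sigma(Vx)| \, \|g\|}{|\sigma(gVx)|},
\end{equation}
as claimed.

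One point to be careful about: the formula for $R_z$ (and its inverse) requires $z \neq 0$, so we implicitly need $\sigma(Vx) \neq 0$ and $\sigma(gVx) \neq 0$; under the standing hypothesis of Theorem~\ref{thm:nonlinear} that $\sigma$ is nowhere zero this holds automatically, so $F^{(g,x)}_{(U,V)}$ is well-defined on all of $\R^n$ and the bound is global. (In the degenerate-locus generalization of the appendix one would instead restrict to $V\tilde x \neq 0$, but for this proposition the nowhere-zero assumption is in force.) The main — and really the only — obstacle is pinning down $\|R_z\|$ and $\|R_z^{-1}\|$; once the ``rotation times scaling'' structure of $R_z$ is in hand, the rest is a one-line submultiplicativity estimate.
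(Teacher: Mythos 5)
Your proposal is correct and follows essentially the same route as the paper's proof in Appendix \ref{apsubsec:lipschitz}: a chain of submultiplicativity estimates combined with the Lipschitz constant of $\sigma$ and the facts $\|R_z\| = |z|$, $\|R_z^{-1}\| = |z|^{-1}$ (which the paper draws from Proposition \ref{prop:rotations}). Your write-up is in fact slightly more careful in bounding the difference $|F^{(g,x)}_{(U,V)}(\tilde x_1) - F^{(g,x)}_{(U,V)}(\tilde x_2)|$ explicitly and in flagging the nonvanishing hypothesis on $\sigma$, but these are presentational refinements, not a different argument.
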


	In particular, the Lipschitz bound of the original feedforward function is $\eta \|U\| \|V\|$. Thus, if it happens that $|\sigma(Vx)| \| g\| <| \sigma(gVx) |$, then the Lipschitz bound decreases when transforming the parameters. Additionally, we observe that  the nonlinear action does not disrupt latent distribution of data significantly. 
	See Appendix \ref{apsubsec:lipschitz} for proof of Proposition \ref{prop:Lipschitz}, which relies on  iterative applications of  the Cauchy-Schwarz inequality, as well as the fact that $\| R_z^{\pm 1} \| = |z|^{\pm 1}$.

	\paragraph{General equivariance.}
The action described is an instance of  a more general framework of equivariance. Specifically,  
	a map $c : \GL_h \times \R^{h} \to  \GL_h$ is said to be an {\it equivariance} if it satisfies (1) $c(\id_h, z) = \id_h$ for all $z$, and (2) $c(g_1, g_2z)c(g_2,z) = c(g_1g_2,z)$ for all $g_1, g_2 \in \GL_h$ and $z$.
	These two conditions on $c$ translate directly into the unit and multiplication axioms of a group\footnote{In fact, $c$ defines a $\GL_h$-equivariant structure on the tangent bundle of $\R^h$.}, generalizing $\pi(g_1g_2) = \pi(g_1)\pi(g_2)$, and $\pi(\id_h) =\id_h$. 
	Every equivariance gives rise to a nonlinear action of $\GL_h$ on $ \Par \times \R^{h}$ given by $g\cdot (U,V,x) = (U c(g, Vx)\inv, gV, x)$.
	This action is a symmetry preserving the loss if and only if the following generalization of  \eqref{eq:sig-equivar} holds:
	\begin{align}
	\mbox{General Equivariance:} \qquad
	\sigma(gz) = c(g,z) \sigma(z)
	\label{eqn:c-sigma} \qquad \forall g\in \GL_h \ \forall z\in \R^{h}
	\end{align}
	An explicit example of such an equivariance is $c(g,z) = R_{\sigma(gz)}R_{\sigma(z)}\inv$, and Proposition \ref{prop:Lipschitz} generalizes to any general equivariance by replacing $\frac{|\sigma(Vx)| \| g\| }{|\sigma(gVx)|}$ with $\|c(g,Vx)\inv\|$.

\section{Conserved quantities of gradient flow
	\label{sec:conserved-Q} }

We have shown that continuous symmetries lead us along extended flat minima in the loss landscape.  
In this section, we identify quantities that (partially) parameterize these minima. We first show that certain real-valued functions on the parameter space remain constant during gradient flow. We refer to such functions as {\it conserved quantities}.  Applying symmetries changes the value of the conserved quantity. Therefore, conserved quantities can be used to parameterize flat minima.

\paragraph{Gradient flow (GF).} 

Recall that GD proceeds in discrete steps with the update rule $\vtheta_{t+1}= \vtheta_t -\eps \grad \L(\vtheta_t) $ where $\eps$ is the learning rate (which in general can be a symmetric matrix), and $t = 0, 1, 2, \dots$ are the time steps. In gradient flow, we can define a smooth curve in the parameter space from a choice of initial values to the limiting local minimum without discretizing over time. The curve is a function of a continuous time variable $t \in \R$, and  velocity of this curve at any point is equal to the gradient of the loss function, scaled by the negative of the learning rate. In other words, the dynamics of the parameters under GF are given by:
\begin{align}
\dot{\vtheta}(t) =  d\vtheta(t)/dt = -\eps \grad_{\vtheta(t)} \L.
\label{eq:GF}
\end{align}
From an initialization  $\vtheta(0)$ at $t =0$, GF defines a trajectory $\vtheta(t)\in \Par$ for $t\in \R_{>0}$, which limits to a critical point. In this way, GF is a continuous version of GD.

\paragraph{Conserved quantities.}
A {\it conserved quantity} of GF is a function $Q : \Par \to \R$ such that the value of $Q$ at any two time points $s,t \in \R_{>0}$ along a GF trajectory is the same:  $Q(\vtheta(s))=Q(\vtheta(t))$. 
In other words, we have  $dQ(\vtheta(t))/dt=0$. Note that, if $f : \R \to \R$ is any function, and $Q$ is a conserved quantity, then the composition $f \circ Q$ is also a conserved quantity.
Several conserved quantities of GF have appeared in the literature, most notably layer imbalance $Q_{\mathrm{imb}}\equiv \|W_i\|^2- \|W_{i-1}\|^2$ \citep{du2018algorithmic} for each pair of successive  feedforward linear layers ($\sigma(x) = x$), and its full matrix version $Q_{i} = W_i^TW_i - W_{i-1}W_{i-1}^T $.

We now propose a generalization of the layer imbalance by associating a conserved quantity to any infinitesimal symmetry. As in Section \ref{subsec:action-infaction}, suppose a matrix Lie group $G$ acts linearly on the parameter space. Then, from \eqref{eq:M-inf-act},  we have the identity $\bk{\grad_\theta \L , M\cdot \vtheta}=0$ for any element $M$ in the Lie algebra $\g$. Using the gradient flow dynamics \eqref{eq:GF}, this identity becomes: 
\begin{equation}\label{eqn:dotv-Mv=0}
\bk{ \eps\inv \dot \vtheta \ ,  \  M\cdot \vtheta} =0
\end{equation}  In other words, the velocity at any point of a gradient flow curve is orthogonal to the infinitesimal action. For simplicity, we set the learning rate to the identity: $\eps = I$ (all results generalize to symmetric $\eps$.) 
The following proposition (whose proof is elementary and well-known) provides a way of `integrating' equation \eqref{eqn:dotv-Mv=0}, in the appropriate sense, in order to obtain conserved quantities:

\begin{proposition}\label{prop:QM-sym}
	Suppose the action of $G$ on $\Par$ is linear\footnote{For simplicity, we also assume that $G$ is closed under taking transposes, and acts faithfully on the parameter space. These assumptions generally hold in practice; see Appendix \ref{ap:group-actions} for a version with fewer assumptions.} 
	and leaves $\L$ invariant. For any $M \in \g$, there is a conserved quantity  $Q_M: \Par \to \R$  given by $Q_M(\vtheta) = \bk{\vtheta \ ,  \ M\cdot \vtheta}$.
\end{proposition}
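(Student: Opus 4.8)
The plan is to show that $Q_M(\vtheta) = \langle \vtheta, M \cdot \vtheta\rangle$ has vanishing time derivative along any gradient flow trajectory. The starting point is equation \eqref{eqn:dotv-Mv=0}, which tells us that $\langle \dot\vtheta, M\cdot\vtheta\rangle = 0$ at every point of the trajectory. Differentiating $Q_M(\vtheta(t))$ with respect to $t$ and using the product rule together with the linearity of the action (so that $M\cdot$ is a fixed linear operator, i.e., matrix multiplication on the flattened parameter vector), I get
\begin{equation}
\frac{d}{dt} Q_M(\vtheta(t)) = \langle \dot\vtheta, M\cdot\vtheta\rangle + \langle \vtheta, M\cdot\dot\vtheta\rangle.
\end{equation}
The first term vanishes by \eqref{eqn:dotv-Mv=0}. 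So everything reduces to showing the second term $\langle \vtheta, M\cdot\dot\vtheta\rangle$ also vanishes.

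To handle the second term, I would use the self-adjointness (up to sign) of the infinitesimal action with respect to the contraction inner product. Concretely, $\langle \vtheta, M\cdot\dot\vtheta\rangle = \langle M^T\cdot\vtheta, \dot\vtheta\rangle$, where $M^T$ denotes the transpose of $M$ as a $d\times d$ matrix acting on $\R^d \simeq \Par$. Here is where the closure-under-transpose assumption enters: since $G$ is closed under transposes, $M^T$ (or more precisely the relevant transposed generator) also lies in $\g$, and hence $\langle \dot\vtheta, M^T\cdot\vtheta\rangle = 0$ as well by another application of \eqref{eqn:dotv-Mv=0}/Proposition \ref{prop:inf-linear}. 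Therefore the second term vanishes too, and $\frac{d}{dt}Q_M(\vtheta(t)) = 0$, which is exactly the statement that $Q_M$ is conserved.

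I expect the main subtlety — not a deep obstacle, but the point requiring care — to be the bookkeeping around transposes. The Lie algebra $\g \subseteq \gl_d$ is closed under transpose as a set of matrices, but I must be careful that $M \in \g$ implies $M^T \in \g$ (not merely that some symmetrization lies in $\g$); the assumption "$G$ is closed under taking transposes" is precisely what guarantees this at the group level, and differentiating the path $t \mapsto \exp_M(t)^T = \exp_{M^T}(t)$ transfers it to the Lie algebra. An alternative route that sidesteps transposes: decompose $M = M_{\mathrm{sym}} + M_{\mathrm{skew}}$; for the skew part $\langle \vtheta, M_{\mathrm{skew}}\cdot\vtheta\rangle = 0$ identically (so it contributes nothing to $Q_M$ and nothing to its derivative), and for the symmetric part $\langle \vtheta, M_{\mathrm{sym}}\cdot\dot\vtheta\rangle = \langle M_{\mathrm{sym}}\cdot\vtheta, \dot\vtheta\rangle$ directly by symmetry — but this requires $M_{\mathrm{sym}} \in \g$, which again is where closure under transpose is used. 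Either way, the proof is two or three lines once the transpose assumption is invoked; the faithfulness assumption is not actually needed for conservation itself (it matters for the dimension-counting / parameterization claims elsewhere), so I would note that in passing.
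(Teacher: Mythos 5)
Your proof is correct and follows essentially the same route as the paper's (Proposition \ref{prop:Lie-algebra} in Appendix \ref{ap:group-actions}): differentiate $Q_M$ along the flow to get $\langle \dot\vtheta, (M+M^T)\vtheta\rangle$ and kill both terms via the orthogonality condition, using that $M^T \in \g$ because $G$ is closed under transposes. Your side remarks (transferring closure under transpose from $G$ to $\g$ via $\exp_M(t)^T = \exp_{M^T}(t)$, and the observation that faithfulness is not needed for conservation) are accurate but not required.
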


While Proposition \ref{prop:QM-sym} directly links the infinitesimal action to conserved quantities, it has the limitation that the conserved quantity corresponding to an anti-symmetric matrix $M = -M^T$ in $\g$ is constantly zero, and we do not obtain meaningful conserved quantities. 
Instead, we can only conclude that flow curves satisfy the  differential equation \eqref{eqn:dotv-Mv=0}. Fixing a basis $(\theta^1, \cdots \theta^d)$ for $\Par \simeq \R^d$, this equation becomes $\sum_{i < j} M_{ij} r_{ij}^2  \dot{\phi}_{ij} \equiv 0$ where $(r_{ij},  \phi_{ij})$ are the polar coordinates for the point $(\vtheta_i, \vtheta_j) \in \R^2$ (see Appendix \ref{ap:Q-rotation}).  In summary, we find:
\begin{center}
	\begin{tabular}{ cc|cccc|cc } 
		$M\in \g$ &  \qquad & \qquad & 		symmetric $M$  &  \qquad & \qquad &  anti-symmetric $M$ \\ 
		\hline
		differential equation  & &  &	conserved quantity & &  & differential equation  \\ 
		$\dot \theta^T M\theta = 0$ & &  & 	$Q_M(\theta) = \theta^T M\theta$ & &  & $\sum_{i < j} m_{ij} r_{ij}^2  \dot{\phi}_{ij} \equiv 0$ \\ 
	\end{tabular}
\end{center}

\paragraph{Conserved quantities parametrize symmetry flat directions.}

We observe that applying a symmetry changes the values of the conserved quantities $Q_M$ \edit{(Figure \ref{fig:Q-param-valley})}. Indeed, for $M \in \g$ and $g\in G$, we have $Q_M(g \cdot \vtheta) = Q_{g^TMg}(\vtheta)$ for all $\vtheta \in \Par$, so applying the group action\footnote{Note that this procedure only works if $g^T Mg$ belongs to $\g$, which is the case the examples we consider.} transforms the conserved quantity $Q_M$ to $Q_{g^TMg}$. 
As discussed in Section \ref{sec:symmetry}, applying  $g$ to a minimum $\vtheta^*$ of $\L$ yields another minimum $g\cdot \vtheta^*$; hence applying symmetries leads to a partial parameterization of flat minima. Note that, in general, we may lack sufficient number of $Q_M$ to fully parameterize a flat minimum. For example, in the linear network $UVx$, $G=\GL_h$ and flat minima generically have $h^2$ dimensions, whereas the number of independent nonzero $Q_M$ is $h(h+1)/2$, which is the dimension of the space of symmetric matrices $M= M^T$ in $\gl_h$. 

In gradient descent, the values of these conserved quantities may change due to the time discretization. However, the change in $Q$ is expected to be small. For example, in two-layer linear networks, the change of $Q$ is bounded by the square of learning rate. Appendix \ref{appendix:sgd-Q-bound} contains derivations and empirical observations of the magnitude of change in $Q$.

\paragraph{Relation to Noether's theorem.}
In physics, Noether's theorem \citep{noether1971invariant} states that continuous symmetries give rise to conserved quantities. 
Recently, \citet{tanaka2021noether} showed that Noether's theorem can also be applied to GD by approximating it as a \textit{second order} GF. 
We show that in the limit where the second order GF reduces to first order GF \eqref{eq:GF}, results from Noether's theorem reduce to our conservation law $\bk{\ba{M}_\vtheta, \grad \L }=0$ \eqref{eq:M-inf-act}. 
In short, using Noether's theorem, the conserved Noether current is $J_M = e^{t/\tau} J_{0M} $ with $J_{0M}= \bk{\ba{M}_\vtheta, \eps^{-1}\dot{\vtheta}}$. 
In the limit $\tau\to 0$, using \eqref{eq:GF}, $J_{0M} = \bk{\ba{M}_\vtheta, \grad \L }=0$ and the conservation $dJ_M/dt = 0$ implies $J_{0M} = 0$, meaning we recover \eqref{eq:M-inf-act}.
Details appear in Appendix \ref{ap:Noether-short}. 

\paragraph{Examples.}

We present examples of conserved quantities for two-layer neural networks. all of which directly generalize to the multi-layer case. See Appendix \ref{apsubsec:ex-consv-quant} for full  derivations (which heavily rely on properties of the trace). We adopt the notation of Section \ref{sec:sigma-equivar}.

\begin{example}[General equivariant activation]
	Suppose $\sigma$ is equivariant under a linear action of a subgroup $G\subseteq \GL_h(\R)$, so that $\pi(g)\sigma(z) = \sigma(gz)$. Then the two-layer network $F(z) = U\sigma(Vz)$ is invariant under $G$, as is the loss function.  For symmetric $M\in \g$, Proposition \ref{prop:QM-sym} yields the following conserved quantity:
	\begin{align}
	Q_M : \Par \to \R,  \qquad Q_M(U,V) = \Tr\br{V^T MV}- \Tr\br{U^T U  d\pi(M) }
	\end{align}
	Indeed, this follows from the fact that $M \cdot(U,V) = (- U d\pi(M), MV)$, as in \eqref{eq:M-UV-inf-act}.
\end{example}
\begin{example}[Imbalance in linear layers]
	Suppose the network is linear. Then $\sigma(z) =z$ and the loss is invariant under $\GL_{h}(\R)$. For symmetric $M$ we have the conserved quantity $Q_{M}(U,V) = \Tr\br{(V V^T - U^T U) M}$.
	Moreover, each component of the matrix $VV^T - U^TU$ is conserved.


\end{example}

\begin{example}[Homogeneous activation under scaling]
	Suppose $\sigma$ is a homogeneous activation of degree $\alpha$. Let $G=  \left(\R_{>0}\right)^h$ be the positive rescaling group, so that $\sigma(gz)= g^\alpha \sigma(z)$ for any $g \in G$ and $z \in \R^h$. Note that the Lie algebra of $G$ consists of all diagonal matrices in $\gl_h$, so that, in particular, each $M \in \g$ is symmetric. 	Since $d\pi(M) = \alpha M$ for any $M \in \g$, we obtain the conserved quantity 
	$Q_M(U,V) =  \Tr\br{(V V^T - \alpha U^T U) M}$. 
	Using the basis $M=E_{kk}$, we see that $\mathbf{Q} = \mathrm{diag}\br{V V^T - \alpha U^TU}$ is conserved (here, $\mathrm{diag}[A]$ is the leading diagonal). 
	Special cases of this are LeakyReLU and ReLU with $\alpha = 1$. 
\end{example}

\begin{example}[Radial rescaling activations]
	 Let $\sigma$ be such a radial rescaling activation. As in Section \ref{sec:Ex-lin-sym}, the orthogonal group $G=O(h)$ is a symmetry of $\L$. 
	The Lie algebra $\g = \mathfrak{so}_h$ comprises  anti-symmetric matrices, and so Proposition \ref{prop:QM-sym} yields no non-trivial conserved quantities. However, using the canonical basis of $\g=\mathfrak{so}_h$ given by $E_{[kl]}=E_{kl}-E_{lk}$ (so $[kl]$ indicates anti-symmetrized indices), one uses equation \eqref{eqn:dotv-Mv=0} to deduce the following novel result (see Appendix \ref{apsubsec:ex-consv-quant}):

\edit{
\begin{theorem}
\label{thm:angular-momentum}
When $\sigma$ is a radial rescaling activation, we have:
\begin{equation}
\label{eq:UV-M-anti-sym}
V\dot{V}^T - \dot V V^T + U^T\dot{U} - \dot U^T U =   0
\end{equation}
for any $(U,V) \in \Par$, where the dots indicate derivatives with respect to gradient flow.    
\end{theorem}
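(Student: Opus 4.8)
The plan is to specialize the orthogonality identity of Proposition~\ref{prop:inf-linear} (equivalently \eqref{eqn:dotv-Mv=0}) to $G=O(h)$ with $\pi(g)=g$, and then ``integrate it out'' against an arbitrary antisymmetric generator using only cyclicity of the trace. First I recall from \eqref{eq:M-UV-inf-act} that, since $d\pi(M)=M$ here, the infinitesimal action associated to $M\in\g=\mathfrak{so}_h$ is $M\cdot(U,V)=(-UM,\,MV)$. Under gradient flow with $\eps=I$ we have $\dot{\vtheta}=-\grad_\vtheta\L$, so Proposition~\ref{prop:inf-linear} gives $\bk{\dot{\vtheta},\,M\cdot\vtheta}=0$ for every antisymmetric $M$.

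Next I expand this inner product, which contracts all matrix indices, as a sum of two traces: $\bk{(\dot U,\dot V),(-UM,MV)} = -\Tr(\dot U^{T} U M) + \Tr(\dot V^{T} M V)$. Using $\Tr(AB)=\Tr(BA)$ on both terms to bring $M$ to the left, this becomes $\Tr\!\big(M\,(V\dot V^{T} - \dot U^{T} U)\big)=0$ for all $M\in\mathfrak{so}_h$. I then invoke the elementary fact that a square matrix $N$ satisfies $\Tr(MN)=0$ for every antisymmetric $M$ if and only if $N$ is symmetric (the symmetric part of $N$ is always trace-orthogonal to antisymmetric matrices, and taking $M$ equal to the antisymmetric part of $N$ forces that part to vanish). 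Applying this with $N = V\dot V^{T} - \dot U^{T} U$ yields $V\dot V^{T} - \dot U^{T} U = (V\dot V^{T} - \dot U^{T} U)^{T} = \dot V V^{T} - U^{T}\dot U$, which on rearrangement is exactly \eqref{eq:UV-M-anti-sym}. The same conclusion can alternatively be read off componentwise by substituting the canonical basis $E_{[kl]}=E_{kl}-E_{lk}$ of $\mathfrak{so}_h$ into \eqref{eqn:dotv-Mv=0}.

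I do not expect a serious obstacle: the only delicate points are the bookkeeping of the two sign sources (the $-UM$ in the infinitesimal action and the minus sign in $\dot{\vtheta}=-\grad_\vtheta\L$) and the cyclic trace manipulations, both routine. The conceptual point — and the reason Proposition~\ref{prop:QM-sym} produces nothing here — is that the natural candidate $Q_M(U,V)=\Tr(V^{T}MV)-\Tr(U^{T}UM)$ vanishes identically for antisymmetric $M$ (the matrix $V^{T}MV$ is antisymmetric, and $U^{T}U M$ is a product of a symmetric and an antisymmetric matrix, so both traces are zero), so one is left only with the first-order differential identity above. Interpreting $\dot V V^{T}-V\dot V^{T}$ and $\dot U^{T}U-U^{T}\dot U$ as the angular momenta of the two layers, \eqref{eq:UV-M-anti-sym} states precisely that these cancel along any gradient flow trajectory.
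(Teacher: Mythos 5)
Your proposal is correct and follows essentially the same route as the paper: specialize the orthogonality relation $\bk{\dot{\vtheta}, M\cdot\vtheta}=0$ to the infinitesimal $O(h)$-action $M\cdot(U,V)=(-UM,MV)$, expand it via cyclicity of the trace, and conclude from the fact that trace-orthogonality to all antisymmetric $M$ forces the antisymmetric part of $V\dot V^T + U^T\dot U$ to vanish (the paper phrases this as checking against the basis $E_{kl}-E_{lk}$, which is the same argument). Your side remark on why Proposition~\ref{prop:QM-sym} yields only the trivial quantity for antisymmetric $M$ also matches the paper's discussion.
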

}

Expanding the $(k,l)$ entry of the matrix on the left-hand-side of \eqref{eq:UV-M-anti-sym}, we obtain: $	\sum_{s=1}^n r_{U,s;kl}^2 \dot{\phi}_{U,s;kl} + \sum_{s=1}^m r_{V^T,s;kl}^2 \dot{\phi}_{V^T,s;kl} =  0$,
where $(r_{U,s;kl}, \phi_{U,s;kl})$ and $(r_{V^T,s;kl}, \phi_{V^T,s;kl})$ are the 2D polar coordinates of the points $(U_{sk}, U_{sl})$ and $(V^T_{sk}, V^T_{sl})$. 
	This is analogous to the ``angular momentum'' in 2D, that is: $x \wedge\dot{x} = r^2 \dot{\phi} $. 
	Intuitively, Theorem 
	\ref{thm:angular-momentum} implies that in every 2D plane $(k,l)$, the angular momenta of the rows of $U$ and the columns of $V$ sum to zero.
	These results also apply to linear networks $F(x) = UVx$, since rotational symmetries are linear.
\end{example}

\begin{figure}[t]
\vskip -20pt
\centering
\includegraphics[width=1.0\textwidth]{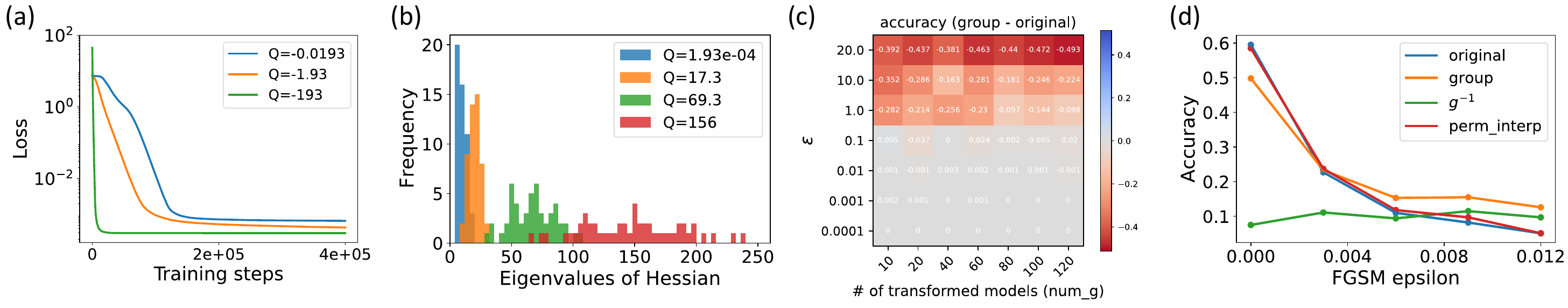}
\vskip -10pt
\caption{Overview of empirical observations with more details in Appendix \ref{appendix:Q-convergence}, \ref{appendix:Q-generalization}, and \ref{appendix:ensemble}. 
(a) In a two-layer neural network, the convergence rate depends on the conserved quantity $Q$.
(b) The distribution of the eigenvalues of the Hessian at the minimum is related to the value of $Q$.
(c) The ensemble created by group actions has similar loss values when $\eps$ is small.
(d) The ensemble model improves robustness against fast gradient sign method attacks.
}
\label{fig:exp-summary}
\vskip -10pt
\end{figure}

\section{Applications}
\label{sec:discussion}
We present a set of experiments aimed at assessing the utility of the nonlinear group action and conserved quantities\footnote{Our code is available at \url{https://github.com/Rose-STL-Lab/Gradient-Flow-Symmetry}.}. 
A summary of the results are shown in Figure \ref{fig:exp-summary}. 
We show that the value of conserved quantities can impact convergence rate and generalizability.
We also find the nonlinear action to be viable for ensemble building to improve robustness under certain adversarial attacks. 

\textbf{Exploration of the minimum.} 
While $Q$ is often unbounded, common initialization methods such as \citep{glorot2010understanding} limit the values of $Q$ to a small range (Appendix \ref{appendix:Q-distribution-init}). 
As a result, only a small part of the minimum is reachable by the models. 
Symmetries allow us to explore portions of flat minima that gradient descent rarely reaches.

\textbf{Convergence rate and generalizability.} 
Conserved quantities are by definition unchanged during gradient flows. 
By relating the values of conserved quantities to convergence rate and model generalizability, we have access to properties of the trajectory and the final model before the gradient flow starts.
This knowledge allows us to choose good conserved quantity values at initialization.
In Appendix \ref{appendix:Q-convergence}, we derive the relation between $Q$ and convergence rate for two example optimization problems, and provide numerical evidence that initializing parameters with certain conserved quantity values accelerates convergence.
In Appendix \ref{appendix:Q-generalization}, we derive the relation between conserved quantities and sharpness of minima in a simple two-layer network, and show empirically that $Q$ values affect the eigenvalues of the Hessian (and possibly generalizability) in larger networks.

\textbf{Ensemble models.} 
Applying the nonlinear group action allows us to obtain an ensemble without any retraining or searching. 
We show that even with stochasticity in the data, the loss is approximately unchanged under the group action. 
The ensemble has the potential to improve robustness under adversarial attacks (Appendix \ref{appendix:ensemble}). 



\section{Discussion}

In this paper, we present a general framework of equivariance and introduce a new class of nonlinear, data-dependent symmetries of neural network parameter spaces. These symmetries give rise to conserved quantities in gradient flows, with important implications in improving optimization and robustness of neural networks.
While our work sheds new light onto the link between symmetries and group, it contains several limitations, which merit further investigation. 
First, we have not been able to determine conserved quantities in the radial rescaling case, only a differential equation that gradient flow curves must satisfy. 
Second, one major contribution of this paper is the non-linear group action of Section \ref{sec:nonlinear}. However, our formulation only gurantees full $\GL_h$ equivariance for batch size $k=1$. 
In future work, we plan to explore more consequences and variations of this non-linear group action, with the hope of generalizing to greater batch size. 
Finally, in many cases, parameter space symmetries lead to model compression: i.e., finding a lower-dimension space of parameters with the same expressivity of the original space. 

\section*{Acknowledgments}

This work was supported in part by U.S. Department Of
Energy, Office of Science  grant DE-SC0022255, U. S. Army Research Office
 grant W911NF-20-1-0334,
and NSF grants \#2134274 and \#2146343.
I.~Ganev was supported by the NWO under the CORTEX project (NWA.1160.18.316).
R.~Walters was supported by the Roux Institute and the Harold Alfond Foundation and NSF grants \#2107256 and \#2134178.

\bibliography{main}
\bibliographystyle{iclr2023_conference}

\newpage
\appendix
\tableofcontents

\section{Additional related works \label{ap:related}}


\paragraph{Mode connectivity / flat regions / ensembles}
In neural networks, the optima of the loss functions are connected by curves or volumes, on which the loss is almost constant \citep{freeman2017topology, garipov2018loss, draxler2018essentially, benton2021loss, izmailov2018averaging}.
In these works, various algorithms have been proposed to find these low-cost curves, which provides a low-cost way to create an ensemble of models from a single trained model.
Other related works include linear mode connectivity \citep{frankle2020linear}, using mode connectivity for loss landscape analysis \citep{gotmare2018using}, and studying flat region by removing symmetry \citep{pittorino2022deep}.


\paragraph{Sharpness of minima and generalization}
Recent theory and empirical studies suggest that sharp minimum do not generalize well \citep{hochreiter1997flat, keskar2017large, petzka2021relative}.
Explicitly searching for flat minimum has been shown to improve generalization bounds and model performance \citep{chaudhari2017entropy, foret2020sharpness, kim2022fisher}. 
The sharpness of minimum can be defined using the largest loss value in the neighborhood of a minima \citep{keskar2017large,foret2020sharpness,kim2022fisher}, visualization of the change in loss under perturbation with various magnitudes on weights \citep{izmailov2018averaging}, singularity of Hessian \citep{sagun2017empirical}, or the volume of the basin that contains the minimum (approximated by Radon measure \citep{zhou2020towards} or product of the eigenvalues of the Hessian \citep{wu2017towards}). 
Under most of these metrics, however, equivalent models can be built to have minimum with different sharpness but same generalization ability \citep{dinh2017sharp}. 
Applications include explaining the good generalization of SGD by examining asymmetric minimum \citep{he2019asymmetric}, and new pruning algorithms that search for minimizers close to flat regions \citep{chao2020directional}.

\paragraph{Parameter space symmetry and optimization} While sets of parameter values related by the symmetries produce the same output, the gradients at these points are different, resulting in different learning dynamics \citep{kunin2021neural, van2017l2}.
This insight leads to a number of new advances in optimization. 
\cite{neyshabur2015path-sgd} and \cite{meng2019mathcal} propose optimization algorithms that are invariant to symmetry transformations on parameters. \cite{armenta2020neural} and \cite{zhao2022symmetry} apply loss-invariant transformations on parameters to improve the magnitude of gradients, and consequently the convergence speed. 

The structures encoded in known symmetries have also led to new optimization methods and insights of the loss landscape.
\citet{bamler2018improving} improves the convergence speed when optimizing in the direction of weakly broken symmetry. 
\cite{zhang2020symmetry} discusses how symmetry helps in obtaining global minimizers for a class of nonconvex problems. The potential relevance of continuous symmetries in optimization problems was also discussed in \cite{leake2021optimization}, which also provides an overview of Lie groups.

\section{Relation to Noether's theorem}
\label{ap:Noether-short}
We will now show how the approach in 
\citet{tanaka2021noether} 
relates to our conservation law $dQ/dt=\bk{\dot{\vtheta}, M \vtheta}=0$. 
Assuming a small time-step $\tau \ll 1$, we can write GD as $\vtheta(t+\tau) - \vtheta(t)= -\tilde{\eps} \grad\L(\vtheta(t))$. 
Expanding the l.h.s to second order in $\tau$ and discarding $O(\tau^3)$ terms defines the 2nd order GF equation
\begin{align}
    \mbox{2nd order GF: }\qquad 
    {d\vtheta\over dt} + {\tau\over 2} {d^2\vtheta\over dt^2} &= -\eps \grad \L .
    \label{eq:GF2}
\end{align}
Here $\eps =\tilde{\eps} /\tau$. 
%
To use Noether's theorem, the dynamics (i.e. GF) must be a variational (Euler-Lagrange (EL)) equation derived from an ``action'' $S(\vtheta) $ (objective function), which for \eqref{eq:GF2} is the time integral of  
Bregman Lagrangian \citep{wibisono2015accelerated} $L$
\begin{align}
    S(\vtheta) &= \int dt 
    L(\vtheta(t),\dot{\vtheta}(t);t) 
    = \int_\gamma dt  e^{t/\tau} \br{\frac{\tau}{2}\bk{\dot{\vtheta}, {\eps}^{-1}\dot\vtheta } - \L (\vtheta)}
    \label{eq:2nd-order-lagrangian}
\end{align}
where $\vtheta : \R \to \Par $ is a trajectory (flow path) in $\Par$, parametrized by $t$. 
The variational EL equations find the paths $\gamma^*$ which minimize the action, meaning $\ro S_\gamma/\ro \gamma |_{\gamma^*} = 0 $.
\paragraph{Noether's theorem} states that if $M \in \g$ is a symmetry of the \textit{action} $S(\vtheta)$ \eqref{eq:2nd-order-lagrangian} (not just the loss $\L(\vtheta)$), 
then the Noether current $J_M$ is  conserved 
\begin{align}
    \mbox{Noether current: }&\quad J_M = \bk{\ba{M}_\vtheta, {\ro L \over  \ro \dot{\vtheta}}} = e^{t/\tau} \bk{\ba{M}_\vtheta,{\eps}^{-1} \dot{\vtheta}} = e^{t/\tau} J_{0M}, \cr
    \mbox{Conservation: }&\quad 
    {dJ_M\over dt} = e^{t/\tau} \br{ {1\over \tau } J_{0M} + {dJ_{0M}\over dt} }=0,  
    \quad \Rightarrow \quad  
    J_{0M}(t) = J_{0M} e^{-t/\tau}
    \label{eq:J-noether}
\end{align}
\cite{tanaka2021noether} also derived the Noether current \eqref{eq:J-noether}, but concludes that because $L(\vtheta,\dot{\vtheta}) \ne \L(\vtheta)$, the symmetries are ``broken'' and therefore doesn't derive conserved charges for the types of symmetries we discussed above. 
However, while \cite{tanaka2021noether} focuses on 2nd order GF, we note that our conserved $Q$ were derived for \textit{first order GF}, which is found from the $\tau \to 0$ limit of 2nd oder GF. 
In this limit $L \to  e^{t/\tau}\L $ and thus symmetries of $L$ also becomes symmetries of $\L$. 
When $\tau \to 0$, 2nd order GF reduces to ${\eps}^{-1} \dot{\vtheta} =- \grad \L$ the conserved charge goes to 
\begin{align}
    \lim_{\tau\to0 } J_{0M} = \bk{\ba{M}_\vtheta, \grad\L} = J_{0M}(0)\lim_{\tau\to0} e^{-t/\tau} =0, & 
\end{align}
which means that we recover the invariance under infinitesimal action \eqref{eq:M-inf-act}. 
In fact, for linear symmetries and symmetric $M\in \g$, $J_{0M} = dQ_M/dt = 0$.

\section{Neural networks:  linear group actions}
 \label{ap:group-actions}

In this appendix, we provide an extended discussion of the topics of Section \ref{sec:symmetry}, including full proofs of all results. Specifically, after some technical background material on Jacobians and differentials, we specify our conventions for  neural network parameter space symmetries. In contrast to the discussion of the main text, we (1) assume that neural networks have biases, and (2) focus on the multi-layer case rather than just the two-layer case. We then turn our attention to group actions of the parameter space that leave the loss invariant, and the resulting infinitesimal symmetries. The groups we consider are all subgroups of a large group of change-of-basis transformations of the hidden feature spaces; we call this group the `hidden symmetry group'. We also compute the dimensions of generic extended flat minima in various relevant examples. Finally, we explore  consequences of invariant group actions  for conserved quantities.

\subsection{Jacobians and differentials}\label{appsubsec:Jacobian}

In this section, we summarize background  material on Jacobians and differentials. We adopt notation and conventions from differential geometry. Let $U \subset \R^n$ be an open subset of Euclidean space $\R^n$, and let $F : U \to \R^m$ be a differentiable function. Let $F_1, \dots, F_m : U \to \R$ be the components of $F$, so that $F(u)  = \left( F_1(u) , \dots, F_m(u) \right)$. The Jacobian of $F$, also know as differential of $F$, at $u \in U$ is the following matrix of partial derivatives evaluated at $u$:
$$dF_u = \begin{bmatrix}
\frac{\partial F_1}{\partial_{x_1}}  \big\vert_u & \frac{\partial F_1}{\partial_{x_2}}  \big\vert_u & \cdots &  \frac{\partial F_1}{\partial_{x_n}}  \big\vert_u  \\
\frac{\partial F_2}{\partial_{x_1}}  \big\vert_u  & \frac{\partial F_2}{\partial_{x_2}}  \big\vert_u & \cdots &  \frac{\partial F_2}{\partial_{x_n}}  \big\vert_u  \\
\vdots & \vdots & \ddots & \vdots \\
\frac{\partial F_m}{\partial_{x_1}}  \big\vert_u & \frac{\partial F_m}{\partial_{x_2}}  \big\vert_u & \cdots &  \frac{\partial F_m}{\partial_{x_n}}   \big\vert_u \\
\end{bmatrix}
$$
The differential $dF_u$ defines a linear map from $\R^n$ to $\R^m$, that is, an element of $\R^{m \times n}$. Observe that if $F$ itself is linear, then, as matrices, $dF_u = F$ for all points $u \in U$. If $G : \R^m \to\R^p$ is another differentiable map, then the chain rule implies that, for all $u \in \R^n$, we have:
$$ d( G \circ F)_u = dG_{F(u)} \circ dF_u.$$
In the special case the $m=1$, the differential is a $1 \times n$ row vector, and the \emph{gradient} $\nabla_u F$ of $F$ at $u \in \R^n$ is defined as the transpose of the Jacobian $dF_u$:
$$\nabla_v F = (dF_u)^T = \begin{bmatrix}
\frac{\partial F }{\partial x_1} \big |_u &   \dots &  \frac{\partial F }{\partial x_n} \big|_u  \end{bmatrix}^T  = \left( \frac{\partial F}{\partial x_i} \bigg |_u \right)_{i=1}^n \in \R^n $$

\subsection{Neural network  parameter spaces}\label{apsubsec:nn-parameter-space}

Consider a neural network with $L$ layers, input dimension $n_0  =n$, output dimension $n_L = m$, and hidden dimensions given by $n_1, \dots, n_{L-1}$. For convenience, we group the dimensions into  a tuple $\mathbf{n} = (n_0, n_1, \dots, n_L)$. The parameter space is given by:
\begin{equation*} 
\Par(\mathbf{n}) =\R^{n_L \times n_{L-1}} \times \R^{n_{L-1} \times n_{L-2}} \times \cdots  \times  \R^{n_{2} \times n_{1}} \times \R^{n_1 \times n_{0}}  \times \R^{n_L} \times \R^{n_{L-1}} \times \cdots \times \R^{n_1}.
\end{equation*}
We write an element therein as a pair $\vtheta = (\mathbf{W}, \bfb)$ of tuples $\bfW = (W_L, \dots, W_1)$ and $\bfb = (b_L, \dots, b_1)$, so that $W_i$ is a $n_i \times n_{i-1}$ matrix and $b_i$ is a vector in $\R^{n_i}$ for $i = 1, \dots, L$. When $\bfn$ is clear from context, we write simply $\Par$ for the parameter space. Fix a piecewise differentiable function $\sigma_i : \R^{n_i} \to \R^{n_i}$ for each $i = 1, \dots, L$.  The activations can be pointwise (as is conventionally the case), but are not necessarily so. The feedforward function 
\begin{equation*}
F  = F_{\vtheta} : \R^n \to \R^m
\end{equation*}
corresponding to parameters $\vtheta = (\mathbf{W}, \bfb) \in \Par$ with activations $\sigma_i$ is defined in the usual recursive way. To be explicit, we define the partial feedforward function $F_{\vtheta, i} = F_i : \R^n \to \R^{n_i}$ to be the map taking $x \in \R^n$ to $\sigma_i(W_i F_{i-1}(x) + b_i)$, for $i = 1, \dots, L$, with $F_0  = \mathrm{id}_{\R^{n_0}}$. Then the feedforward function is $F = F_L$.
The ``loss function'' $\L$ of our model is defined as:
\begin{align}
\L : \Par \times \mathbf{Data} \to \R, \qquad\L (\boldsymbol{\theta},(x,y)) = \mathrm{Cost}(y, F_{\vtheta}(x)). 
\end{align}
where $\mathbf{Data} = \R^{n_0}\times \R^{n_L}$ is the space of data  (i.e., possible training data pairs), and  $\mathrm{Cost} : \R^{n_L} \times \R^{n_L} \to \R$ is a differentiable cost function, such as mean square error or cross-entropy. Many, if not most, of our results involve properties of the loss function $\L$ that hold for any training data. 
Hence, unless specified otherwise, we take a fixed batch of training data $\{(x_i, y_i)\}_{i=1}^k \subseteq \mathbf{Data}$, and consider the loss to be a function of the  parameters only. 

The above constructions generalize to multiple samples. Specifically, instead of $x \in \R^{n_0}$ and $y \in \R^{n_L}$, one has  matrices $X \in \R^{n_o \times k}$  and $Y \in \R^{n_L \times k}$ whose columns are the $k$ samples. Additionally, one uses the Frobenius norm of $n_L \times k$ matrices to compute the loss function. The $i$-th partial feedforward function is $F_{\vtheta, i} : \R^{n_0 \times k} \to \R^{n_i \times k}$, and we have the feedforward function $F_\vtheta : \R^{n_0 \times k} \to \R^{n_L \times k}$. (We use the same notation as in the case $k=1$; the number of samples will be understood from context.) 

\begin{example}
	Consider the case $L=2$ with no biases and no output activation. The dimension vector is $(n_0,n_1,n_2) = (n,h,m)$, so the parameter space is 	$\Par(n,h,m) = \R^{h \times n} \times \R^{m \times h}.$
	Taking the mean square error cost function, the loss function for data $(X,Y) \in \R^{n\times k} \times \R^{m\times k}$ takes the form $\L(\vtheta,(X,Y)) =  {1\over k} \|Y - U \sigma(VX) \|^2$, where $\vtheta = (W_2, W_1 ) = ( U,V) \in \Par$.
\end{example}


\subsection{Action of continuous groups and infinitesimal symmetries}

Let $G$ be a group. An action of  $G$ on the parameter space $\Par$ is a function $\cdot :G \times \Par \to \Par $, written as $g\cdot \vtheta $, that satisfies the unit and multiplication axioms of the group, meaning $I\cdot \vtheta = \vtheta$ where $I$ is the identity of $G$, and $g_1\cdot (g_2 \cdot \vtheta) = (g_1g_2)\cdot \vtheta$ for all $g_1,g_2 \in G$. Recall that we say an action $G \times \Par \to \Par$ is a {\it symmetry} of $\Par$ with respect to $\L$  if it leaves the loss function invariant, that is:
\begin{align}
\L(g \cdot \boldsymbol{\theta}) &= \L(\boldsymbol{\theta}), & 
\forall \boldsymbol{\theta} \in \Par, \quad g \in G
\end{align}
The groups within the scope of this paper are all matrix Lie groups, which are topologically closed subgroups $G\subseteq \GL_n(\R)$ of the general linear group of invertible $n \times n$ real matrices.  Any smooth action of such a group induces an action of the infinitesimal generators of the group, i.e., elements of its Lie algebra. Concretely, let $\g = \mathrm{Lie}(G) = T_I G$ be the Lie algebra, which can be thought of as a certain subspace of matrices in $\mathfrak{gl}_n = \R^{n \times n}$, or (equivalently) as the tangent space\footnote{Hence, elements of the Lie algebra are `velocities' at the identity of $G$. More precisely, for every Lie algebra element $\xi$, there is a path $\gamma_\xi : (-\epsilon,\epsilon) \to G$ whose value  at $0$ is the identity of $G$ and whose derivative (i.e., velocity) at zero is $\xi$.}. at the identity $I$ of $G$.  For every matrix $M \in \g$, we have an exponential map $\exp_M : \R \to G$  defined as $\exp_M(t) = \sum_{k=0}^\infty \frac{(tM)^k}{k!}$. Given an action of $G$ on $\Par$, the {\it infinitesimal action} of $M\in \g$ is a vector field $\ba{M}$:
\begin{align}
\text{Infinitesimal action of $M$ vector field:}& & \overline{M}_\vtheta &:= \frac{d}{dt} \bigg|_{t \to 0} \left(\exp_M(t) \cdot \vtheta \right ), \qquad 
\forall \vtheta \in \Par.
\end{align}
Hence, the value of the vector field $\ba{M}$ at the parameter value $\vtheta$ is given by the derivative at zero of the  function $t \mapsto (\exp_M(t) \cdot \vtheta)$. 
In the case of a parameter space symmetry, the invariance of $\L$ translates into the orthogonality condition in Proposition \ref{prop:inf-linear}, where the inner product $\bk{,}:\Par\times \Par \to \R $ is calculated by contracting all indices, e.g. $\bk{A, B} = \sum_{ijk\dots}A_{ijk\dots}B_{ijk\dots} $.


\begin{proof}[Proof of Proposition \ref{prop:inf-linear}]
	The  gradient is the transpose of the Jacobian (see Section \ref{appsubsec:Jacobian}), so the left-hand-side becomes $d\L_\vtheta \left(   \frac{d}{dt} \big|_{ 0} \left(\exp_M(t) \cdot \vtheta \right) \right)$. We compute:
	\begin{align*}
d\L_\vtheta \left(   \frac{d}{dt} \bigg|_{ 0} \left(\exp_M(t) \cdot \vtheta \right) \right) & = \frac{d}{dt} \bigg|_{ 0} \L(\exp_M(t) \cdot \vtheta) = \frac{d}{dt} \bigg|_{ 0} \L( \vtheta) = 0
	\end{align*}
	where the first equality follows by the chain rule, the second equality uses the invariance of $\L$, and the third equality follows since $\L(\vtheta)$ does not depend on $t$. 
\end{proof}

Next, we comment on the case of a linear action. Observe that the parameter space is a vector space of dimension $d = \dim(\Par)  = \sum_{i=1}^L n_i (1 + n_{i-1})$. Hence, there is an isomorphism $\Par \simeq \R^d$ which flattens  any tuple $\vtheta = (\bfW, \bfb)$ into a vector in $\R^d$. We  can identify the group $\GL(\Par)$ of all invertible linear transformations of the parameter space with the group $ \GL_d(\R)$ of invertible $d \times d$ matrices, and the  Lie algebra of $\GL(\Par)$ with $\gl_d = \R^{d \times d}$.
Suppose $G$ acts linearly on the parameter space. Then we can identify\footnote{Modding out by the kernel, if necessary.} $G$ with a subgroup of $ \GL(\Par)$, acting on $\Par$ with matrix multiplication. Similarly, we can identify the Lie algebra $\g$ of $G$ with a Lie subalgebra of $\gl_d = \R^{d \times d}$. In this case, the infinitesimal action is given by matrix multiplication: 
$\overline{M}_{\boldsymbol{\theta}}=  M \cdot \boldsymbol{\theta}$. We recover Equation \eqref{eq:M-inf-act}.


\subsection{The hidden symmetry group}

Consider a neural network with $L$ layers and dimensions $\bfn$. 
The \emph{hidden symmetry group} corresponding to dimensions $\bfn$ is defined as the following product of general linear groups:
$$\GL_{\bfn^\text{\rm hidden}} = \GL_{n_1} \times \cdots \times \GL_{n_{L-1}}$$ 
An element is a tuple of invertible matrices $\bfg = (g_1, \dots, g_{L-1})$, where $g_i \in \GL_{n_i}$. 
Consider the action of the hidden symmetry group on the parameter space given by
\begin{equation}
\label{eqn:hidden-sym-action}
\GL_{\bfn^\text{\rm hidden}}   \circlearrowright  \Par(\bfn) \qquad \qquad 
\bfg \cdot  (\bfW, \bfb)  =  \left( g_i W_i g_{i-1}\inv \ , \  g_i b_i \right)_{i=1}^L.
\end{equation}
where  $g_0 = \id_{n_0}$ and $g_L = \id_{n_L}$. This action amounts to changing the basis at each hidden feature space. 
The Lie algebra of $\GLhid$ is 
$$\glhid = \gl_{n_1} \times \cdots \times \gl_{n_{L_1}},$$ 
and the infinitesimal action of the tuple $M \in \glhid$ is given by:
$$\glhid \circlearrowright  \Par(\bfn) \qquad \qquad \bfM \cdot (\bfW, \bfb) \mapsto \left( M_iW_i - W_iM_{i-1} \ ,  \ M_ib_i \right)_{i=1}^L$$
where we set $M_0$ and $M_L$ to be the zero matrices. 

\begin{example} In the case $L=2$, with dimension vector $\bfn = (n,h,m)$ and no biases. The hidden symmetry group is $\GL_h$ with Lie algebra $\gl_h$. The action of the group and the infinitesimal action of the Lie algebra are given by:
	\[ \GL_h \circlearrowright \Par(n,h,m) = \R^{h \times n} \times \R^{m \times h} \qquad \qquad g \cdot (V,U) = \left( gV , Ug\inv \right)  \]
	\[ \gl_h \circlearrowright \Par(n,h,m) = \R^{h \times n} \times \R^{m \times h} \qquad \qquad M \cdot (V,U) = \left( MV , -UM \right)  \]
\end{example}

\subsection{Linear symmetries \label{ap:multi-layer-linear}}

Consider a feedforward fully-connected neural network with widths $\bfn = (n_0, \dots, n_L)$, so that the parameters space  consists of tuples of weights and biases $\vtheta = \left( W_i \in \R^{n_i \times n_{i-1}}, b_i \in \R^{n_i} \right)_{i=1}^L$.
For each hidden layer $0<i< L$, let $G_i$ be a subgroup of $\GL_{n_i}$, and let $\pi_i : G_i \to \GL_{n_i}(\R)$ be a representation (in many cases, we take $\pi_i(g) = g$). Hence the product $G = G_1 \times G_{L-1}$ is a subgroup of the hidden symmetry group $\GLhid$. 
Define an action of $G$ on  $\Par$ via 
\begin{align}\label{eqn:multi-layer-action}
&\forall g=(g_1,\dots ,g_L)\in G, & 
g\cdot W_i &= g_{i}W_i\pi_{i-1}(g_{i-1}^{-1}) & g\cdot b_i &= g_{i}b_i
\end{align}
where $g_0$ and $g_L$ are the identity matrices $\id_{n_0}$ and $\id_{n_L}$, respectively. This is a version of the action defined in \eqref{eqn:hidden-sym-action}, with the addition of the twists resulting from  the representations $\pi_i$.

We now consider the resulting infinitesimal action. For each $i$, the representation $\pi_i$ induces a Lie algebra representation  $d(\pi_i):\g_i \to \gl_{n_i}$. The infinitesimal action of the Lie algebra $\g  = \g_1 \times \cdots \times \g_L$ induced by \ref{eqn:multi-layer-action} is given by:
\begin{align}\label{eqn:multi-layer-infinitesimal}
&\forall M=(M_1,\dots ,M_L)\in \g, & 
M\cdot W_i &= M_{i}W_i - W_i d(\pi_{i-1})(M_{i-1}) & M\cdot b_i &= M_{i}b_i 
\end{align}
The proof of the first part of the following Proposition  proceeds by induction, where the key computation is that of from \eqref{eq:sig-equivar}. The second part relies on \eqref{eq:M-inf-act}.

\begin{proposition}\label{prop:combined-0}
	Suppose that, for each $i =1, \dots, L$, the activation $\sigma_i$ intertwines the two actions of $G_i$, that is, $\sigma_i(g_i  z_i) = \pi_i(g_i) \sigma(z_i)$ for all $g_i \in G_i$, $z_i \in \R^{n_i}$. Then:
	\begin{enumerate}
		\item (Combined equivariance of activations) The action of $G=G_1\times \cdots \times G_L$ defined in \ref{eqn:multi-layer-action} is a symmetry of the parameter space.
		
		\item(Infinitesimal equivariant action)
		The action of $\g =\g_1 \times \cdots \times \g_L$ defined in \ref{eqn:multi-layer-infinitesimal} satisfies
		$\langle \nabla_\vtheta \L, M \cdot \vtheta \rangle$ for all $\vtheta \in \Par$ and all $M \in \g$.
	\end{enumerate}
\end{proposition}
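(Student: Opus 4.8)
The plan is to prove both parts of Proposition~\ref{prop:combined-0} by reducing to the two-layer equivariance statement \eqref{eq:sig-equivar} that has already been established, propagated through the layers by induction. For part (1), I would show the stronger claim that the \emph{full feedforward function} is unchanged by the action \eqref{eqn:multi-layer-action}; invariance of $\L$ is then immediate since $\L$ depends on $\vtheta$ only through $F_\vtheta$. Concretely, I would prove by induction on $i$ that the partial feedforward functions transform as $F_{g\cdot\vtheta,\,i}(x) = \pi_i(g_i)\, F_{\vtheta,i}(x)$ for all $x$. The base case $i=0$ holds because $F_0 = \mathrm{id}$ and $g_0 = \id_{n_0}$. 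For the inductive step, write
\begin{align*}
F_{g\cdot\vtheta,\,i}(x) &= \sigma_i\!\left( (g\cdot W_i)\, F_{g\cdot\vtheta,\,i-1}(x) + g\cdot b_i \right) \\
&= \sigma_i\!\left( g_i W_i \pi_{i-1}(g_{i-1}^{-1})\, \pi_{i-1}(g_{i-1})\, F_{\vtheta,i-1}(x) + g_i b_i \right) \\
&= \sigma_i\!\left( g_i \left( W_i F_{\vtheta,i-1}(x) + b_i \right) \right) = \pi_i(g_i)\, F_{\vtheta,i}(x),
\end{align*}
using the inductive hypothesis in the second line and the intertwining hypothesis $\sigma_i(g_i z) = \pi_i(g_i)\sigma_i(z)$ in the last. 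Taking $i=L$ and noting $g_L = \id_{n_L}$ gives $F_{g\cdot\vtheta} = F_\vtheta$, hence $\L(g\cdot\vtheta) = \L(\vtheta)$.

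For part (2), the cleanest route is to simply invoke part (1) together with Proposition~\ref{prop:inf-linear}: once we know the action of $G$ is a symmetry of $\L$ and is linear in $\vtheta$ (which \eqref{eqn:multi-layer-action} manifestly is, being built from left/right matrix multiplications), Proposition~\ref{prop:inf-linear} gives $\langle \nabla_\vtheta \L,\, M\cdot\vtheta\rangle = 0$ for every $M$ in the Lie algebra $\g$, where $M\cdot\vtheta$ is the infinitesimal action. It then only remains to check that the infinitesimal action coincides with the explicit formula \eqref{eqn:multi-layer-infinitesimal}; this is a one-line differentiation: writing $g_i = \exp_{M_i}(t)$ and differentiating $t\mapsto g\cdot W_i = \exp_{M_i}(t) W_i \pi_{i-1}(\exp_{M_{i-1}}(-t))$ at $t=0$ yields $M_i W_i - W_i\, d\pi_{i-1}(M_{i-1})$ by the product rule and the definition of the induced Lie algebra representation $d\pi_{i-1}$, and similarly $M_i b_i$ for the bias component.

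I would note (since the excerpt's statement of part (2) literally reads "satisfies $\langle\nabla_\vtheta\L, M\cdot\vtheta\rangle$") that the intended conclusion is the vanishing $\langle\nabla_\vtheta\L, M\cdot\vtheta\rangle = 0$, i.e.\ the multi-layer analogue of \eqref{eq:M-inf-act}. Alternatively, for a self-contained proof of part (2) that does not route through part (1) and Proposition~\ref{prop:inf-linear}, one can differentiate the identity $\L(\exp_M(t)\cdot\vtheta) = \L(\vtheta)$ directly at $t=0$ via the chain rule, which is exactly the computation in the proof of Proposition~\ref{prop:inf-linear}; the equivariance of each $\sigma_i$ is what guarantees the left side is genuinely constant in $t$.

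The only real subtlety — the "hard part," though it is mild — is bookkeeping the boundary conventions: that $g_0 = \id_{n_0}$ and $g_L = \id_{n_L}$ (and correspondingly $M_0 = M_L = 0$, $d\pi_0, d\pi_L$ trivial), so that the input is untouched and the output transformation is trivial, making $\L$ genuinely invariant rather than merely equivariant. One should also be careful that the representations $\pi_i$ are applied to the \emph{correct} layer's incoming activations — the twist $\pi_{i-1}(g_{i-1}^{-1})$ on $W_i$ must exactly cancel the $\pi_{i-1}(g_{i-1})$ coming from $F_{\vtheta,i-1}$ — which is precisely why the action is defined with the representation of the \emph{previous} hidden layer on the right-hand weight factor. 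Everything else is a routine induction plus a differentiation, and no genuinely new estimates or constructions are required beyond \eqref{eq:sig-equivar} and Proposition~\ref{prop:inf-linear}.
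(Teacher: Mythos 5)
Your proposal is correct and follows essentially the same route as the paper: an induction showing $F_{g\cdot\vtheta,\,i}(x)=\pi_i(g_i)F_{\vtheta,i}(x)$ via the intertwining property of $\sigma_i$, with $g_L=\id_{n_L}$ giving invariance of the loss, and part (2) obtained by invoking Proposition~\ref{prop:inf-linear}. Your extra check that differentiating the group action recovers \eqref{eqn:multi-layer-infinitesimal}, and your observation that the stated conclusion of part (2) should read $\langle\nabla_\vtheta\L, M\cdot\vtheta\rangle=0$, are both correct but do not change the argument.
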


\begin{proof} Let $\mathbf{g} = (g_i)_{i=1}^{L-1} \in G$, so that $g_i \in G_i \subseteq \GL_{n_i}$. As usual, set $g_0 = \id_{n_0}$ and $g_L = \id_{n_L}$. Also set $\pi_0$ and $\pi_L$ to be the identity maps on $\GL_{n_0}$ and $\GL_{n_L}$, respectively. Fix parameters $\vtheta$ and an input value $x\in \R^n$. We show by induction that the following relation between the partial feedforward functions holds:
	$$F_{\mathbf{g}\cdot \vtheta, i} (x) = \pi_i(g_i) ( F_{\vtheta, i} (x))$$
	for $i= 0, \dots, L$. The base step is trivial. For the induction step, we use the recursive definition of the partial feedforward functions:
	\begin{align*}
F_{\mathbf{g}\cdot \vtheta, i} (x) & = \sigma_i( g_iW_i \pi_{i-1}(g_{i-1}\inv) F_{\mathbf{g}\cdot \vtheta, i-1} (x)  + g_i b_i) \\
&=  \sigma_i( g_i ( W_i \pi_{i-1}(g_{i-1}\inv) \pi_{i-1}(g_{i-1})  F_{\vtheta, i-1} (x)  + b_i) \\
&= \pi_i(g_i) \sigma_i(  W_i F_{\vtheta, i-1} (x)  + b_i)  = \pi_i(g_i) F_{\vtheta, i}(x)
	\end{align*}
Hence $\vtheta$ and $g \cdot \vtheta$ define the same feedforward function. Since the loss function depends on the parameters only through the feedforward function, the first claim follows. The second claim is a consequence of Proposition \ref{prop:inf-linear}.
\end{proof}

Note that two-layer case of the above result amounts to \eqref{eq:sig-equivar} (the argument can be simplified in that case). 
While Proposition \ref{prop:combined-0} is stated for feedforward networks, it can easily be adopted to more general settings, such as networks with skip connections and quiver neural networks. Denoting the Jacobian of $\sigma_i : \R^{n_i} \to \R^{n_i}$ at $z\in \R^{n_i}$ by  $d(\sigma_i)_z \in \R^{n_i \times n_i}$, 
the infinitesimal form of 
version of $\sigma_i(gz) = \pi_i(g)\sigma_i(z)$ is 
\begin{align}
\bk{d(\sigma_i)_z, Mz} = d\pi_i(M) \sigma_i(z) \qquad \forall M \in \gl_{n_i}
\label{eq:inf-pi-sig-ap}
\end{align}
When the activation is pointwise, we have $Mz\odot \sigma_i'(z) 
= d(\pi)_i (M) \sigma_i(z)$, where $\odot$ denotes elementwise multiplication. We now illustrate Proposition \ref{prop:combined-0} through examples in the two-layer case with input dimension $n$, hidden dimension $h$, hidden activation $\sigma$, and output dimension $m$.

\begin{example}[Linear networks]
	For linear networks, we have $\sigma(x) = x$. One can take $\pi(g)= g$ and $G=\GL_{h}(\R)$.
\end{example}

\begin{example}[Homogeneous activations]
	Suppose the activation $\sigma : \R^h \to \R^h$ is  {\it homogeneous}, so that (1) $\sigma$ is applied pointwise in the standard basis, and (2) t there exists $\alpha>0$ such that  $\sigma(c z) = c^\alpha \sigma(z)$ for all $c\in \R_{>0}$ and $z \in \R^h$. 
	These $\sigma$ are equivariant under the {\it positive scaling group} $G\subset\GL_h$ consisting of diagonal matrices with positive diagonal entries. 
	For $g \in G$, we have $g = \mathrm{diag}(\mathbf{c})$ is a diagonal matrix with $\mathbf{c}= (c_1, \dots, c_h)\in \R_{>0}^h$. 
	For $z= (z_1, \dots, z_h) \in \R^h$, we have $\sigma(gz) = \sum_j \sigma(c_j z_j) = \sum_j c_j^\alpha \sigma(z_j) = g^\alpha \sigma(z)$.
	Hence, the equivariance condition holds with  $\pi(g)=g^\alpha$. Since $d\pi(M) = \alpha M $ for any element $M$ of the Lie algebra $\g$ of $G$, the infinitesimal version of rescaling invariance of homogeneous $\sigma$ becomes $ Mz\odot \sigma'(z) = \alpha M \sigma(z)$.
\end{example}

\begin{example}[LeakyReLU]
	This is a special case of homoegeneous activation, 
	defined as  $\sigma(z) = \max(z,0)- s \min(z,0)$, with $s\in \R_{>0}$.
	We have $\alpha = 1$, and $\pi(g)=g$. 
	Since $\sigma(z) = z\sigma'(z)$, infinitesimal equivariance becomes $Mz\odot \sigma'(z) = M\sigma(z)$. 
\end{example}

\begin{example}[Radial rescaling activations]
	A less trivial example of continuous symmetries is the case of a radial rescaling activation \citep{ganev2021universal} where for $z\in \R^h\setminus\{0\}$,  $\sigma(z) = f(\|z\|)z $ for some function $f : \R\to \R$. 
	Radial rescaling activations are  equivariant under rotations of the input: for any orthogonal transformation $g\in O(h)$ (that is, $g^Tg=I$) we have $\sigma(gz) = g\sigma(z)$ for all $z \in \R^h$. Indeed, 
	$\sigma(gz) = f(\|gz\|) (gz) = g ( f(\|z\|) z) = g \sigma(z),$ where we use the fact that $\|gz\| = z^T g^T gz = z^T z = \|z\|$ for $g \in O(h)$. 
	Hence, \eqref{eq:sig-equivar} is satisfied with $\pi(g)=g$. 
\end{example}



\subsection{Linear symmetries lead to extended, flat minima}\label{ap:flat-minima}

In this section, we show that, in the case of a linear group action,  applying the action of any element of the group to a local minimum yields another local minimum.  This fact is a corollary of a more general result; in order to describe it and remove ambiguity, we include the following clarifications.
Let $G$ be a matrix Lie group acting as a linear symmetry. Fix a basis $(\theta_1, \dots, \theta_d)$ of the parameter space. The gradient $\nabla_\vtheta \L$ of the loss $\L$ at a point $\vtheta \in \Par$ in the parameter space as another vector in $\Par \simeq \R^d$, whose $i$-th coordinate is the partial derivative $\frac{\partial \L}{\partial \theta_i} \big |_\vtheta$. Hence, it makes sense to apply the group action to the gradient: $g \cdot \nabla_\vtheta \L$. We regard vectors in $\Par \simeq \R^d$ as column vectors with $d$ rows. Thus, the transpose of any vector is a row vector with $d$ columns. In the case of the gradient, its transpose at $\vtheta$ matches the {\it Jacobian} $d\L_\vtheta \in \R^{1 \times d}$ of $\L$  (see Appendix  \ref{appsubsec:Jacobian}), that is: $d\L_\vtheta = (\nabla_\vtheta \L)^T$. Alternative notation for the Jacobian is $d\L_{\vtheta_0} = \frac{\partial \L}{\partial \vtheta} \big|_{\vtheta_0}$, where we now use $\vtheta$ as a dummy variable and $\vtheta_0 \in \Par$ as a specific value.  As noted above, we are interested in matrix Lie groups $G \subseteq \GL_d(\R) = \GL(\Par)$, and assume that the matrix transpose $g^T$ belongs to $G$ for any $g \in G$. These assumptions hold in all examples of interest. We have the following reformulation of Proposition \ref{prop:grad-equivariance}:

\begin{proposition}\label{prop:nabla-equivariance}
	Suppose the action of $G$ on the parameter space is linear and leaves the loss invariant. Then the gradients of $\L$ at any  $\vtheta_0$ and $g \cdot \vtheta_0$ are related as follows: \
	\begin{equation}
	g^T \cdot \nabla_{g \cdot \vtheta_0} \L = \nabla_{\vtheta_0} \L \qquad \forall g \in G, \ \forall \vtheta_0 \in \Par
	\label{eq:g-L-minima}
	\end{equation}
If $\vtheta^*$ is a critical point (resp. local minimum) of $\L$, then so is $g \cdot \vtheta^*$.
\end{proposition}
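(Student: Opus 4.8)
The plan is to differentiate the invariance identity $\L(g\cdot\vtheta) = \L(\vtheta)$ of Definition \ref{def:sym} with respect to $\vtheta$ and use the chain rule. Fix $g\in G$ and write $\phi_g : \Par\to\Par$ for the map $\vtheta\mapsto g\cdot\vtheta$; since the action is linear, $\phi_g$ is a linear isomorphism whose matrix is $g\in\GL_d(\R)$, and its differential at every point is $g$ itself (as recalled in Appendix \ref{appsubsec:Jacobian}, a linear map equals its own Jacobian). The invariance says $\L\circ\phi_g = \L$ as functions on $\Par$. Taking Jacobians at an arbitrary point $\vtheta_0$ and applying the chain rule gives $d\L_{\phi_g(\vtheta_0)}\circ d(\phi_g)_{\vtheta_0} = d\L_{\vtheta_0}$, i.e. $d\L_{g\cdot\vtheta_0}\, g = d\L_{\vtheta_0}$ as an identity of row vectors in $\R^{1\times d}$.

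Next I would transpose this relation. Using $d\L_\vtheta = (\nabla_\vtheta\L)^T$, transposition of $d\L_{g\cdot\vtheta_0}\, g = d\L_{\vtheta_0}$ yields $g^T (\nabla_{g\cdot\vtheta_0}\L) = \nabla_{\vtheta_0}\L$. Since $G$ is assumed closed under transpose, $g^T\in G$, so the left-hand side is exactly the group action $g^T\cdot\nabla_{g\cdot\vtheta_0}\L$ applied to the gradient vector, and we recover \eqref{eq:g-L-minima}. This is essentially the whole computation; there is no real obstacle, only bookkeeping about whether one regards the gradient as a column vector acted on by $g^T$ or the Jacobian as a row vector acted on by $g$ on the right.

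For the final clause, suppose $\vtheta^*$ is a critical point, so $\nabla_{\vtheta^*}\L = 0$. I want to show $\nabla_{g\cdot\vtheta^*}\L = 0$ as well. Apply \eqref{eq:g-L-minima} with $\vtheta_0 = \vtheta^*$: it gives $g^T\cdot\nabla_{g\cdot\vtheta^*}\L = \nabla_{\vtheta^*}\L = 0$; since $g^T$ acts invertibly (it lies in $\GL_d(\R)$), this forces $\nabla_{g\cdot\vtheta^*}\L = 0$, so $g\cdot\vtheta^*$ is a critical point. For the local minimum statement, note $\phi_g$ is a homeomorphism of $\Par$ taking $\vtheta^*$ to $g\cdot\vtheta^*$, and $\L\circ\phi_g = \L$; hence if $\L(\vtheta)\ge\L(\vtheta^*)$ for all $\vtheta$ in a neighborhood $N$ of $\vtheta^*$, then for $\vtheta'$ in the neighborhood $\phi_g(N)$ of $g\cdot\vtheta^*$ we have $\L(\vtheta') = \L(\phi_g^{-1}(\vtheta')) \ge \L(\vtheta^*) = \L(g\cdot\vtheta^*)$, so $g\cdot\vtheta^*$ is a local minimum. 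The only subtlety worth stating explicitly is that one uses the transpose-closure of $G$ only to phrase \eqref{eq:g-L-minima} in terms of the action; the critical-point and minimum conclusions already follow directly from invertibility of $\phi_g$ and $\L\circ\phi_g=\L$, so they hold even without that assumption.
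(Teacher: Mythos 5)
Your argument is correct and follows essentially the same route as the paper's proof: differentiate the invariance $\L\circ\phi_g=\L$ via the chain rule, use linearity of the action to identify the differential of $\phi_g$ with $g$, and transpose to obtain \eqref{eq:g-L-minima}, with the critical-point and local-minimum claims then following from invertibility of the action and invariance of $\L$. Your explicit neighborhood argument for the local-minimum case and the remark that transpose-closure is only needed to phrase the identity as a group action are slightly more detailed than the paper's sketch, but they are not a different approach.
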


\begin{proof}[Sketch of proof] Let $T_g : \Par \to \Par$ be the transformation corresponding to $g \in G$. The the Jacobian 
	$d\L_{\vtheta_0}$ is given by:
	$$ 
	d\L_{\vtheta_0} = \frac{\partial \L}{\partial \vtheta} \bigg |_{\vtheta_0} = \frac{\partial (\L \circ T_g)}{\partial \vtheta} \bigg |_{\vtheta_0} = \frac{\partial \L}{\partial \vtheta} \bigg |_{g \cdot \vtheta_0} \frac{\partial T_g}{\partial \vtheta} \bigg |_{\vtheta_0}   = d\L_{g \cdot \vtheta_0}  \circ T_g$$
	where we use the definition of the Jacobian, the invariance of the loss ($\L \circ T_g = \L$), the chain rule, and  the linearity of the action. The result follows from applying $T_{g\inv}$ on the right to both sides, and taking transposes  (see Appendix \ref{appsubsec:Jacobian}). 
	The last statement follows from the invariance of $\L$ under the action of $G$, and the fact that $\grad_{\vtheta^*} \L= 0$ at a critical point $\vtheta^*$ of $\L$.
\end{proof}

We conclude that, if $\vtheta^*$ is a critical point, then the set $\{ g \cdot \vtheta \ | \ g \in G\}$ belongs to the critical locus. This set is known as the {\it orbit} of $\vtheta$ under the action of $G$, and is isomorphic to the quotient $G/\mathrm{Stab}_G(\vtheta)$, where $\mathrm{Stab}_G(\vtheta) = \{ g \in G \ | \ g \cdot \vtheta = \vtheta\}$ is the {\it stabilizer} subgroup of $\vtheta$ in $G$. In the case of a linear action, the orbit is  a smooth manifold. While the results above imply that the critical locus is a union of $G$-orbits, they do not imply, in general, that the critical locus is a single $G$-orbit. They also do  not rule out the case that the stabilizer is a somewhat `large' subgroup of $G$, in which case the orbit would have low dimension.  However, in many cases, there is a topologically dense subset of parameter values $\vtheta \in \Par$ whose orbits all have the same dimension. We call such an orbit a `generic' orbit. We now turn our attention to examples of  two-layer networks where such a generic orbit exists.

\subsubsection{Flat directions in the two-layer case}

Recall that the parameter space of a two-layer network is $\Par =\R^{m \times h} \times \R^{h \times n}$, where the dimension vector is $(m,h,n)$, and we write elements as $(U,V)$. The action of $\GL_h$ is $g \cdot (U,V) = (Ug\inv, gV)$. 
Let $\Par^\circ \subseteq \Par$ be the subset of pairs $(U,V)$ where each of $U$ and $V$ have full rank. This is an open dense subset of $\Par$, and is preserved by the $\GL_h$-action.

\begin{proposition}
	\label{prop:GL-orbit-dim}
	The $\GL_h$-orbit of each element of  $\Par^\circ$ has dimension $$\dim(\mathrm{Orbit}) = h^2 - \max(0, h-n) \max(0,h-m).$$ 
\end{proposition}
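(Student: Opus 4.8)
The plan is to compute the dimension of the orbit of a generic pair $(U,V) \in \Par^\circ$ via the orbit-stabilizer relation $\dim(\mathrm{Orbit}) = \dim(\GL_h) - \dim(\mathrm{Stab}_{\GL_h}(U,V)) = h^2 - \dim(\mathrm{Stab})$, so everything reduces to computing the dimension of the stabilizer subgroup $\mathrm{Stab}_{\GL_h}(U,V) = \{ g \in \GL_h \mid Ug\inv = U, \ gV = V\}$. Since $g$ is invertible, the condition $Ug\inv = U$ is equivalent to $Ug = U$, i.e.\ $U(g - I) = 0$, so $\mathrm{Im}(g-I) \subseteq \ker(U)$; and $gV = V$ means $(g-I)V = 0$, i.e.\ $\mathrm{Im}(V) \subseteq \ker(g - I)$. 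So writing $N = g - I$, the stabilizer is in bijection (as a variety) with the linear space of $h \times h$ matrices $N$ with $\mathrm{Im}(N) \subseteq \ker(U)$ and $\mathrm{row space}(N) \perp \cdots$ — more precisely $N V = 0$ — intersected with the open condition $\det(I+N)\neq 0$. Because $\mathrm{Stab}$ is a group (hence a smooth manifold) and its identity component is an open subset of this linear space, $\dim(\mathrm{Stab})$ equals the dimension of that linear space of matrices $N$.

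The key computation is therefore: the dimension of $\{ N \in \R^{h\times h} \mid \mathrm{Im}(N) \subseteq \ker U, \ N V = 0\}$. For $(U,V) \in \Par^\circ$, $U$ has full rank, so $\dim \ker U = h - \min(h,m) = \max(0, h-m)$; and $V$ has full rank, so $\dim \mathrm{Im}(V) = \min(h,n)$, whence the constraint $NV = 0$ forces $N$ to vanish on an $\min(h,n)$-dimensional subspace, leaving it free on a complementary $h - \min(h,n) = \max(0,h-n)$-dimensional subspace. Thus $N$ is determined by an arbitrary linear map from a space of dimension $\max(0,h-n)$ into a space of dimension $\max(0,h-m)$, giving $\dim(\mathrm{Stab}) = \max(0,h-n)\cdot\max(0,h-m)$. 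Subtracting from $h^2$ yields the claimed formula. One should double-check the two constraints are "independent" in the sense that the space of such $N$ genuinely has this product dimension; this holds because, after choosing bases adapted to $\ker U$ and to $\mathrm{Im}(V)$ (the latter being possible generically, and in fact $\ker V = 0$ so there is no interaction on the domain side), $N$ is block-structured with exactly one free block of size $\max(0,h-m) \times \max(0,h-n)$.

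I expect the main obstacle to be bookkeeping rather than conceptual: making sure the full-rank hypotheses are used correctly to pin down $\dim \ker U$ and $\mathrm{rank}\,V$, and verifying that no further genericity beyond full rank of $U$ and $V$ separately is needed (e.g.\ one does not need $\ker U$ and $\mathrm{Im} V$ to be in general position, since the domain and codomain constraints on $N$ decouple). A secondary point worth stating carefully is why $\dim(\mathrm{Stab})$ — a priori the dimension of a Lie subgroup — equals the dimension of the linear space cut out by $NV=0$ and $\mathrm{Im}(N)\subseteq\ker U$: this is because $N \mapsto I+N$ identifies a neighborhood of $0$ in that linear space with a neighborhood of the identity in $\mathrm{Stab}$, and the linear space is exactly the Lie algebra $\{ M \in \gl_h \mid MV = 0, \ UM = 0\}$ of the stabilizer. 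Finally, since this linear space and hence the orbit dimension is independent of the choice of $(U,V)\in\Par^\circ$, every orbit in $\Par^\circ$ has the stated dimension, so it is indeed the generic orbit dimension; comparison with the first row of the table in Theorem (Identity activation) then confirms consistency, since $h^2 - \max(0,h-n)\max(0,h-m) = h(n+m)-nm$ when $h \geq \max(n,m)$.
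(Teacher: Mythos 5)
Your proof is correct, and the key computation is genuinely different from the paper's. Both arguments reduce to orbit--stabilizer, $\dim(\mathrm{Orbit}) = h^2 - \dim(\mathrm{Stab}_{\GL_h}(U,V))$, but the paper computes the stabilizer dimension by a case split: for $h \leq \max(n,m)$ it shows the stabilizer is trivial using a one-sided inverse of $U$ or $V$, and for $h > \max(n,m)$ it moves $(U,V)$ to normal forms $U_0 = [\,\id_m \ 0\,]$, $V_0 = [\,\id_n ; 0\,]$ via elements $g_1, g_2$ in the identity component, argues that conjugation preserves the dimension of the intersected stabilizers (a step the paper justifies only by a path-continuity footnote), and then intersects two explicit block-matrix stabilizers to get $(h-n)(h-m)$. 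You instead describe the stabilizer at an arbitrary $(U,V) \in \Par^\circ$ directly: writing $g = I + N$, the conditions $Ug\inv = U$, $gV = V$ become the linear constraints $UN = 0$, $NV = 0$, so the stabilizer is a nonempty open subset of the linear space $\{N \in \R^{h\times h} \mid \mathrm{Im}(N) \subseteq \ker U,\ N|_{\mathrm{Im}(V)} = 0\} \cong \mathrm{Hom}(\R^h/\mathrm{Im}(V), \ker U)$, whose dimension $\max(0,h-n)\max(0,h-m)$ follows from the full-rank hypotheses alone; this handles all cases uniformly, avoids the normal-form and conjugation step entirely, and in addition identifies the Lie algebra of the stabilizer (hence the tangent space of the orbit), which is conceptually cleaner. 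What the paper's normal-form computation buys in exchange is an explicit block description of the stabilizer at $(U_0,V_0)$ that it reuses in the subsequent corollary, where the same stabilizer is intersected with $T_+(h)$ and $O(h)$ to get the orbit dimensions for homogeneous and radial rescaling activations; with your approach those cases would be handled by repeating the Lie-algebra computation inside the corresponding subalgebras, which is doable but not automatic from what you wrote.
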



\begin{proof}
	Fix $(U,V) \in\Par^\circ$. 
	Suppose $h \leq n$, so that $h^2 - \max(0, h-n) \max(0,h-m) = h^2$. Then $V \in \R^{h \times n}$ defines a surjective linear map, so has a right inverse $V^\dagger \in \R^{n \times h}$. If $g \in \GL_h$ belongs to the stabilizer of $(U,V)$, then we have $gV = V$. Applying $V^\dagger$ on the right to both sides, we obtain  $g = \id_h$. Thus, the stabilizer of $(U,V)$ is trivial, and the orbit has dimension equal to the dimension of the group, namely $h^2$. The case $h \leq m$ is similar. 
	
	Now suppose $h > \max(n,m)$. In this case,  $h^2 - \max(0, h-n) \max(0,h-m) = h(n+m) - nm$. Set $U_0 = \begin{bmatrix}
	\id_m & 0
	\end{bmatrix}\in \R^{m \times h}$ and $V_0 = \begin{bmatrix}
	\id_n \\ 0
	\end{bmatrix} \in \R^{h \times n}$, so that the last $h-m$ rows of $U_0$ are zero, and the last $h-n$ columns of $V_0$ are zero. 
	Then, by the rank assumption, there exists $g_1 \in \GL_h$ such that $Ug_1 = U_0$, and there exists $g_2 \in \GL_h$ such that $g_2\inv V = V_0$. Without loss of generality, we can take $g_1$ and $g_2$ such that $\det(g_1) >0$ and $\det(g_2) >0$. Thus, both $g_1$ and $g_2$ belong to the component of the identity in $\GL_h$. 
	
	Next, consider the action of $G =\GL_h$ on full rank matrices in $\R^{m \times h}$  and $\R^{h \times n}$ individually. We have that $\mathrm{Stab}_G (U) = g_1 \mathrm{Stab}_G (U_0) g_1\inv $ and $\mathrm{Stab}_G (V) = g_2 \mathrm{Stab}_G (V_0) g_2\inv $. The stabilizer in $\GL_h$ of the pair $(U,V) \in \Par^\circ$ can be written as:
	\begin{align}
	\mathrm{Stab}_G(U,V) & = \{ g \in G \ | \ Ug\inv = U \ \text{and} \ gV = V\} \\
	& =  \mathrm{Stab}_G (U)  \cap \mathrm{Stab}_G (V) \\
	&  =(g_1 \mathrm{Stab}_G (U_0) g_1\inv )\cap ( g_2 \mathrm{Stab}_G(V_0) g_2\inv)
	\end{align}
	Since $g_1$ and $g_2$ belong to the connected component of the identity, the dimension of $(g_1 \mathrm{Stab}_G (U_0) g_1\inv )\cap ( g_2 \mathrm{Stab}_G(V_0) g_2\inv)$ is equal to the dimension\footnote{Explicitly, fix a continuous path $\gamma_i : [0,1]$ such that $\gamma_i(0)$ is the identity in $G$ and $\gamma_i(1) =g_i$, for $i = 1,2$. The dimension of $(\gamma_1(t)  \mathrm{Stab}_G (U_0) \gamma_1(t)\inv )\cap ( \gamma_2(t) \mathrm{Stab}_G(V_0) \gamma_2(t)\inv) $ is constant along this path.} of $\mathrm{Stab}_G (U_0) \cap \mathrm{Stab}_G(V_0)$. 
	Hence we reduce the problem to computing the  dimension of $\mathrm{Stab}_G (U_0) \cap \mathrm{Stab}_G(V_0)$.
	
	To this end, observe that a matrix $g$ belongs to  $\mathrm{Stab}_G (U_0)$ (resp.\ $\mathrm{Stab}_G (V_0)$) if and only if is of the form:
	$$g = \begin{bmatrix}
	\id_m & 0 \\ * & * 
	\end{bmatrix} \qquad (\text{\rm resp.} \quad g = \begin{bmatrix}
	\id_n & * \\ 0  & * 
	\end{bmatrix} )$$
	where the lower left $* \in \R^{(h-m)\times m }$ and  the lower right $* \in \GL_{h-m}$ (resp.\ upper right $* \in \R^{n\times (h-n) }$ and  lower right $* \in \GL_{h-n}$) are arbitrary. 
	If $m \geq n$, taking the intersection amounts to considering matrices of the form:
	$$g = \begin{bmatrix}
	\id_{n} &  0 & 0 \\ 0  & \id_{m-n} & 0 \\ 0 & * & *  
	\end{bmatrix}$$
	where the rows and columns are divided according to the partition $h = n + (m-n) + (h-m)$. If $m \geq n$, taking the intersection amounts to considering matrices of the form:
	$$g = \begin{bmatrix}
	\id_{m} &  0 & 0 \\ 0  & \id_{n-m} & * \\ 0 & 0 & *  
	\end{bmatrix}$$
	where the rows and columns are divided according to $h = m + (n-m) + (h-n)$. In both cases, the dimension of the intersection is $(h-n)(h-m) = h^2 -hn - hm +nm$. We obtain the dimension of the orbit as: $h^2 - (h-n)(h-m)  = h(n+m) - nm.$
\end{proof}



Recall that the symmetry group for homogenous activations is  the coordinate-wise positive rescaling subgroup of $\GL_h$, consisting of diagonal matrices with positive entries along the diagonal. We denote this subgroup as $T_+(h)$. Similarly, the symmetry group for radial rescaling activation is the orthogonal group $O(h)$. For linear networks, the activation is the identity function, so the symmetry group is all of $\GL_h$.

\begin{corollary} The orbit of a point in $\Par^\circ$ under the appropriate symmetry group is given by:
	\begin{center}
		\begin{tabular}{ cc|cccc|cc } 
			
			\text{\rm Type of activation} &  \qquad & \qquad & 			\text{\rm Symmetry group}	 &  \qquad & \qquad & 		\text{\rm Dimension of generic orbit} \\ 
			\hline
			\text{\rm Linear}  & &  &		 $\GL_h$ & &  & 
			$h^2 - \max(0, h-n)\max(0,h-m)$  \\ 
			&& & & & \\
			\text{\rm Homogeneous} & &  & 	$T_+(h)$& &  & $\min(h, \max(n,m))$ \\ 
			& & & & & \\
			\text{\rm Radial rescaling}	 & &  &  	$O(h)$ & & & $\begin{cases}
			{h \choose 2} \ & \text{\rm if $h \leq \max(n,m)$} \\
			{h \choose 2} - {h - \max(m,n) \choose 2} \ & \text{\rm otherwise}
			\end{cases}$ \\ 
		\end{tabular}
	\end{center}
\end{corollary}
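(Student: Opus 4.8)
The goal is to compute the dimension of the generic orbit, i.e., the orbit of a point $(U,V)$ lying in the open dense subset $\Par^\circ$ where both $U$ and $V$ have full rank, under each of the three symmetry groups $\GL_h$, $T_+(h)$, and $O(h)$. By the orbit-stabilizer theorem, for a smooth linear action the orbit is isomorphic to $G/\mathrm{Stab}_G(U,V)$, so $\dim(\mathrm{Orbit}) = \dim G - \dim \mathrm{Stab}_G(U,V)$. Thus the whole problem reduces to computing the dimension of the stabilizer subgroup in each case. The $\GL_h$ case is exactly Proposition \ref{prop:GL-orbit-dim}, so I would simply invoke that; the work is in the other two groups.

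The plan is to treat each group in turn, following the same strategy as in the proof of Proposition \ref{prop:GL-orbit-dim}: (i) when $h \le \max(n,m)$, at least one of $U, V$ (say $V$, when $h \le n$) has a one-sided inverse, so $gV = V$ forces $g = \id_h$ inside $\GL_h$ — but now we intersect with the subgroup $G$, which changes nothing since the stabilizer is already trivial; hence the orbit has full dimension $\dim G$, which is $h^2$, $h$, or $\binom{h}{2}$ respectively. (ii) when $h > \max(n,m)$, reduce to the normal forms $U_0 = [\,\id_m \ 0\,]$ and $V_0 = \left[\begin{smallmatrix} \id_n \\ 0\end{smallmatrix}\right]$. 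The subtlety for $T_+(h)$ and $O(h)$ is that, unlike $\GL_h$, we cannot always move $(U,V)$ to $(U_0, V_0)$ by an element of $G$; however we can still describe $\mathrm{Stab}_G(U,V) = \mathrm{Stab}_G(U) \cap \mathrm{Stab}_G(V)$ directly. For $T_+(h)$: a positive diagonal $g$ fixes $V$ iff its first $n$ diagonal entries are $1$ (since $V$ has full rank $n$, its rows span, forcing the diagonal scaling to be trivial on the corresponding coordinates after accounting for the row support), and symmetrically fixes $U$ iff its first $m$ entries are $1$; the intersection has the first $\max(n,m)$ entries equal to $1$ and the remaining $h - \max(n,m)$ free, giving $\dim \mathrm{Stab} = h - \max(n,m)$ and orbit dimension $h - (h-\max(n,m)) = \max(n,m)$, i.e. $\min(h, \max(n,m))$ uniformly. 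For $O(h)$: an orthogonal $g$ with $gV = V$ must fix the column space of $V$ pointwise (which has dimension $n$), hence acts as an arbitrary orthogonal transformation on the orthogonal complement, so $\mathrm{Stab}_{O(h)}(V) \cong O(h-n)$; similarly $\mathrm{Stab}_{O(h)}(U) \cong O(h-m)$ acting on the orthocomplement of the row space of $U$. The intersection is $O(h - \max(n,m))$ when the generic configuration makes these complementary subspaces as transverse as possible, giving $\dim \mathrm{Stab} = \binom{h - \max(n,m)}{2}$ and orbit dimension $\binom{h}{2} - \binom{h-\max(n,m)}{2}$.

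Unifying the two regimes: note $\binom{h}{2} - \binom{h - \max(n,m)}{2}$ equals $\binom{h}{2}$ precisely when $h \le \max(n,m)$ (since then $h - \max(n,m) \le 0$ and the second binomial is zero by convention), and likewise $\min(h, \max(n,m))$ already encodes both cases for the homogeneous group, and $h^2 - \max(0,h-n)\max(0,h-m)$ for the linear group. So the table entries are just the case-free rewritings, and the corollary follows by assembling the three computations.

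The main obstacle is the $O(h)$ case in the regime $h > \max(n,m)$: one must verify that for a \emph{generic} full-rank pair $(U,V)$, the orthogonal stabilizers $\mathrm{Stab}_{O(h)}(U) \cong O(\mathrm{rowspace}(U)^\perp)$ and $\mathrm{Stab}_{O(h)}(V) \cong O(\mathrm{colspace}(V)^\perp)$ intersect in a group of the minimal possible dimension $\binom{h-\max(m,n)}{2}$ — equivalently, that $\mathrm{rowspace}(U)^\perp$ and $\mathrm{colspace}(V)^\perp$ are in general position, so their sum is as large as possible and the subgroup of $O(h)$ fixing both their complements is $O$ of the largest common "free" subspace. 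This is a transversality argument: the set of pairs whose complementary subspaces fail to be generic is a proper Zariski-closed (hence measure-zero, non-dense-complement) subset of $\Par^\circ$, so the generic orbit dimension is as claimed. The constancy of the stabilizer dimension along a path connecting $(U,V)$ to the normal form (as used already in Proposition \ref{prop:GL-orbit-dim} via conjugation by a path in the identity component) handles the remaining bookkeeping; for $O(h)$ one restricts to $SO(h)$, the identity component, which does not affect dimensions.
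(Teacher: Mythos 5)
Your overall skeleton (orbit--stabilizer, reduce everything to computing stabilizer dimensions, quote Proposition \ref{prop:GL-orbit-dim} for the $\GL_h$ row) is the same as the paper's, but the step where you actually compute the $T_+(h)$ and $O(h)$ stabilizers at a \emph{generic} point is wrong as argued, and the error is not cosmetic. For $T_+(h)$: a positive diagonal $g$ satisfies $gV=V$ iff $g_{ii}=1$ for every $i$ whose $i$-th row of $V$ is nonzero; a generic full-rank $V$ has \emph{all} rows nonzero, so the stabilizer of a generic pair in $T_+(h)$ is trivial, not $(h-\max(n,m))$-dimensional. The $(h-\max(n,m))$-dimensional stabilizer you write down is that of the special pair $(U_0,V_0)$, whose last $h-n$ rows of $V_0$ and last $h-m$ columns of $U_0$ vanish. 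For $O(h)$: an orthogonal $g$ fixing both $\mathrm{colspace}(V)$ and $\mathrm{rowspace}(U)$ pointwise fixes their \emph{sum} pointwise, so the stabilizer is the orthogonal group of the complement of that sum, namely of $\mathrm{rowspace}(U)^{\perp}\cap\mathrm{colspace}(V)^{\perp}$; in general position this intersection has dimension $\max(0,\,h-n-m)$, giving $O(\max(0,h-n-m))$, whereas $O(h-\max(n,m))$ arises exactly when one of the two subspaces contains the other --- the maximally \emph{aligned} configuration realized at $(U_0,V_0)$, i.e.\ the opposite of the transversality you invoke. So your ``general position'' justification is internally inconsistent with the numbers you claim it produces: carried out honestly it yields a smaller stabilizer, hence a larger orbit dimension than the table entry whenever $h>\max(n,m)$ (e.g.\ $h=5$, $n=m=2$ gives generic orbit dimension ${5\choose 2}=10$ rather than $10-{3\choose 2}=7$).

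The paper's proof does something different at precisely this point: it makes no genericity claim inside the subgroup, but takes the explicit $\GL_h$-stabilizer of $(U_0,V_0)$ already determined in the proof of Proposition \ref{prop:GL-orbit-dim} and intersects it with $T_+(h)$, resp.\ $O(h)$, reading off dimensions $\max(0,h-\max(n,m))$ and ${h-\max(n,m)\choose 2}$, then subtracts from $\dim T_+(h)=h$, resp.\ $\dim O(h)={h\choose 2}$. Your final bookkeeping step --- transferring from $(U_0,V_0)$ to an arbitrary point of $\Par^\circ$ via ``constancy of the stabilizer dimension along a path'' --- also fails for the subgroups: the elements $g_1,g_2\in\GL_h$ bringing $(U,V)$ to the normal form need not lie in (or normalize) $T_+(h)$ or $O(h)$, so the conjugation argument of Proposition \ref{prop:GL-orbit-dim} is unavailable, and the computations above show the stabilizer dimension genuinely jumps between $(U_0,V_0)$ and an all-rows/columns-nonzero pair. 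To repair your route you must either compute at the normal form as the paper does (stating explicitly that this is the point whose orbit the table describes), or carry out the true generic-position count --- in which case you will not reproduce the stated entries for $h>\max(n,m)$.
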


\begin{proof}
	Adopt the notation of the proof of the above Proposition. The stabilizer in $T_+(h)$ of $(U_0,V_0)$ is the intersection of the stabilizer in $\GL_h$ of $(U_0,V_0)$ and $T_+(h)$. This intersection is easily seen to have dimension $\max(0,h- \max(n,m))$. Subtracting this from $\dim(T_+(h)) = h$, we obtain the result for the homogeneous case. For the orthogonal case, the stabilizer in $O(h)$ of $(U_0,V_0)$ is the intersection of the stabilizer in $\GL_h$ of $(U_0,V_0)$ and $O(h)$. This intersection  has  dimension $0$ if $h \leq \max(n,m)$ and ${h - \max{n,m} \choose 2}$ otherwise.  Subtracting from $\dim(O(h)) = {h\choose 2}$, we obtain the result for the radial rescaling case.
\end{proof}



\subsection{Conserved quantities}

We now turn our attention to gradient flow and conserved quantities.
In this section, we give a formal definition of a conserved quantity. Let $\calV = \R^d$ be the standard vector space of dimension $d$. Suppose $L : \calV \to \R$ is a differentiable function. 
Let $\mathrm{Flow}_t : \calV \to \calV$ be the flow for time $t$ along the reverse gradient vector field, so that:
$$\frac{d}{dt} \bigg|_0 \left[ t \mapsto \mathrm{Flow}_t(v)\right] = - \nabla_v \L$$
Note that  $\mathrm{Flow}_0$ is the identity on $\calV$, and, for any $s,t$, the composition of  $\mathrm{Flow}_s$ and $\mathrm{Flow}_t$ is $\mathrm{Flow}_{s+t}$. We will write $v(t)$ for $\mathrm{Flow}_t(v)$,  so that $\dot{v}(s)= - \nabla_{v(s)} \L$. A {\it conserved quantity} is a function $Q : \calV \to \R$ that satisfies either of the following equivalent conditions:
\begin{enumerate}
	\item For any $t$, we have $Q(v(t))= Q(v)$.
	\item Let $\dot{Q}(v) =  \frac{d}{dt} \big|_0 [ t\mapsto Q(v(t)) ]$ be the derivative of $Q$ along the flow. Then $\dot Q \equiv 0$.
	\item The gradients of $Q$ and $\L$ are point-wise orthogonal, that is, $\langle \nabla_v Q, \nabla_v \L \rangle = 0$ for all $v \in \calV$. 
\end{enumerate}
The equivalence of $(1)$ and $(2)$ are immediate.  To show  the equivalence of the third and second statements,  let $v \in \calV$ and  compute:
$$\langle \nabla_v Q ,  \nabla_v \L \rangle = dQ_{v(0)} \circ  \nabla_v \L = dQ_{v(0)} \circ \frac{d}{dt} \bigg |_0 v(t)  =  \frac{d}{dt} \left( Q \circ v(t)\right),$$
where we use the definition of the flow in the second equality, and the chain rule in the third.


We note that, if $f : \R \to \R$ is any function, and $Q$ is a conserved quantity, the $f \circ Q$ is also a conserved quantity. Additionally, any linear combination of conserved quantities is again a conserved quantity. Let $\mathrm{Conserv}(\calV,L)$ denote the vector space of conserved quantities for the gradient flow of $L :\calV \to \R$. For any $v \in \calV$, there a map:
$$ \nabla_v  : \mathrm{Conserv}(\calV, L) \to T_v\calV = \R^d, \qquad Q \mapsto \nabla_v Q$$
taking a conserved quantity to the value of its gradient at $v$. By the above discussion, the map $\nabla_v$ is valued in the kernel of the differential $dL_v$. 

\subsection{Conserved quantities from a  group action}

Let $G$ be a subgroup of the general linear group $\GL_d(\R)$. Thus, there is a linear action of $G$ on $\calV = \R^d$. Suppose the function $L$ is invariant for the action of $G$, that is,
$$ L(g \cdot v) = L(v) \qquad \qquad \forall v \in \calV \quad \forall g \in G.$$
Let $\g = \mathrm{Lie}(G)$ be the Lie algebra of $G$, which is a Lie subalgebra of $\mathfrak{gl}_d = \R^{d \times d}$.  The \emph{infinitesimal action} of $\g$ on $\calV$ is given by $\g \times V \to V$, taking $(M,v)$ to $Mv$. 

\begin{proposition} \label{prop:Lie-algebra}
	Let $L : \calV \to \R$ be a $G$-invariant function, and let $M \in \g$.
	\begin{enumerate}
		\item For any $v \in \calV$, the gradient of $L$ and the infinitesimal action of $M$ are orthogonal: $$\langle \nabla_{v} L,  M v \rangle = 0.$$ 
		\item Suppose $\gamma : (a,b) \to \calV$ is a gradient flow curve for $L$. Then:
		$$\left( \dot \gamma(t) \right)^T M \gamma(t) = 0 \qquad \forall t \in (a,b)$$
		\item Suppose $M^T$ belongs to $\g$.  Then the function
		$$Q_M : \calV \to \R, \qquad v \mapsto v^T M v$$
		is a conserved quantity for the gradient flow of $L$.
	\end{enumerate}
\end{proposition}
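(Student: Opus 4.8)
The plan is to prove the three parts in sequence, each building on the previous one, using nothing beyond the chain rule and the definition of gradient flow.

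For part (1), I would observe that this is the linear-action instance of Proposition \ref{prop:inf-linear}: since the action of $G$ on $\calV$ is linear, the infinitesimal action vector field is simply $\overline{M}_v = Mv$. Concretely, differentiate the invariance identity $L(\exp_M(t)\cdot v) = L(v)$ with respect to $t$ at $t = 0$. By the chain rule the left-hand side becomes $dL_v\!\left(\frac{d}{dt}\big|_0 \exp_M(t)\cdot v\right) = dL_v(Mv) = \langle \nabla_v L, Mv\rangle$ (recalling that the gradient is the transpose of the Jacobian), while the right-hand side is constant in $t$, so the derivative is $0$.

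For part (2), I would substitute the gradient flow equation $\dot\gamma(t) = -\nabla_{\gamma(t)} L$ into part (1) evaluated at the point $v = \gamma(t)$: the relation $\langle \nabla_{\gamma(t)} L,\, M\gamma(t)\rangle = 0$ becomes $-\dot\gamma(t)^T M \gamma(t) = 0$, which is the claim. For part (3), I would differentiate $Q_M$ along a flow curve directly: $\frac{d}{dt}\,\gamma(t)^T M \gamma(t) = \dot\gamma(t)^T M\gamma(t) + \gamma(t)^T M\dot\gamma(t)$. The first summand vanishes by part (2). For the second, since it is a scalar it equals its own transpose, $\gamma(t)^T M\dot\gamma(t) = \dot\gamma(t)^T M^T\gamma(t)$, and this vanishes by applying part (2) to the generator $M^T$, which lies in $\g$ by hypothesis. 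Hence $\dot Q_M \equiv 0$, so $Q_M$ is conserved. Equivalently, one can verify characterization (3) of a conserved quantity from the earlier definition: $\nabla_v Q_M = (M + M^T)v$, so $\langle \nabla_v Q_M,\, \nabla_v L\rangle = \langle Mv, \nabla_v L\rangle + \langle M^T v, \nabla_v L\rangle = 0$ by applying part (1) to both $M$ and $M^T$.

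I do not anticipate any serious obstacle; each step is a one-line computation. The only point requiring care is the role of the hypothesis $M^T \in \g$: it is precisely what kills the second summand in the derivative of $Q_M$. It is worth flagging that when $M$ is antisymmetric the quadratic form $v^T M v$ is identically zero, so this construction yields a nontrivial conserved quantity only from the symmetric part of $M$ — which is the origin of the symmetric/antisymmetric dichotomy appearing in the discussion around Proposition \ref{prop:QM-sym}.
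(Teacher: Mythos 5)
Your proposal is correct and follows essentially the same route as the paper: part (1) is obtained by differentiating the invariance $L(\exp_M(t)\cdot v)=L(v)$ at $t=0$ (the paper phrases this via a commutative diagram of Jacobians, but the computation is the same chain-rule argument used for Proposition \ref{prop:inf-linear}), part (2) is the substitution of $\dot\gamma = -\nabla_{\gamma}L$, and part (3) differentiates $Q_M\circ\gamma$, with the hypothesis $M^T\in\g$ killing the second summand exactly as in the paper's expansion $(\dot\gamma)^T(M+M^T)\gamma$. No gaps.
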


\begin{proof}
	
	For the first claim, observe that the invariance of $L$ implies that the left diagram commutes:
	\[ \xymatrix{ G \ar[d] \ar[rr]^{\mathrm{inc}}  &   & \R^{d \times d} \ar[rr]^{ \text{\rm ev}_v} &  &  \R^d \ar[d]^{L} \\  \{v\} \ar[rrrr]_{L} &  & & & \R }  \qquad \xymatrix{ \g \ar[d] \ar[rr]^{d\mathrm{inc}_1}  &   & \R^{d \times d} \ar[rr]^{ d(\text{\rm ev}_x )_{\mathrm{id}_d} } &  &  \R^d \ar[d]^{dL_v}  \\  0 \ar[rrrr]  &  & & & \R } \]
	where $\mathrm{inc} : G \to \R^{d \times d}$ is the inclusion (which passes through the inclusion of $G$ in to $\GL_d(\R)$), $\mathrm{ev}_v : \R^{d \times d} \to \R^d$ be the evaluation map at $v$, and the left vertical map is the constant map at $\{v\}$. Indeed, the clockwise composition is $g \mapsto L(gv)$, which is equal to the constant map at $g \mapsto L(v)$. 
	The chain rule implies that taking Jacobians at the identity $1$ of $G$ results in the commutative diagram on the right. 
	where $\g$ is the Lie algebra of $G$, identified with the tangent space of $G$ at the identity; the tangent space of the vector space $\R^d$  at $v$ is canonically identified with $\R^d$; and the the zero appears because the tangent space of a single point is zero.  The derivative of the inclusion map is the inclusion $\g \hookrightarrow \gl_d =  \R^{d \times d}$, while the derivative the evaluation map is itself as it is a linear map. Hence, for $M \in \g$, we have:
	$$ 0 = dL_v \circ d(\mathrm{ev}_v)_{\mathrm{id}_d}  \circ d\mathrm{inc}_1(M) = dL_v \circ \mathrm{ev}_v (M) = dL_v (Mv) = \langle \nabla_{v} L,  M v \rangle.$$
	The first claim follows. The second claim is consequence of the first claim, together with the definition of a gradient flow curve. 
	For the third claim, we take the derivative of the composition of  $Q_M $ with a gradient flow curve $\gamma$:
	\begin{align*}
	\frac{d}{dt} \left( Q_M \circ \gamma\right) & = ( \dot  \gamma(t))^T  ( M + M^T) \gamma(t) \\ &=\langle \dot   \gamma(t),   ( M + M^T) \gamma(t) \rangle \\
	&= -  \langle \nabla_{\gamma(t)}L , (M + M^T)\gamma(t) \rangle \\ & = -  \langle \nabla_{\gamma(t)}L , M \gamma(t) \rangle -   \langle \nabla_{\gamma(t)}L ,   M^T\gamma(t) \rangle
	\end{align*}
	Both terms in the last expression are constantly zero by the second claim. Hence $Q_M$ is constant on any gradient flow curve, and so it is a conserved quantity.
\end{proof}

We summarize some of the results and constructions of this section diagrammatically. Let $\g^\mathrm{sym}$ denote the vector space of symmetric matrices in $\g$ (this is not a Lie subalgebra in general). Observe that $\g \cap \g^T$ is the set of all $M \in \g$ such that $M^T \in \g$. Let $\mathrm{Infin}_G(\calV, L)$ denote the vector space of infinitesimal-action conserved quantities for the gradient flow of the $G$-invariant function $L : \calV \to \R$. We have:
\[\begin{tikzcd}
\mathfrak{g} \cap \mathfrak{g}^T \arrow[dd, two heads, bend right] \arrow[rrd, "M \mapsto Q_M"] &  &                                                     &  &                                    \\
&  & {\mathrm{Infin}_G(\mathcal{V}, L)} \arrow[r, hook] &   {\mathrm{Conserv}(\mathcal{V}, L)} \\
\mathfrak{g}^\mathrm{sym}  \arrow[uu, hook, bend right] \arrow[rru, "\simeq"']                   &  &                                                     &  &                                   
\end{tikzcd}\]
where the map $\g \cap \g^T \to \g^\mathrm{sym}$ takes $M$ to its symmetric part $\frac{1}{2}\left( M + M^T \right)$, while the map $\g^\mathrm{sym} \hookrightarrow \g \cap \g^T$ is the natural inclusion. We note that $\g \cap \g^T$ is the Lie algebra of the group $G \cap G^T$, while $\g^{\mathrm{sym}}$ is in general not a Lie algebra. By definition, the vector space $\mathrm{Infin}_G(\calV, L)$ is the image of the map $M \to Q_M$ defined on $\g \cap \g^T$. It is straightforward to verify the following result:

\begin{corollary} The map $M \to Q_M$ establishes an isomorphism of vector spaces: $\g^\mathrm{sym} \stackrel{\sim}{\longrightarrow} \mathrm{Infin}_G(\calV, L)$.
\end{corollary}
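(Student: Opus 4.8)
The plan is to verify the three defining properties of a linear isomorphism — linearity, injectivity, and surjectivity onto $\mathrm{Infin}_G(\calV, L)$ — directly, exploiting the fact that the quadratic form $v \mapsto v^T M v$ only sees the symmetric part of $M$.

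First I would note that the assignment $M \mapsto Q_M$, with $Q_M(v) = v^T M v$, is manifestly linear in $M$, so its restriction to the linear subspace $\g^{\mathrm{sym}} \subseteq \g$ is a linear map; moreover, by Proposition \ref{prop:Lie-algebra}(3), every symmetric $M \in \g$ satisfies $M^T = M \in \g$, so each $Q_M$ is a genuine conserved quantity and the restricted map lands in $\mathrm{Conserv}(\calV, L)$. For surjectivity onto $\mathrm{Infin}_G(\calV, L)$, recall that this space is by definition the image of $M \mapsto Q_M$ as $M$ ranges over $\g \cap \g^T$. Given any such $M$, its symmetric part $M_s = \tfrac{1}{2}(M + M^T)$ is a linear combination of $M$ and $M^T$, both of which lie in $\g$, so $M_s \in \g$ and hence $M_s \in \g^{\mathrm{sym}}$; and since $v^T M v = v^T M^T v$ for all $v$, we have $Q_M = Q_{M_s}$. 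Thus every element of $\mathrm{Infin}_G(\calV, L)$ already arises from an element of $\g^{\mathrm{sym}}$.

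For injectivity, I would suppose $M \in \g^{\mathrm{sym}}$ with $Q_M \equiv 0$, i.e. $v^T M v = 0$ for all $v \in \calV$. Polarizing — replacing $v$ by $v + w$ and subtracting off $Q_M(v)$ and $Q_M(w)$ — yields $v^T(M + M^T) w = 0$ for all $v, w$, and since $M = M^T$ this gives $2\, v^T M w = 0$ for all $v, w$, whence $M = 0$. So the kernel of the restricted map is trivial. Combining the three points shows $M \mapsto Q_M$ restricts to a linear bijection $\g^{\mathrm{sym}} \to \mathrm{Infin}_G(\calV, L)$, which is the claim.

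There is no substantive obstacle here; the proof is essentially bookkeeping around the polarization identity. The one place warranting a moment's care is the surjectivity step, where one must confirm that $\tfrac{1}{2}(M + M^T)$ really does lie back in $\g$ — which holds simply because $\g$ is a vector space containing both $M$ and $M^T$ — and that the symmetrization does not change the associated quadratic form. Everything else follows from linearity of $M \mapsto Q_M$ and the standard fact that a real symmetric matrix is determined by its quadratic form.
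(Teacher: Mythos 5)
Your proof is correct and is essentially the argument the paper leaves implicit (the corollary is stated there as ``straightforward to verify,'' with the surrounding diagram already encoding the symmetrization map $M \mapsto \tfrac{1}{2}(M+M^T)$ on $\g \cap \g^T$): surjectivity because $Q_M$ depends only on the symmetric part and $\mathrm{Infin}_G(\calV,L)$ is by definition the image over $\g \cap \g^T$, and injectivity by polarization since a symmetric matrix is determined by its quadratic form. Nothing further is needed.
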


As discussed in Section \ref{subsec:action-infaction}, applying a symmetry $g$ to a minimum $\vtheta^*$ of $\L$ yields another minimum $g\cdot \vtheta^*$. 
Using flattened $\vtheta\in \R^d$ it is easy to show that acting with $g$ changes some $Q_M(\vtheta) = \vtheta^T M \vtheta$. 
Let $g = \exp_{M'}(t) \approx I+ t M'$, with $M'\in \g$ and $0<\eta \ll 1$ be a $g\in G$ close to identity. 
We have $Q_M(g\cdot \vtheta) = Q_M +t  \vtheta^T \pa{{M'}^TM+MM'}\vtheta +O(\eta^2) $. 
Thus, whenever ${M'}^TM+MM' \ne 0$, applying $g$ changes the value of $Q_M$. 
Therefore, $Q_M$ can be used to parameterize the flat minima. 
However, for anti-symmetric $M$, we could not find nonzero $Q$ explicitly.

\subsubsection{Anti-symmetric case}\label{appsubsec:anti-symmetric}

Suppose $M \in \g$ is  anti-symmetric, so $M = -M^T$. Let $\gamma : (a,b) \to \R^d$ be a gradient flow curve. Write $\gamma$ in  coordinates as $\gamma = (\gamma_1, \dots, \gamma_d)$. Proposition \ref{prop:Lie-algebra} implies that $\left(\dot \gamma(t) \right)^T M {\gamma} = 0$. Hence we have: \nd{change $m_{ij}\to M_{ij}$}
\begin{align*}\label{eq:xAdotx}  0 &= \sum_{i < j} m_{ij} \left( \dot{\gamma}_i  \gamma_j   - \gamma_i \dot{\gamma}_j  \right)  
=  \sum_{i < j} m_{ij} \gamma_i^2 \left( \frac{\gamma_j}{\gamma_i} \right)^\prime   =  \sum_{i < j} m_{ij} r_{ij}^2  \dot{\theta}_{ij} 
\end{align*}
where $r_{ij} = r_{ij}(t)$ is equal to $\sqrt{ \gamma_i(t)^2 - \gamma_j(t)^2}$ and  $\theta_{ij} = \theta_{ij}(t)$ is the angle between the $i$-th coordinate axis and the ray from the origin to the projection of $\gamma(t)$ to the $(i,j)$-plane. One verifies the last equality using the definition of $\theta_{ij}$ in terms of the arctangent of the quotient  $\gamma_j/\gamma_i$. We see that $(r_{ij}(t),  \theta_{ij}(t))$ are the polar coordinates for the point $(\gamma_i(t), \gamma_j(t)) \in \R^2$. 

\paragraph{Case $d=2$.} Then $M = \begin{bmatrix}
0 & a \\ -a & 0
\end{bmatrix}$ for some nonzero $a \in \R$, and so:
$$0 = \dot \gamma^T M \gamma =  a\dot{\gamma}_1(t)  \gamma_2(t) -  a\gamma_1(t) \dot{\gamma}_2 (t) =  ar(t)^2 \dot{\theta}(t)$$
where $r(t)$ and $\theta(t)$ are polar coordinates. 
Setting the final expression equal to zero, we obtain that $\theta(t)$ is  constant along any flow line $\gamma(t)$ that begins away from the origin.

\paragraph{Case $d=3$.} Then $M = \begin{bmatrix}
0 & a & b \\ -a & 0 & c \\ -b & -c & 0
\end{bmatrix}$ for some $a,b,c \in \R$, and so:
$$0 = \dot \gamma^T M \gamma =   a r_{12}^2 \dot{\theta}_{12} + br_{13}^2 \dot{\theta}_{13} + cr_{23}^2 \dot{\theta}_{23}   $$

\ig{ It may be the case that each of $\theta_{12}$, $\theta_{13}$, and $\theta_{23}$ may be conserved quantities away from the appropriate coordinate hyperplanes. It is not clear to me at the moment how do deduce this.}

\subsection{Examples of conserved quantities  for neural networks}\label{apsubsec:ex-consv-quant}

We now compute conserved quantities for gradient flow on neural network parameter spaces in the case of linear, homogeneous, and radial networks. 
In each case, we state results first in the for a general multi-layer network, and then for the running example of a two-layer network. 
Throughout, $\odot$ denotes the Hadamard product of matrices, defined by entrywise multiplication. We also set $\tau(M)$ to be the sum of all entries in a matrix $M$. We note that, for square matrices $M$ and $N$ of the same size, $\tau(M \odot N) = \mathrm{Tr}(M^TN)$, which is the same as the inner product of the flattened versions of $M$ and $N$.

The notation for the running example of a two-layer network is as follows. We  set the input and output dimensions both equal to one, hidden dimension equal to two, and no bias vectors. The hidden layer activation is $\sigma: \R^2 \to \R^2$. The parameter space is $\Par = \R^{2 \times 1} \times \R^{1 \times 2}$, with elements written as a pair of matrices: $(U,V) =\left( \begin{bmatrix}
u_1 & u_2
\end{bmatrix}, \begin{bmatrix}
v_1 \\ v_2
\end{bmatrix}\right)$. 
The hidden symmetry group is $\GL_2$, with action given by:
$$ \GL_2(\R) \times \Par \to \Par, \qquad (g,U,V) \mapsto g \cdot (U,V) = \left( Ug\inv, gV \right).$$
The Lie algebra $\mathfrak{gl}_2$ of $\GL_2(\R)$ consists of all two-by-two matrices.  


\subsubsection{Conserved quantities for linear networks}
Suppose a neural network with $L$ layers has the identity activation $\sigma_i = \id_{{n_i}}$ in each layer, so that the resulting network is linear. Then it is straightforward to verify that the networks with parameters $\bfg \cdot (\bfW, \bfb)$ and $(\bfW, \bfb)$ have the same feedforward function. Consequently, the loss is invariant for the group action: its value the original and transformed parameters is the same for any choice of training data. (As we will see below, for more sophisticated activations, one needs to restrict to a subgroup of the hidden symmetry group to achieve such invariance.)

Suppose $\bfM \in \glhid$ is such that $M_i \in \gl_{n_i}$ is symmetric for each $i$. The conserved quantity implied by Proposition \ref{prop:Lie-algebra} is:
\begin{align*}
    Q_{\bfM}(\bfW, \bfb)  &=  \sum_{i=1}^{L-1} \left( \tau(W_i \odot M_i W_i) + \tau(b_i \odot M_i b_i) - \tau(W_{i+1} \odot W_{i+1} M_i) \right) \\ &= \sum_{i=1}^{L-1} \Tr\left( \left( W_i W_i^T  + b_i b_i^T - W_{i+1}^T W_{i+1} \right) M_i \right)
\end{align*}
We examine these conserved quantities in the following convenient basis for the space of symmetric matrices in $\glhid$. For $j =1,\dots, L$ and $\{k,\ell\} \subseteq \{1, \dots, n_j\}$, set:
$$E^{(j)}_{\{k,\ell\}} := \begin{cases}  E_{kk}^{(n_j)} & \text{if $k = \ell$}\\
\frac{1}{2}\left( E_{k\ell}^{(n_j)} + E_{\ell k}^{(n_j)} \right) & \text{if $k \neq \ell$}
\end{cases}$$
where $E_{k\ell}^{(n_j)}$ is the elementary $n_j \times n_j$ matrix with the entry in the $k$-th row and $\ell$-th column equal to one, and all other entries equal to zero. Then one computes:
$$Q_{E^{(j)}_{\{k,\ell\}}}(\bfW, \bfb) = b_{k}^{(j)} b_{\ell}^{(j)}   + \sum_{t=1}^{n_{j-1}} w_{k t}^{(j)} w_{\ell t}^{(j)} -  \sum_{r=1}^{n_{j+1}} w_{r k}^{(j+1)} w_{r \ell}^{(j+1)}  $$
In other words, we take the sum of the following three terms: the product of the $k$-th and $\ell$-th entries of the bias vector $b_j$, the dot product of the $k$-th and $\ell$-th rows of $W_j$, and the dot product of the $k$-th and $\ell$-th columns of $W_{j+1}$. In particular, we see that every entry of the matrix 
$$\mu_i (\bfW, \bfb) := W_i W_i^T  + b_i b_i^T - W_{i+1}^T W_{i+1} \in \gl_{n_i}$$
is a conserved quantity valued in $\gl_{n_i}$ rather than in $\R$. Additionally, we have a {\it moment map}:
$$\mathbf{Q} : \Par \to \glhid^*, \qquad (\bfW, \bfb) \mapsto \left[ \bfM \mapsto \sum_{i=1}^{L-1} \langle \mu_i(\bfW, \bfb), M_i \rangle \right]$$

\subsubsection{Conserved quantities for linear networks: two-layer case}

In the two layer case of a linear network, we have that that the single hidden activation is the identity: $\sigma = \id_2 : \R^2 \to \R^2$. The hidden symmetry group is $\GL_2$ with Lie algebra all of $\mathfrak{gl}_2$. The space of symmetric matrices in $\mathfrak{gl}_2$ is spanned by the matrices:
$$E_{11}  = \begin{bmatrix}
1 & 0 \\ 0 & 0
\end{bmatrix}, \qquad E_{22} \begin{bmatrix}
0 & 0 \\ 0 & 1
\end{bmatrix},\qquad \mathrm{and} \qquad E_{(1,2)} =\begin{bmatrix}
0 & 1 \\ 1 & 0
\end{bmatrix}.$$ The corresponding conserved quantities are:
$$Q_{E_{11}} (U,V) = v_1^2 - u_1^2 \qquad Q_{E_{22}} (U,V) = v_2^2 - u_2^2 \qquad Q_{E_{(1,2)}} (U,V) = v_1v_2 - u_1u_2$$
Thus, we obtain a three-dimensional space of conserved quantities. (Since $\GL_2$ also contains the orthogonal group $O(2)$, Equation \ref{eqn:rtheta} below holds along any gradient flow curve.)

\subsubsection{Conserved quantities for ReLU networks}

The pointwise ReLU activation commutes with positive rescaling, so we consider the subgroup of the hidden symmetry group consisting of tuples of diagonal matrices with positive diagonal entries, that is:
$$G = \{ \bfg \in \GLhid \ | \ g_i = \mathrm{Diag}(s_1, \dots, s_{n_i}) \ , \ s_j >0 \} $$
This subgroup, also known as the positive coordinate-wise rescaling subgroup, is isomorphic to the product $(\R_{>0})^{\sum_{i=1}^{L-1} n_i}$. Its Lie algebra is spanned by the elements $E^{(j)}_{kk}$ defined above, for $j = 1, \dots, L-1$ and $k = 1, \dots, n_j$. The conserved quantity implied by Proposition \ref{prop:Lie-algebra} is:
$$Q_{E^{(j)}_{kk}}(\bfW, \bfb) =  \left(b_{k}^{(j)} \right)^2   + \sum_{t=1}^{n_{j-1}} \left( w_{k t}^{(j)} \right)^2 -  \sum_{r=1}^{n_{j+1}} \left( w_{r k}^{(j+1)}\right)^2  $$
In other words, we take the sum of the following three terms: the square of the $k$-th entry of the bias vector $b_j$, the norm of the $k$-th row of $W_j$, and the norm of the $k$-th column of $W_{j+1}$. 

\subsubsection{Conserved quantities for ReLU networks: two-layer case}


In the two-layer case, the positive rescaling group is:
$$G = \left\{ \begin{bmatrix}
g_1 & 0 \\ 0 & g_2
\end{bmatrix} \in \GL_2(\R)  \ | \  \text{\rm $g_1$ and $g_2$ are positive.} \right\}$$
The Lie algebra of $G$ is the two-dimensional  space of diagonal matrices  in $\mathfrak{gl}_2$   (with not necessarily positive diagonal entries). In other words, $\g$ is  spanned by the matrices $E_{11} = \begin{bmatrix}
1 & 0 \\ 0 & 0
\end{bmatrix}$ and $E_{22} = \begin{bmatrix}
0 & 0 \\ 0 & 1
\end{bmatrix}$.  One computes the conserved quantities corresponding to these elements as:
$$Q_{E_{11}} (U,V) = v_1^2 - u_1^2 \qquad Q_{E_{22}} (U,V) = v_2^2 - u_2^2$$
Hence there is a two-dimensional space of conserved quantities coming from the infinitesimal action.

\subsubsection{Conserved angular momentum for radial rescaling networks \label{ap:Q-rotation} }

Suppose each $\sigma_i$ is a radial rescaling activation $\sigma_i(z) = \lambda_i(|z|)z$, where $\lambda_i : \R \to \R$ is the rescaling factor. Each such activation commutes with orthogonal transformations, so we consider the subgroup of the hidden symmetry group consisting of tuples of orthogonal matrices:
$$G = \{ \bfg \in \GLhid \ | \ g_ig_i^T =\mathrm{id}_{n_i} \ \text{for all $i$}\} $$
The Lie algebra of this subgroup consists only of anti-symmetric matrices, and so there are no infinitesimal-action conserved quantities. However, given an anti-symmetric matrix $M_i \in \gl_{n_i}$ for each $i$, any gradient flow curve satisfies the following differential equation (encoding conservation of angular momentum):
$$\sum_{i=1}^{L-1} \left( \tau(\dot W_i \odot M_i W_i) + \tau(\dot b_i \odot M_i b_i) - \tau(\dot W_{i+1} \odot W_{i+1} M_i) \right) = 0$$
(cf. Section \ref{appsubsec:anti-symmetric}). An equivalent way to write this equation is:
$$ \sum_{i=1}^{L-1} \Tr\left( \left( W_i \dot W_i^T + b_i \dot b_i^T + W_{i+1}^T \dot W_{i+1} \right) M_i \right) =0$$
Indeed, one uses the facts that $\tau(A \odot B) = \Tr(A^TB)$, $\Tr(A^T) = \Tr(A)$, and $\Tr(AB) = \Tr(BA)$, for any two matrices $A,B$ of the appropriate size in each case. Using a basis of anti-symmetric matrices, one can show that the matrix
$$ \nu_i(\bfW, \bfb)  :=  W_i \dot W_i^T - \dot W_i W_i^T+ b_i \dot b_i^T - \dot b_i b_i^T + W_{i+1}^T \dot W_{i+1} - \dot W_{i+1}^T W_{i+1} \in \gl_{n_i}$$
is equal to zero: $\nu_i(\bfW, \bfb) = 0$. Note that $\nu_i$ depends on taking derivatives with respect to the flow. In fact, $\nu_i$ is more properly formulated as a function on the tangent bundle $T(\Par)$ of $\Par$, which is then evaluated on the gradient flow vector field. Similarly, we have a moment map $T(\Par) \to \glhid^*$, and the gradient flow vector field is contained in the preimage of zero. We omit the details. 

A basis for the space of anti-symmetric matrices in $\glhid$ is given by:
$$E_{k < \ell}^{(j)} := E_{k \ell}^{(n_j)} - E_{\ell k}^{(n_j)}$$
where $j = 1, \dots, L-1$, and $k, \ell \in \{1, \dots, n_j\}$ satisfy $k < \ell$. The differential equation corresponding to $E_{k \leq \ell}^{(j)} $ is given by:
$$ r_{b_j; k, \ell }^2  \dot \theta_{b_j; k, \ell } + \sum_{t=1}^{n_{j-1}}   r_{W_j; k s , \ell s}^2 \dot \theta_{W_j ; k s , \ell s } + \sum_{r=1}^{n_{j+1}}   r_{W_{j+1}; r k , r \ell }^2 \dot \theta_{W_{j+1} ;  r k , r \ell } =  0 $$
where $(r_{b_j; k \ell}, \theta_{b_j; k, \ell })$ are the polar coordinates of the image of $b_j$ under the projection $\R^{n_j} \to \R^2$ which selects only the $k$-th and $\ell$-th coordinates. Similarly, for any pair matrix entries we have a projection $\R^{n_j \times n_{j-1}} \to \R^2$ and can take the polar coordinates of the image of $W_j$ under this projection. 

\subsubsection{Conserved angular momentum for radial rescaling networks: two-layer case \label{ap:Q-rotation-two-layer} }

In the two-layer radial rescaling case, suppose the dimension vector is $(n,h,m)$, and that there are no bias vectors. For $U \in \R^{m \times h}$, $V \in \R^{h \times n}$ and $M \in \mathfrak{so(h)}$, we have:
\begin{align*}
    \bk{\dot \vtheta , M \cdot \vtheta} &= \bk{ (\dot U, \dot V), (-UM, MV)} = - \Tr(\dot U^T UM) + \Tr(\dot V^T M V) \\
    &=  \Tr(V \dot  V^T M ) -  \Tr(M^T U^T \dot U) = \Tr(V \dot  V^T M ) +  \Tr(M U^T \dot U) \\
    & = \Tr(V \dot  V^T M ) +  \Tr( U^T \dot U M) = \Tr\left[ \left( V\dot V^T  +  U^T \dot U\right) M \right]
\end{align*}
Hence we obtain the differential equation:
$$\Tr\left[ \left( V\dot V^T  +  U^T \dot U\right) M \right] =0$$

In the case where $(n,h,m) = (1,2,1)$, we have the two by two orthogonal group:
$$ G = O(2)  = \left\{ g \in \GL_2(\R) \ | \  g^Tg = \mathrm{id} \right\}$$
The Lie algebra of $G$ consists of anti-symmetric matrices in $\gl_2$, and contains no non-zero symmetric matrices. Hence, we do not obtain any conserved quantities from the infinitesimal action in this case. However, using  the element
$\begin{bmatrix}
0 & 1 \\ -1 & 0
\end{bmatrix}\in \g$, we obtain that the following  differential equation holds along any gradient flow curve:
\begin{equation}\label{eqn:rtheta}
r_U^2  \dot \theta_U  + r_V^2 \dot \theta_V= 0
\end{equation}
where $(r_U, \theta_U)$ are the polar coordinates of $(u_1, u_2) \in \R^2$, and similarly for $(r_V, \theta_V)$. Note that the left-hand side of Equation \ref{eqn:rtheta} is a function of $t$; so if $\gamma : (a,b) \to \calV$ is a gradient flow curve, then a more precise version of the equation  is $ \left( r_U^2 \circ \gamma\right)(t) \cdot \left( \theta_U \circ \gamma \right)^\prime (t) + \left( r_V^2 \circ \gamma\right)(t) \cdot \left( \theta_V \circ \gamma \right)^\prime (t)= 0 $ for all $t$.


\subsection{Jacobians: special cases}

We conclude this appendix with a side remark on special cases of the Jacobian formalism.

\paragraph{\bf Manifolds.} Suppose  $M$ and $N$ are smooth manifolds, and suppose $F  : M \to N$ is a smooth map. The differential of $F$ at $m \in M$ is a linear map between the tangent spaces:
$$dF_m : T_m M \to T_{F(m)} N$$
The map $dF_m$ is computed in local coordinate charts as the Jacobian of partial derivatives. If $G : N \to L$ is another smooth map, then the chain rule becomes $ d( G \circ F)_m = dG_{F(m)} \circ dF_m$, for any $m \in M$. 

\paragraph{\bf Matrix case.}  Suppose $L : \R^{m \times n} \to \R$ is a differentiable function. In this case, we regard the Jacobian at $W \in \R^{m \times n}$ as an $n \times m$ matrix:
$$dL_W = \begin{bmatrix}
\frac{\partial L}{\partial {w_{11}}}  \bigg\vert_W & \frac{\partial L}{\partial {w_{21}}}  \bigg\vert_W & \cdots &  \frac{\partial L}{\partial {w_{m1}}}  \bigg\vert_W  \\
\frac{\partial L}{\partial {w_{12}}}  \bigg\vert_W & \frac{\partial L}{\partial {w_{22}}}  \bigg\vert_W & \cdots &  \frac{\partial L}{\partial {w_{m2}}}  \bigg\vert_W  \\
\vdots & \vdots & \ddots & \vdots \\
\frac{\partial L}{\partial {w_{1n}}}  \bigg\vert_W & \frac{\partial L}{\partial {w_{2n}}}  \bigg\vert_W & \cdots &  \frac{\partial L}{\partial {w_{mn}}}  \bigg\vert_W  \\
\end{bmatrix} \in \R^{m \times n}
$$
where $w_{ij}$ are the matrix coordinates. If $F : \R \to \R^{m \times n}$ is a differentiable function, we regard its Jacobian at $s \in \R$ as a $m \times n$ matrix:
$$dF_t = \begin{bmatrix}
\frac{d F_{11}}{d {t}}  \bigg\vert_s & \frac{d F_{12}}{d {t}}  \bigg\vert_s & \cdots &  \frac{d F_{1n}}{d {t}}  \bigg\vert_s  \\
\frac{d F_{21}}{d {t}}  \bigg\vert_s & \frac{d F_{22}}{d {t}}  \bigg\vert_s & \cdots &  \frac{d F_{2n}}{d {t}}  \bigg\vert_s  \\
\vdots & \vdots & \ddots & \vdots \\
\frac{d F_{m1}}{d {t}}  \bigg\vert_s & \frac{d F_{m2}}{d t}  \bigg\vert_s & \cdots &  \frac{d F_{mn}}{d t}  \bigg\vert_s  \\
\end{bmatrix} \in \R^{n \times m}
$$
where $F_{ij} : \R\to \R$ are the coordinates of $F$. Then the chain rule becomes:
$$ \frac{d}{dt} \bigg|_s (L \circ F) = \sum_{i=1}^m \sum_{j=1}^n \left( \frac{\partial L}{\partial {w_{ij}}}  \bigg\vert_{F(s)} \frac{d F_{ij}}{d t}  \bigg\vert_s  \right) = \mathrm{Tr}( dL_{F(s)} \cdot dF_s ).$$
In other words, the derivative of the composition $L \circ F$ at $s \in \R$ is the trace of the product of the matrices $dL_{F(s)} \in \R^{n \times m} $ and $dF_{s} \in \R^{m \times n}$.

\section{Neural networks: non-linear actions group actions}
\label{ap:nonlinear}

\newcommand{\fulrk}{\left(\R^{h \times k}\right)^\circ}

In this section, we consider a non-linear action of the hidden symmetry group on the parameter space. This action has the advantage that exists for a wider variety of activation functions (such as the usual sigmoid, which has no linear equivariance properties), and that it is defined for the full general linear group. However, in constrast to the linear action, the non-linear action is data-dependent: the transformation of the weights and biases depends on the input data.

\subsection{Rotations}	We first define certain orthogonal matrices.

\begin{definition}\label{def:Rbeta}
	For any tuple of real numbers $\boldsymbol{\beta} = ( \beta_1, \dots, \beta_{n})$, define an $(n+1)\times (n+1)$ matrix $R(\boldsymbol{\beta})$ as follows:
	$$ \left( R(\boldsymbol{\beta}) \right)_{ij} = \begin{cases}
	\cos(\beta_{j-1}) \left( \prod_{k=j}^{i-1} \sin(\beta_k) \right)\cos(\beta_i) \qquad &  \text{\rm if $j \leq i$} \\
	-  \sin(\beta_i) \qquad & \text{\rm if $j = i+1$} \\
	0\qquad & \text{\rm if $j > i+1$}
	\end{cases}$$
	where, by convention, we set $\beta_0 = \beta_{n+1} = 0$. 	
\end{definition}		

For example, when $n=1, 2$, we have:
\begin{equation*}
R(\beta) = \begin{bmatrix}
\cos(\beta) & -\sin(\beta) \\ \sin(\beta) & \cos(\beta) 
\end{bmatrix}  \qquad R(\beta_1, \beta_2) = \begin{bmatrix}
\cos(\beta_1) & -\sin(\beta_1) & 0 \\
 \sin(\beta_1)\cos(\beta_2) & \cos(\beta_1) \cos(\beta_2) & -\sin(\beta_2) \\
 \sin(\beta_1)\sin(\beta_2) & \cos(\beta_1) \sin(\beta_2) & \cos(\beta_2) 
\end{bmatrix}
\end{equation*}

	\begin{lemma}\label{lem:beta} 
		For any tuple of real numbers $\boldsymbol{\beta} = ( \beta_1, \dots, \beta_{n})$, we have:
		\begin{enumerate}
			\item $\sum_{i=1}^{n} \cos^2(\beta_i) \prod_{k=1}^{i-1} \sin^2(\beta_k) + \prod_{k=1}^n\sin^2(\beta_k) = 1$
			\item The matrix $R(\boldsymbol{\beta})$ is orthogonal. 
		\end{enumerate}
	\end{lemma}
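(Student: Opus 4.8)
<br>

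The plan is to prove Lemma \ref{lem:beta} by a clean induction on $n$, using part (1) as the crucial input for part (2).

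\textbf{Part (1).} This is a standard ``spherical Pythagorean'' identity, and I would prove it by induction on $n$. For $n=1$ the claim reads $\cos^2(\beta_1) + \sin^2(\beta_1) = 1$. For the inductive step, split off the last term: write
\[
\sum_{i=1}^{n} \cos^2(\beta_i) \prod_{k=1}^{i-1} \sin^2(\beta_k) + \prod_{k=1}^n\sin^2(\beta_k)
= \sum_{i=1}^{n-1} \cos^2(\beta_i) \prod_{k=1}^{i-1} \sin^2(\beta_k) + \left(\prod_{k=1}^{n-1}\sin^2(\beta_k)\right)\left(\cos^2(\beta_n) + \sin^2(\beta_n)\right).
\]
The last factor is $1$, so the right-hand side collapses to $\sum_{i=1}^{n-1} \cos^2(\beta_i) \prod_{k=1}^{i-1} \sin^2(\beta_k) + \prod_{k=1}^{n-1}\sin^2(\beta_k)$, which equals $1$ by the induction hypothesis. (Alternatively, one can observe that each partial sum telescopes: $\sum_{i=1}^{m} \cos^2(\beta_i)\prod_{k=1}^{i-1}\sin^2(\beta_k) = 1 - \prod_{k=1}^{m}\sin^2(\beta_k)$, proved by the same one-line induction.)

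\textbf{Part (2).} To show $R(\boldsymbol\beta)$ is orthogonal, I would verify that its columns form an orthonormal set, i.e. $R(\boldsymbol\beta)^T R(\boldsymbol\beta) = \mathrm{id}_{n+1}$. The key structural observation is that column $j$ of $R(\boldsymbol\beta)$ is supported only on rows $i \ge j-1$, with a single negative entry $-\sin(\beta_{j-1})$ in row $j-1$ and entries $\cos(\beta_{j-1})\big(\prod_{k=j}^{i-1}\sin(\beta_k)\big)\cos(\beta_i)$ in rows $i \ge j$; this is essentially the standard iterated-rotation (Givens) factorization, so one expects $R(\boldsymbol\beta) = R_1 R_2 \cdots R_n$ where $R_\ell$ is the planar rotation by $\beta_\ell$ in the $(\ell,\ell+1)$-plane — that factorization gives orthogonality immediately, since each $R_\ell$ is orthogonal. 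If I instead verify orthonormality directly: the squared norm of column $j$ is $\sin^2(\beta_{j-1}) + \cos^2(\beta_{j-1})\sum_{i=j}^{n}\big(\prod_{k=j}^{i-1}\sin^2(\beta_k)\big)\cos^2(\beta_i) + \cos^2(\beta_{j-1})\prod_{k=j}^{n}\sin^2(\beta_k)$, and the sum over $i$ together with the trailing product is exactly the identity of part (1) applied to the shifted tuple $(\beta_j,\dots,\beta_n)$, hence equals $1$; so the column has norm $\cos^2(\beta_{j-1}) + \sin^2(\beta_{j-1}) = 1$. For distinct columns $j < j'$, the inner product is a sum over rows $i \ge j'-1$; I would factor out the common $\cos(\beta_{j-1})\prod_{k=j}^{j'-2}\sin(\beta_k)$ from the column-$j$ entries in those rows and reduce to showing that column $j'-1$... more cleanly, treat the $j'=j+1$ case first (it telescopes to $0$ by part (1) again, the cross term $-\sin(\beta_j)\cos(\beta_j) + \cos(\beta_j)\sin(\beta_j)\cdot(\text{rest})$ cancelling) and bootstrap, or just cite the Givens factorization.

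\textbf{Main obstacle.} The genuine content is entirely in part (1); part (2) is bookkeeping that either follows instantly from recognizing $R(\boldsymbol\beta)$ as a product of Givens rotations, or requires one careful index computation for the off-diagonal entries of $R^T R$. I expect the off-diagonal cancellation in the direct approach to be the fiddliest point, so in the writeup I would lean on the factorization $R(\boldsymbol\beta) = \prod_{\ell=1}^n R_{\ell,\ell+1}(\beta_\ell)$ — verifying that identity by induction on $n$ (peeling off $R_{1,2}(\beta_1)$ or $R_{n,n+1}(\beta_n)$ and matching entries against Definition \ref{def:Rbeta}) — after which orthogonality of $R(\boldsymbol\beta)$ is immediate from orthogonality of each planar rotation. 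This also makes the later claims $R_z^{-1} = \frac{1}{|z|^2}R_z^T$ and $\|R_z^{\pm 1}\| = |z|^{\pm 1}$ transparent, since $R_z$ is then manifestly $|z|$ times an orthogonal matrix.
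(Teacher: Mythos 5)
Your proposal is correct, and it contains two routes for part (2): the direct verification that the columns of $R(\boldsymbol\beta)$ are orthonormal using part (1), which is exactly what the paper's (very brief) proof sketch intends — it says only that (1) follows by induction and (2) is ``a computation that invokes the identity of the first claim'' — and the Givens-factorization route, which is a genuinely different and cleaner packaging. Your part (1) induction coincides with the paper's. The factorization approach buys more: it makes orthogonality immediate, avoids the fiddly off-diagonal cancellation (which, if done directly, works out the way you indicate — factor $\cos(\beta_{j-1})\prod_{k=j}^{j'-2}\sin(\beta_k)$ out of the column-$j$ entries in rows $i\ge j'-1$ and apply part (1) to the shifted tuple $(\beta_{j'},\dots,\beta_n)$, leaving $-\cos\beta_{j'-1}\sin\beta_{j'-1}+\sin\beta_{j'-1}\cos\beta_{j'-1}=0$), and it renders the later claims $R_z^{-1}=\tfrac{1}{|z|^2}R_z^T$ and $\|R_z^{\pm1}\|=|z|^{\pm1}$ transparent. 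One small correction: the order you guess is reversed. With $R_\ell$ the rotation by $\beta_\ell$ in the $(\ell,\ell+1)$-plane, one has $R(\boldsymbol\beta)=R_n R_{n-1}\cdots R_1$, equivalently the recursion $R(\beta_1,\dots,\beta_n)=\bigl(\mathrm{id}_1\oplus R(\beta_2,\dots,\beta_n)\bigr)R_1(\beta_1)$, not $R_1R_2\cdots R_n$ (already at $n=2$ the latter has a zero in position $(3,1)$ whereas $R(\beta_1,\beta_2)_{31}=\sin\beta_1\sin\beta_2$). Since the order does not affect orthogonality and your planned entry-matching induction would catch it, this is a slip rather than a gap.
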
	

\begin{proof}[Sketch of proof]
	The first identity follows from a straightforward induction argument, while the  proof of the second claim amounts to a computation that invokes the identity of the first claim. 
\end{proof}
		
		\begin{proposition} 
  \label{prop:rotations} There is a continuous map $R : \R^{h} \setminus \{0\} \to \GL_h$, written $z \mapsto R_z$, such that:
			\begin{enumerate}
				\item For any $z \in \R^{h} \setminus \{0\}$, the first column of $R_z$ is $z$. Hence $R_ze_1 =z$, where $e_1 = (1, 0, \dots, 0)$ is the first basis vector.
				\item The operator norm of $R_z$ is $\|R_z\| = |z|$. 
    \item If $|z| = 1$, then $R_z$ is an orthogonal  matrix. 
			\end{enumerate}
			\end{proposition}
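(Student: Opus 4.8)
The plan is to assemble $R_z$ from the orthogonal matrices $R(\boldsymbol{\beta})$ of Definition~\ref{def:Rbeta}. Given $z\in\R^h\setminus\{0\}$, express it in spherical coordinates as $(r;\beta_1,\dots,\beta_{h-1})$, so that $r=|z|>0$ and $z_i=r\cos(\beta_i)\prod_{k=1}^{i-1}\sin(\beta_k)$ for $1\le i\le h$ (with $\beta_h=0$ by convention). Applying Definition~\ref{def:Rbeta} with $n=h-1$ yields an $h\times h$ matrix $R(\boldsymbol{\beta})$, and I would set $R_z:=|z|\,R(\boldsymbol{\beta})$; a direct expansion shows this agrees with the entrywise formula for $R_z$ recorded in Section~\ref{sec:nonlinear} wherever the spherical coordinates are defined. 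Since $R(\boldsymbol{\beta})$ is orthogonal by Lemma~\ref{lem:beta}, we have $\det R_z=\pm|z|^h\neq 0$, so indeed $R_z\in\GL_h$.

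The three algebraic properties then follow directly from the construction. For property~1, taking $j=1$ in Definition~\ref{def:Rbeta} gives $(R(\boldsymbol{\beta}))_{i1}=\cos(\beta_i)\prod_{k=1}^{i-1}\sin(\beta_k)=z_i/|z|$, so the first column of $R_z=|z|\,R(\boldsymbol{\beta})$ is exactly $z$, i.e.\ $R_ze_1=z$; the fact that this column has Euclidean norm $|z|$ is precisely the trigonometric identity in the first part of Lemma~\ref{lem:beta}. For property~3, when $|z|=1$ we get $R_z=R(\boldsymbol{\beta})$, which is orthogonal by the second part of Lemma~\ref{lem:beta}. For property~2, an orthogonal matrix is an isometry and hence has operator norm $1$, so $\|R_z\|=|z|\,\|R(\boldsymbol{\beta})\|=|z|$.

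The one genuinely delicate point, and the step I expect to be the main obstacle, is continuity of $z\mapsto R_z$. The spherical-coordinate map is itself discontinuous at any $z$ for which some partial product $\prod_{k=1}^{j}\sin(\beta_k)$ vanishes (there the remaining angles are undetermined), so continuity of $R_z$ cannot simply be inferred from continuity of $\boldsymbol{\beta}\mapsto R(\boldsymbol{\beta})$. On the open dense subset of $\R^h\setminus\{0\}$ where all these partial products are nonzero, the spherical coordinates form a smooth chart and continuity is immediate; on the complementary degenerate locus I would argue directly from the entrywise formula, checking that the only entries carrying a vanishing denominator, namely those of the shape $z_i\cos(\beta_{j-1})\,(\prod_{k=1}^{j-1}\sin(\beta_k))^{-1}$, are precisely the ones killed by the ``otherwise $\Rightarrow 0$'' clause, so that each matrix entry extends continuously. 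I would also flag the honest caveat that a \emph{globally} continuous choice with $R_ze_1=z$ cannot exist for every $h$: applying Gram--Schmidt to the columns, such a map would restrict to a continuous section of $O(h)\to S^{h-1}$, equivalently a global orthonormal frame on $S^{h-1}$, which is impossible unless $h\in\{1,2,4,8\}$. Accordingly the statement is best read with continuity asserted on the non-degenerate locus, which is exactly the locus used in the subsequent constructions, where $\Par\times\R^n$ is already restricted so that the relevant vectors are nonzero.
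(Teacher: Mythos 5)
Your construction is the same as the paper's: pass to spherical coordinates, set $R_z = |z|\,R(\boldsymbol{\beta})$ with $R(\boldsymbol{\beta})$ as in Definition~\ref{def:Rbeta}, and read off the first column, the operator norm, and orthogonality at $|z|=1$ from Lemma~\ref{lem:beta}; that part matches the paper essentially verbatim and is correct. Where you go beyond the paper is continuity, which the paper's proof silently omits, and your topological caveat is well taken: for $|z|=1$ the map would give a continuous frame whose last $h-1$ columns trivialize the tangent bundle of $S^{h-1}$, so a globally continuous $R$ with $R_z e_1=z$ exists only for $h\in\{1,2,4,8\}$ (for $h=3$ this is the hairy ball theorem). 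Note, however, that this caveat contradicts your earlier sketch that each matrix entry "extends continuously" across the degenerate locus: it does not. For example, with $h=3$ and $z\to(r,0,0)$, the lower-right $2\times 2$ block of $R_z$ is a rotation by the angle $\beta_2$, which depends on the direction of approach, so no continuous extension of the entrywise formula exists there. The correct resolution is the one you end with: either restrict continuity to the locus where the spherical chart is nondegenerate, or drop continuity altogether — the algebraic identities $R_ze_1=z$, $\|R_z\|=|z|$, $R_z^{-1}=\frac{1}{|z|^2}R_z^T$ hold for any measurable choice of angles, and these are all that Theorem~\ref{thm:non-linear-general} and Proposition~\ref{prop:Lipschitz} actually use.
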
	
	
	\begin{proof} Let $z \in \R^h \setminus \{0\}$, and let $(r, \alpha_1, \dots, \alpha_{n-1})$ be the (reverse) $h$-spherical coordinates of $z$. Hence, $r =|z|$ is the norm of $z$ and the $i$-th coordinate of $z$ is $z_i =   r \left(\prod_{k=1}^{i-1} \sin(\alpha_k) \right)\cos(\alpha_i)$, where $\alpha_h = 0$ by convention. 
Now set $R_z = |z| R(\alpha_1, \dots, \alpha_{h-1})$.  Using Lemma \ref{lem:beta}, one concludes that $R_z$ is invertible with inverse $\frac{1}{|z|^2} R_z^T$, so that $R_z$ has operator norm is $|z|$ and $R_z$ is orthogonal if $|z| = 1$. It is also clear that the first column of $R_z$ is equal to $z$.    	
	\end{proof}

We note the the matrix in \ref{def:Rbeta} has a form similar, but not identical, to the Jacobian matrix for the transformation to $n$-spherical coordinates. Euler angles provide another way to construct a map $\R^{h} \setminus \{0\} \to \GL_h$ the same properties as in Proposition \ref{prop:rotations}. 


\subsection{Non-linear action: two-layer case}
\label{apsubsec:non-linear-two-layer}

Consider a two-layer network with dimension vector $(m,h,n)$, no bias vectors,  and no output activation.  The parameter space is  $\Par = \R^{m \times h} \times \R^{h \times n}$.  Define the non-degenerate locus as:
	$$\left( \Par \times \R^n \right)^\circ = \{ (U,V,x) \in \R^{m \times h} \times \R^{h \times n}  \times \R^n \ |  \  Vx \neq 0 \}$$
Let $\tilde{F} : \Par \times \R^n \to \R^m$ be the extended feedforward function, taking $(U,V,x)$ to $F_{(U,V)}(x) = U\sigma(Vx)$. We now state and prove a more general version of Theorem \ref{thm:nonlinear}. 


	\begin{theorem} 
 \label{thm:non-linear-general}
 Suppose $\sigma(z) \neq 0$ for all {\rm nonzero} $z \in \R^h\setminus \{0\}$.
 \begin{enumerate}
  \item   There is an action:
		\begin{align*}
		\GL_h \times \left(\Par \times \R^n\right)^\circ  &\to \left( \Par \times \R^n \right)^\circ \\ 
		g \cdot (U,V,x) &= (U R_{\sigma(Vx)} R_{\sigma(gVx)}\inv  \ , \ gV \ , \ x )
  		\end{align*}

     \item  Suppose, in addition, that $\sigma(0) \neq 0$, so that  $\sigma$ is nonzero on all of $\R^h$. There is an action:
		\begin{align*}
		\GL_h \times \left(\Par \times \R^n\right)  &\to \left( \Par \times \R^n \right) \\ 
		g \cdot (U,V,x) &= (U R_{\sigma(Vx)} R_{\sigma(gVx)}\inv  \ , \ gV \ , \ x )
		\end{align*}

 \end{enumerate}
In both cases, the extended feedforward function is invariant for this action, that is: $\tilde{F}(g \cdot (U,V,x)) = \tilde{F}(U,V,x)$.
	\end{theorem}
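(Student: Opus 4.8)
The plan is to verify, in both cases, three things in order: (i) the map is well-defined as a self-map of the relevant space; (ii) it satisfies the two group-action axioms; and (iii) it preserves the extended feedforward function. The third point is actually the conceptual heart, so I would lead with the observation that drives everything: by Proposition \ref{prop:rotations}, for any nonzero $w\in\R^h$ the matrix $R_w$ has first column $w$, i.e. $R_w e_1 = w$, and it has inverse $\frac{1}{|w|^2}R_w^T$. Hence $R_w\inv w = |w|^2\,\frac{1}{|w|^2}R_w^T e_1\cdot$\,(adjustment)\,$= e_1$ — more carefully, $R_w\inv w = \frac{1}{|w|^2}R_w^T w = \frac{1}{|w|^2}R_w^T R_w e_1 = e_1$, using that $R_w^T R_w = |w|^2 I$ since $R_w = |w|R(\boldsymbol\alpha)$ with $R(\boldsymbol\alpha)$ orthogonal. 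The upshot is the key identity
\begin{equation*}
R_{\sigma(Vx)}\,R_{\sigma(gVx)}\inv\,\sigma(gVx) \;=\; R_{\sigma(Vx)}\,e_1 \;=\; \sigma(Vx).
\end{equation*}

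Granting this, invariance of $\tilde F$ is immediate: with $(U',V',x')=g\cdot(U,V,x)$ we have $V'x' = gVx$, so
\begin{equation*}
\tilde F(g\cdot(U,V,x)) = U' \sigma(V'x') = U R_{\sigma(Vx)}R_{\sigma(gVx)}\inv\,\sigma(gVx) = U\sigma(Vx) = \tilde F(U,V,x).
\end{equation*}
For well-definedness I must check that the matrices $R_{\sigma(Vx)}$ and $R_{\sigma(gVx)}\inv$ exist. In Case (2) the hypothesis $\sigma(z)\neq 0$ for all $z\in\R^h$ guarantees both $\sigma(Vx)\neq 0$ and $\sigma(gVx)\neq 0$ for every $(U,V,x)$ and every $g\in\GL_h$, so $R$ is applied only at nonzero vectors. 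In Case (1) the hypothesis is weaker — $\sigma(z)\neq 0$ only for $z\neq 0$ — so I must check the action preserves the non-degenerate locus: if $Vx\neq 0$ then, since $g\in\GL_h$ is invertible, $gVx\neq 0$, whence $\sigma(Vx)\neq 0$ and $\sigma(gVx)\neq 0$, and also $V'x' = gVx\neq 0$ so the image again lies in $(\Par\times\R^n)^\circ$. In both cases the matrix $U R_{\sigma(Vx)}R_{\sigma(gVx)}\inv$ is an ordinary product of real matrices, so it lies in $\R^{m\times h}$, and $gV\in\R^{h\times n}$, $x\in\R^n$; the map is a genuine self-map.

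It remains to verify the action axioms. The identity axiom is trivial: for $g=\id_h$, $R_{\sigma(Vx)}R_{\sigma(Vx)}\inv = I$, so $\id_h\cdot(U,V,x)=(U,V,x)$. For associativity, write $g_2\cdot(U,V,x) = (U R_{\sigma(Vx)}R_{\sigma(g_2Vx)}\inv,\, g_2V,\, x)$ and then apply $g_1$; the new ``$V$'' is $g_2V$ and the new ``$Vx$'' is $g_2Vx$, so
\begin{equation*}
g_1\cdot(g_2\cdot(U,V,x)) = \bigl( U R_{\sigma(Vx)}R_{\sigma(g_2Vx)}\inv \cdot R_{\sigma(g_2Vx)} R_{\sigma(g_1g_2Vx)}\inv,\ g_1g_2 V,\ x\bigr),
\end{equation*}
and the middle pair $R_{\sigma(g_2Vx)}\inv R_{\sigma(g_2Vx)} = I$ collapses, leaving exactly $(g_1g_2)\cdot(U,V,x)$. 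The main obstacle, such as it is, is purely bookkeeping: making sure the spherical-coordinate construction of $R_w$ from Definition \ref{def:Rbeta} and Proposition \ref{prop:rotations} is invoked correctly (in particular $R_w^TR_w=|w|^2I$ and first-column $=w$), and tracking which intermediate vectors are guaranteed nonzero so that $R$ is only ever evaluated on its domain $\R^h\setminus\{0\}$ — this is where Case (1) versus Case (2) differ, and it is the only place the hypotheses on $\sigma$ are actually used.
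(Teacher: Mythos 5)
Your proposal is correct and follows essentially the same route as the paper's proof: well-definedness from the hypotheses on $\sigma$ (plus invertibility of $g$ preserving the non-degenerate locus), the unit and multiplication axioms via the telescoping cancellation $R_{\sigma(g_2Vx)}\inv R_{\sigma(g_2Vx)} = I$, and invariance of $\tilde F$ from the key identity $R_{\sigma(Vx)}R_{\sigma(gVx)}\inv\sigma(gVx) = R_{\sigma(Vx)}e_1 = \sigma(Vx)$ supplied by Proposition \ref{prop:rotations}. The only difference is cosmetic: you rederive $R_w\inv w = e_1$ from $R_w^TR_w = |w|^2 I$, whereas the paper cites the proposition directly.
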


\begin{proof} We first verify that the action is well-defined. In the second case, $\sigma(gVx) \neq 0$ for all $(U,V,x)$ and hence $R_{\sigma(gVx)}$ is defined and invertible for any $g \in \GL_h$. For the first case, let $(U,V,x)$ be in the non-degenerate locus.  The non-degeneracy condition $Vx \neq 0$ guarantees that $gVx \neq 0$ for all $g \in \GL_h$. The hypothesis on $\sigma$ in turn implies that $R_{\sigma(gVx)}$ is defined and invertible for any $g \in \GL_h$. Hence the action is well-defined in both cases.

To check the unit axiom, observe that, when $g = \id_h$ is the identity of $\GL_h$, we have $R_{\sigma(Vx)} R_{\sigma(gVx)}\inv = R_{\sigma(Vx)} R_{\sigma(Vx)}\inv = \id_h$ and $gV = V$. It follows that $\id \cdot (U,V,x) = (U,V,x).$ To check the multiplication axiom, let $g_1, g_2 \in \GL_h$. Then:
	$$\left(R_{\sigma(g_1g_2v)} R_{\sigma(g_2v)}\inv\right)\left(R_{\sigma(g_2v)} R_{\sigma(v)}\inv\right) =    R_{\sigma(g_1g_2v)}  R_{\sigma(v)}\inv.$$
It follows that $g_1 \cdot (g_2 \cdot (U,V,x)) = (g_1g_2) \cdot (U,V,x)$. For the last claim, we compute:
\begin{align*}
\tilde{F}(g \cdot (U,V,x)) &= \tilde{F}(U R_{\sigma(Vx)} R_{\sigma(gVx)}\inv  ,  gV  ,  x ) = U R_{\sigma(Vx)} R_{\sigma(gVx)}\inv  \sigma(gVx) \\
&= U R_{\sigma(Vx)} e_1 = U\sigma(Vx)
\end{align*}
where the first equality follows from the definition of the action; the second from the extended feedforward function $\tilde{F}$; and the third and fourth follow from Proposition \ref{prop:rotations}.
\end{proof}

From the proof, we see that a key property of the matrices $R_z$ is that:
\begin{equation}
     R_{\sigma(gz)}R_{\sigma(z)}\inv{\sigma}(z) =  {\sigma}(gz)
\end{equation}
This can be interpreted as a data-dependent generalization of the equivariance condition appearing in Equation \eqref{eq:sig-equivar}. We emphasize that a sufficient condition for the existence of such an action is that $\sigma(z)$ is nonzero for any nonzero $z \in \R^h$; this is the case for usual sigmoid, hyperbolic tangent, leaky ReLU, and many other activations.


Finally, we remark on a differential-geometric interpretation of the construction of this section. One can regard ${\sigma}$ as a section of the  trivial bundle on $\R^h \setminus \{0\}$ with fiber $\R^{h}$. The map $z \mapsto R_{\sigma(gz)}R_{\sigma(z)}\inv$ defines a $\GL_h$-equivariant structure on this bundle such that ${\sigma}$ is an equivariant section. Indeed, the action of $\GL_h$ on the total space $\left( \R^h \setminus \{0\} \right) \times \R^{h}$ is given by $g \cdot (z,a)  = (gz,  R_{\sigma(gz)}R_{\sigma(z)}\inv a)$, and the equivariance of ${\sigma}$ is precisely the condition $R_{\sigma(gz)}R_{\sigma(z)}\inv{\sigma}(z) =  {\sigma}(gz)$.

\out{
We are interested in the case when the preimage of the zero vector under the hidden layer activation  $\sigma : \R^h \to \R^h$ contains at most the zero vector, that is,  $\sigma\inv(0) \subseteq \{0\}$. This  condition holds, for example, in the cases of the usual sigmoid activation, hyperbolic tangent, and  leaky ReLU (but not the usual ReLU). 

\begin{lemma}\label{lem:c-k=1}
	Suppose that $\sigma\inv(0) \subseteq \{0\}$. Then there exists a map $c : \GL_h \times \left(\R^{h} \setminus \{0\}\right) \to \GL_h$  satisfying the following identities for any nonzero  $z \in \R^{ h}$ and $g, g_1, g_2 \in \GL_h$:
	\begin{align}
	c(\id_h,z) &= \id_h  \\
	c(g_1, g_2z) c(g_2,z) &= c(g_1g_2, z)  \\
	c(g,z){\sigma}(z) &= {\sigma}(gz) 
	\end{align}
\end{lemma}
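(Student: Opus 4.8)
The plan is to exhibit an explicit formula for $c$ built from the rotation-rescaling matrices $R_z$ of Proposition \ref{prop:rotations}, and then verify the three identities by direct substitution. Concretely, I would set
\begin{equation*}
c(g,z) = R_{\sigma(gz)}\, R_{\sigma(z)}\inv,
\end{equation*}
which is the same cocycle that underlies Theorem \ref{thm:non-linear-general} (there with $z = Vx$). The first task is well-definedness: for this formula to make sense both $\sigma(gz)$ and $\sigma(z)$ must be nonzero, since $R_w$ is only defined for $w \neq 0$. This is exactly where the hypothesis $\sigma\inv(0) \subseteq \{0\}$ enters. For any nonzero $z$ and any $g \in \GL_h$, invertibility of $g$ forces $gz \neq 0$, and the hypothesis then gives $\sigma(z) \neq 0$ and $\sigma(gz) \neq 0$. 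Proposition \ref{prop:rotations} guarantees that $R_{\sigma(z)}$ and $R_{\sigma(gz)}$ lie in $\GL_h$, so $c(g,z) \in \GL_h$ as required.

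Next I would check the three identities in turn, each reducing to a one-line manipulation. The unit identity is immediate: $c(\id_h, z) = R_{\sigma(z)} R_{\sigma(z)}\inv = \id_h$. The cocycle identity is a telescoping cancellation,
\begin{equation*}
c(g_1, g_2 z)\, c(g_2, z) = R_{\sigma(g_1 g_2 z)}\, R_{\sigma(g_2 z)}\inv\, R_{\sigma(g_2 z)}\, R_{\sigma(z)}\inv = R_{\sigma(g_1 g_2 z)}\, R_{\sigma(z)}\inv = c(g_1 g_2, z),
\end{equation*}
in which the middle pair cancels and no property of $R$ beyond invertibility is used. This structure is precisely why the telescoping form $R_{\sigma(gz)}R_{\sigma(z)}\inv$ is the natural choice: the cocycle law is forced by construction rather than requiring any special feature of $\sigma$.

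The equivariance identity is the only one invoking the defining geometric property of $R_z$ from Proposition \ref{prop:rotations}, namely that $R_w e_1 = w$ (the first column of $R_w$ is $w$, so $R_w^{\phantom{-}}\!$ sends the first basis vector to $w$). Applying $R_{\sigma(z)}\inv$ to both sides of $R_{\sigma(z)} e_1 = \sigma(z)$ yields $R_{\sigma(z)}\inv \sigma(z) = e_1$, and then
\begin{equation*}
c(g,z)\,\sigma(z) = R_{\sigma(gz)}\, R_{\sigma(z)}\inv\, \sigma(z) = R_{\sigma(gz)}\, e_1 = \sigma(gz),
\end{equation*}
completing all three checks.

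I anticipate no serious obstacle. The single point requiring care is the well-definedness argument above — ensuring that the arguments of $R$ never vanish — which is exactly the role of the nonvanishing hypothesis on $\sigma$ combined with the invertibility of elements of $\GL_h$. Every remaining step is a short algebraic identity once Proposition \ref{prop:rotations} is available, so the main content of the lemma is really the correct choice of the formula for $c$ together with this domain check.
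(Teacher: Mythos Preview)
Your proof is correct. Every step is sound: the well-definedness argument correctly uses the hypothesis $\sigma\inv(0)\subseteq\{0\}$ together with invertibility of $g$ to keep the arguments of $R$ nonzero, and the three identities follow exactly as you say.

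However, the route differs from the paper's. The paper gives an abstract existence argument: it first fixes an arbitrary assignment $c(-,e):\GL_h\to\GL_h$ sending $\sigma(e)$ to $\sigma(ge)$ (depending only on $ge$), and then extends to general $z$ by the formula $c(g,z)=c(g\gamma,e)\,c(\gamma,e)\inv$ for any $\gamma$ with $\gamma e=z$; the three identities are checked by telescoping in $\gamma$. Your approach instead writes down the explicit cocycle $c(g,z)=R_{\sigma(gz)}R_{\sigma(z)}\inv$ and verifies the identities directly. Your formula is precisely the instance of the paper's construction obtained by choosing $c(g,e)=R_{\sigma(ge)}R_{\sigma(e)}\inv$, and it is also the very formula the paper uses elsewhere (Theorem~\ref{thm:non-linear-general} and the discussion of general equivariance). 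The paper's argument buys generality---any section $ge\mapsto c(g,e)$ works, not only the one coming from $R$---while yours is shorter, fully constructive, and avoids the well-definedness check on the choice of $\gamma$.
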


Before giving a proof of this result, we state a consequence. The {\it non-degenerate locus} is defined as:
$$ \left( \Par \times \R^{n}\right)^\circ = \{ (U,V,x) \in \Par \times \R^{n}\ | \ Vx  \neq 0 \}.$$

\begin{corollary}\label{cor:non-deg-k=1}
	Suppose that $\sigma\inv(0) \subseteq \{0\}$. Then there is an action of $\GL_h$  on the non-degenerate locus given by:
	\begin{equation}
	g \cdot (U,V, X) = (Uc(g,Vx)\inv, gV, x)
	\end{equation}
	Moreover, this action leaves the extended feedforward function unchanged, that is:
	 $$\widetilde{F} (U,V,x) = U\sigma(Vx) = \widetilde{F}(g \cdot (U,V,x)).$$ 
\end{corollary}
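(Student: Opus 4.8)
The plan is to deduce this directly from Lemma~\ref{lem:c-k=1} (equivalently, from the $k=1$ case of the construction in Theorem~\ref{thm:non-linear-general}), which supplies a map $c : \GL_h \times (\R^h \setminus \{0\}) \to \GL_h$ — concretely $c(g,z) = R_{\sigma(gz)} R_{\sigma(z)}\inv$, with $R$ as in Proposition~\ref{prop:rotations} — satisfying $c(\id_h,z) = \id_h$, the cocycle identity $c(g_1, g_2 z)\, c(g_2, z) = c(g_1 g_2, z)$, and the equivariance relation $c(g,z)\, \sigma(z) = \sigma(gz)$, for all $g, g_1, g_2 \in \GL_h$ and all nonzero $z \in \R^h$. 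Granting $c$, the corollary reduces to three short checks.

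First I would verify that the assignment $g \cdot (U,V,x) = (U c(g,Vx)\inv, gV, x)$ is well defined on the non-degenerate locus. If $(U,V,x) \in \left( \Par \times \R^n \right)^\circ$ then $Vx \neq 0$, so $c(g,Vx) \in \GL_h$ is defined and invertible and $U c(g,Vx)\inv \in \R^{m\times h}$ makes sense; moreover $g$ is invertible, so $gVx = (gV)x \neq 0$, and the image lies again in the non-degenerate locus. Thus we get a map $\GL_h \times \left( \Par \times \R^n \right)^\circ \to \left( \Par \times \R^n \right)^\circ$.

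Next come the two group-action axioms. The unit axiom is immediate from $c(\id_h, z) = \id_h$: $\ \id_h \cdot (U,V,x) = (U c(\id_h, Vx)\inv, V, x) = (U,V,x)$. For the multiplication axiom, write $v = Vx$; then $g_1 \cdot \bigl(g_2 \cdot (U,V,x)\bigr) = g_1 \cdot \bigl(U c(g_2,v)\inv, g_2 V, x\bigr) = \bigl(U c(g_2,v)\inv c(g_1, g_2 v)\inv,\ g_1 g_2 V,\ x\bigr)$, and taking inverses in the cocycle identity gives $c(g_2,v)\inv c(g_1, g_2 v)\inv = \bigl(c(g_1, g_2 v)\, c(g_2, v)\bigr)\inv = c(g_1 g_2, v)\inv$, so this equals $(g_1 g_2) \cdot (U,V,x)$.

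Finally, invariance of the extended feedforward function follows from $c(g,v)\sigma(v) = \sigma(gv)$ with $v = Vx$: $\ \widetilde{F}\bigl(g \cdot (U,V,x)\bigr) = U c(g,Vx)\inv \sigma(gVx) = U c(g,Vx)\inv c(g,Vx)\, \sigma(Vx) = U\sigma(Vx) = \widetilde{F}(U,V,x)$. I do not expect a genuine obstacle at this level: each step is a one-line consequence of the three defining properties of $c$, and the only subtlety, preservation of the non-degenerate locus, is just the remark that an invertible matrix sends nonzero vectors to nonzero vectors. The real work sits one level below, in Lemma~\ref{lem:c-k=1}, whose proof must construct $c$ and establish its cocycle and equivariance relations via the orthogonal rescaling matrices $R_z$ of Proposition~\ref{prop:rotations}; here that lemma is taken as given.
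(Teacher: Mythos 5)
Your proposal is correct and follows essentially the same route as the paper: the paper likewise deduces the corollary directly from Lemma~\ref{lem:c-k=1}, using the identity $c(g,z)\sigma(z)=\sigma(gz)$ at $z=Vx$ for invariance of $\widetilde{F}$, with the unit and multiplication axioms following from the cocycle properties of $c$ exactly as in your computation (and as spelled out in the proof of Theorem~\ref{thm:non-linear-general}). Your write-up is simply a more detailed version of the paper's terse argument, including the (correct) observation that invertibility of $g$ preserves the non-degenerate locus.
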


\begin{proof}[Proof of Lemma \ref{lem:c-k=1}]
	Any $z \in \R^{h} \setminus \{0\}$ can be written as  $z = ge$, for some $g \in \GL_h$, where 
	$e = (1, 0, \dots, 0)$ is the first basis vector of $\R^h$. 
	Define a map 
	$c(-,e) : \GL_h \to \GL_h$
	by setting  $c(g,e)$ to be any matrix that maps ${\sigma}(e)$ to ${\sigma}(ge)$, with the condition that $c(g_1,e) = c(g_2,e)$ whenever $g_1 e = g_2 e$. We discuss a possible construction of $c(-,e)$  below. Next, we define a map:
	$$c : \GL_h \times \left(\R^{h} \setminus \{0\}\right)  \to \GL_h \qquad (g,z) \mapsto c(g\gamma, z) c(\gamma, e)\inv$$
	where $\gamma \in \GL_h$ is any element taking $e$ to $z$.  We note that the definition of $c(g,z)$ does not depend on the choice of $\gamma$ taking $e$to $z$ since we have required $c(g_1,e) = c(g_2,e)$ whenever $g_1 e = g_2 e$. 
	
	To verify the identities, fix $\gamma \in \GL_h$ such that $\gamma e =z$. Then:
	$$c(\id_h, z) = c(\id_h \gamma, e) c(\gamma,e)\inv = c(\gamma, e) c(\gamma,e)\inv =\id_h$$
	Similarly,
	\begin{align*}
	c(g_1, g_2z) c(g_2, z) &= c(g_2g_1 \gamma, e) c(g_2\gamma,e)\inv c(g_2\gamma, e) c(\gamma, e)\inv = c(g_2g_1 \gamma, e) c(\gamma, e)\inv  \\ &=c(g_1g_2, e)
	\end{align*}
	where we use the fact that $g_2z= g_2\gamma e$. And finally:
	$$c(g,z)\tilde{\sigma}(z) =  c(g\gamma, e) c(\gamma, e)\inv \tilde{\sigma}(\gamma e) = c(g\gamma, e) \sigma(e) = \tilde{\sigma}(g \gamma e) = \tilde{\sigma}(gz)$$
	where we use the defining property of $c(-,e)$ twice. 
\end{proof}

\begin{proof}[Proof of Corollary \ref{cor:non-deg-k=1}]
	The  identity relating $c$ and $\sigma$ implies that $c(g,z)\inv \sigma(gz) = \sigma(z)$ for all non-zero $z \in \R^{h }$; the proof of the corollary follows from the case $z = Vx$. 
\end{proof}

We remark on a possible way  to define the map $c(-,e) : \GL_h \to \GL_h$. Let $g \in \GL_h$. If ${\sigma}(ge)$ is a multiple of ${\sigma}(e)$, take $g$ to be the corresponding scalar matrix. Otherwise, set $u_e = \frac{\sigma(e)}{|\sigma(e)|}$ and $u_{ge} = \frac{\sigma(ge)}{|\sigma(ge)|}$ to be the unit vectors in the directions of $\sigma(e)$ and $\sigma(ge)$, and let $t \in \GL_h$ be the unique positive rescaling that takes ${\sigma}(e)$ and ${\sigma}(ge)$ to $u_e$ and $u_{ge}$, respectively. Let $q \in \GL_h$ be the (unique?) rotation that takes $u_e$ to $u_{ge}$ and fixes the codimension 2 hyperplane orthogonal to ${\sigma}(e)$ and ${\sigma}(ge)$. Then $g= qt$ takes $\sigma(e)$ to $\sigma(ge)$. If $\tilde{\sigma}$ is continuous, then $c(-,e)$ can be chosen to be continuous as well. 
}

\out{
Let $\bfn = (n_0, n_1, \dots, n_L)$ be a dimension vector for a feedforward network with $L$ layers. The parameter space is:
\begin{equation*} 
\Par(\mathbf{n}) =\R^{n_1 \times n} \times \R^{n_2 \times n_1} \times \cdots  \times  \R^{n_{L-1} \times n_{L-2}} \times \R^{m \times n_{L-1}}  \times \R^{n_1} \times \R^{n_2} \times \cdots \times \R^{n_L}.
\end{equation*}
So for each  layer $i$, we have a matrix $W_i \in \R^{n_i \times n_{i-1}}$ and vector $b_i \in \R^{n_i}$. We write $\vtheta = (W_i, b_i)_{i=1}^L$ for a choice of parameters.  Fix activations $\sigma_i : \R^{n_i }\to \R^{n_i}$. 
Set $\tilde{F}_i : \Par \times \R^{n_0 }  \to \R^{n_i  }$ to be 
}

\subsection{Non-linear action: multi-layer case}
\label{apsubsec:non-linear-multi-layer}

We adopt the notation of Section \ref{apsubsec:nn-parameter-space}. In particular, consider a neural network with $L$ layers and widths $\mathbf{n} = (n_0, n_1, \dots, n_L)$. The parameter space is given by:
\begin{equation*} 
\Par(\mathbf{n}) =\R^{n_L \times n_{L-1}} \times \R^{n_{L-1} \times n_{L-2}} \times \cdots  \times  \R^{n_{2} \times n_{1}} \times \R^{n_1 \times n_{0}}  \times \R^{n_L} \times \R^{n_{L-1}} \times \cdots \times \R^{n_1}.
\end{equation*}
So for each  layer $i$, we have a matrix $W_i \in \R^{n_i \times n_{i-1}}$ and vector $b_i \in \R^{n_i}$. We write $\vtheta = (W_i, b_i)_{i=1}^L$ for a choice of parameters.  Fix activations $\sigma_i : \R^{n_i} \to \R^{n_i}$ for each $i = 1, \dots, L$. Let  
\begin{equation*}
F  = F_{\vtheta} : \R^{n_0} \to \R^{n_L}
\end{equation*}
be the feedforward function corresponding to parameters $\vtheta = (\mathbf{W}, \bfb) \in \Par$ with activations $\sigma_i$.  Taking the parameters into account, we form the extended feedforward function: 
\begin{equation*}
\widetilde{F} : \Par \times \R^{n_0} \to \R^{n_L}, \qquad \widetilde{F}(\vtheta,x) = F_{\vtheta}(x)
\end{equation*}
One can also define the extension of the partial feedforward function $\widetilde{F}_{i} : \Par \times \R^n \to \R^{n_i}$ as $(\vtheta, x) \mapsto F_{\vtheta, i}(x)$.  
Furthermore, let $Z_i : \Par \times \R^n \to \R^{n_i}$  be the function defined recursively as:
$$Z_i (\vtheta, x) = \begin{cases}
x  &  \text{if $i=0$} \\
W_1x + b_1 &\text{if $i=1$} \\
W_i \sigma_{i-1}(Z_{i-1} (\vtheta, x)) + b_i & \text{for $i = 2, \dots, L$}
\end{cases}$$
We have $\tilde{F}_i = \sigma_i \circ Z_i$ for $i = 1, \dots, L$, and the extended feedforward function is  $\tilde{F} =\sigma_L \circ Z_L$. Define the non-degenerate locus as:
$$ \left( \Par \times \R^{n}\right)^\circ = \{ (\vtheta, x) \ | \ Z_i(\vtheta, x) \neq 0   \ \text{for $i = 1, \dots, L-1$} \}.$$

\begin{proposition} \label{prop:nonlinear-k=1}
	Suppose that, for $i = 1, \dots, L-1$,  the activation  $\sigma_i : \R^{n_i } \to \R^{n_i }$ satisfies $\sigma_i\inv(0) \subseteq \{0\}$.
 Then  there is an action of the hidden symmetry group $\GLhid$ on the non-degenerate locus  given by:
\begin{align*}
    \GLhid \times (\Par \times \R^n)^\circ & \to (\Par \times \R^n)^\circ \\
    g \cdot (\vtheta, x) &= \left( 
	\left( 
	g_i W_i R_{\tilde{F}_{i-1}(\vtheta, x)} R_{\sigma_i(g_{i-1} Z_{i-1}(\vtheta, x))}\inv  \ ,\  g_i b_i 
	\right)_{i=1}^{L-1} \ , \ x 
	\right)
\end{align*}
 Moreover, this action preserves the extended feedforward function.
	\end{proposition}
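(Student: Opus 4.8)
The plan is to mirror the two-layer argument (Theorem~\ref{thm:non-linear-general}), with the single cocycle $c(g,z)=R_{\sigma(gz)}R_{\sigma(z)}\inv$ replaced by the layerwise data-dependent rotation factors occurring in the displayed action. Three things need to be checked: (i) the formula is well defined on the non-degenerate locus and preserves it; (ii) the unit and multiplication axioms of a group action hold; and (iii) the extended feedforward function is unchanged. I would deduce all three from one \emph{tracking lemma} describing how the hidden feature vectors $Z_i$ transform.

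First I would prove the tracking lemma: for $(\vtheta,x)$ in the non-degenerate locus and $\bfg=(g_1,\dots,g_{L-1})\in\GLhid$, writing $\vtheta'=\bfg\cdot\vtheta$ and using the conventions $g_0=\id_{n_0}$, $g_L=\id_{n_L}$, one has
\begin{equation*}
Z_i(\vtheta',x)=g_i\,Z_i(\vtheta,x),\qquad i=0,1,\dots,L.
\end{equation*}
This is an induction on $i$. The case $i=0$ is trivial; the first-layer step is immediate since $Z_1=W_1x+b_1$ transforms to $g_1W_1x+g_1b_1$ (the two rotation factors at layer $1$ collapse because $\tilde F_0=Z_0=x$). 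For the step at a layer $i\ge2$, assume $Z_{i-1}(\vtheta',x)=g_{i-1}Z_{i-1}(\vtheta,x)$, so that $\tilde F_{i-1}(\vtheta',x)=\sigma_{i-1}\big(g_{i-1}Z_{i-1}(\vtheta,x)\big)$. Substituting into $Z_i(\vtheta',x)=W_i'\,\tilde F_{i-1}(\vtheta',x)+b_i'$ and using the two defining properties of the rotation matrices from Proposition~\ref{prop:rotations}, namely $R_w\inv w=e_1$ (applied with $w=\sigma_{i-1}(g_{i-1}Z_{i-1}(\vtheta,x))$) followed by $R_z e_1=z$ (applied with $z=\tilde F_{i-1}(\vtheta,x)=\sigma_{i-1}(Z_{i-1}(\vtheta,x))$), the data-dependent factors telescope away and only $g_i\big(W_i\tilde F_{i-1}(\vtheta,x)+b_i\big)=g_iZ_i(\vtheta,x)$ survives. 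The rotation factors are well defined along the way: non-degeneracy gives $Z_{i-1}(\vtheta,x)\ne0$, invertibility of $g_{i-1}$ gives $g_{i-1}Z_{i-1}(\vtheta,x)\ne0$, and $\sigma_{i-1}\inv(0)\subseteq\{0\}$ then gives $\sigma_{i-1}(g_{i-1}Z_{i-1}(\vtheta,x))\ne0$; and the non-degenerate locus is preserved since $Z_i(\vtheta',x)=g_iZ_i(\vtheta,x)$ is nonzero whenever $Z_i(\vtheta,x)$ is.

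Granting the lemma, the remaining claims are short. Feedforward invariance is the case $i=L$: since $g_L=\id_{n_L}$, we get $Z_L(\vtheta',x)=Z_L(\vtheta,x)$, hence $\tilde F(\vtheta',x)=\sigma_L(Z_L(\vtheta',x))=\tilde F(\vtheta,x)$. The unit axiom is immediate, as for $\bfg=\id$ the two rotation factors in each layer coincide and cancel and $g_iW_i=W_i$, $g_ib_i=b_i$. For the multiplication axiom $\bfg\cdot(\bfg'\cdot(\vtheta,x))=(\bfg\bfg')\cdot(\vtheta,x)$, the crucial point is that when $\bfg$ acts on the already transformed parameters $\bfg'\cdot\vtheta$, the arguments of its rotation matrices are the feature vectors of $\bfg'\cdot\vtheta$, which by the lemma equal $g'_{i-1}Z_{i-1}(\vtheta,x)$; the layer-$i$ weight then becomes a product of four rotation factors whose two inner ones cancel, leaving exactly the layer-$i$ factor of the $(\bfg\bfg')$-action. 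This is the multi-layer analogue of the cocycle identity $c(g_1,g_2z)\,c(g_2,z)=c(g_1g_2,z)$ from the two-layer proof.

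The main obstacle is the bookkeeping in the multiplication axiom: one has to keep straight that the data-dependent rotation factors of a composed action are evaluated at the feature vectors of the \emph{partially} transformed network, not the original one, and verify that the four-factor products telescope consistently at every layer. The tracking lemma is exactly what makes this possible, so the bulk of the work is stating and proving that lemma carefully (including the boundary conventions $g_0=\id$, $g_L=\id$ and the degenerate layer-$1$ factor where $\tilde F_0=Z_0=x$ forces the rotations to cancel). Everything downstream of the lemma is routine.
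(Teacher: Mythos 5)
Your proposal is correct and follows essentially the same route as the paper: the paper's proof also rests on verifying by induction that $Z_i(g\cdot(\vtheta,x)) = g_i Z_i(\vtheta,x)$ for $i=0,\dots,L$ (your tracking lemma), deduces feedforward invariance from $g_L=\id$, gets well-definedness from non-degeneracy together with $\sigma_i\inv(0)\subseteq\{0\}$, and handles the unit and multiplication axioms by the same telescoping-cocycle argument as in the two-layer Theorem \ref{thm:non-linear-general}. Your write-up is merely more explicit about the point that the rotation factors of a composed action are evaluated at the features of the partially transformed network, which the paper leaves implicit by citing the two-layer proof.
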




\begin{proof} The fact that the action is well-defined follows from the assumption on each $\sigma_i$ and the non-degeneracy condition. The unit and multiplication axioms are shown in the same way as in the proof of Theorem \ref{thm:non-linear-general}. For the last claim, one first verifies by induction that $Z_i (g \cdot (\vtheta, x)) = g_i Z_i(\vtheta, x)$ for $i = 0, 1, \dots, L$. 
	Hence, 
	$$\tilde{F}(g \cdot \vtheta, x) =  \sigma_L( Z_{L} (g \cdot (\vtheta, x))) =  \sigma_L( g_L Z_{L} (\vtheta, x)) =  \sigma_L(  Z_{L} (\vtheta, x))  = \tilde{F}(\vtheta, x) $$
	using the fact that $g_L$ is the identity. So the extended feedforward function is preserved under this action. 
\end{proof}

\out{
such that there is an action of the hidden symmetry group $\GLhid$  given by: $$g \cdot (\vtheta, x) = \left( 
\left( 
g_i W_i c(g_{i-1}, Z_{i-1}(\vtheta) x) \inv \ ,\  g_i b_i 
\right)_{i=1}^{L-1} \ , \ x 
\right) $$

In order to define a group action, we make the following assumption:
\begin{itemize}
	\item The map $\sigma_i : \R^{n_i } \to \R^{n_i }$ satisfies $\sigma_i\inv(0) \subseteq \{0\}$. 

\end{itemize}
With this assumption in place, for $i = 1, \dots, L-1$, we define:
$$c_i : \GL_{n_i} \times \left( \R^{n_i}\right)^\circ \to \GL_{n_i}$$ 
as above (replacing $h$ by $n_i$). The action of the hidden symmetry group $\GLhid$ is now given by:
$$g \cdot (\vtheta, x) = ( g_i W_i c(g_{i-1}, Z_{i-1}(\vtheta, x) )\inv, g_i b_i, x ) $$
One checks that  $Z_i (g \cdot (\vtheta, x)) = g_i Z_i(\vtheta, x)$
for $i = 0, 1, \dots, L$. Hence, 
$$F(g \cdot \vtheta, x) =  \sigma_L( Z_{L} (g \cdot (\vtheta, x))) =  \sigma_L( g_L Z_{L} (\vtheta, x))$$

the extended feedforward function is preserved under this action. 
}


\subsection{Discussion: increasing  the batch size}\label{subsec:defcge}

In this section, we discuss difficulties in adopting the construction of the previous sections to cases where the batch size is greater than one. 
Fix a batch size $k$, so that the feature space of the hidden layer is $\R^{h \times k}$. By abuse notation, we write $\sigma : \R^{ h\times k } \to \R^{h \times k}$ for the map applying $\sigma$ column-wise. We say that $\sigma$ {\it preserves full rank matrices} if $\sigma(Z)$ is full rank for any full-rank matrix in $Z \in  \R^{h \times k}$.  As a final piece of notation, let  $\fulrk \subseteq \R^{h \times k}$ be the subset of full rank matrices.

\begin{lemma}\label{lem:c-k}
	Suppose that $k \leq h$, and that $\sigma$ preserves full-rank matrices. Then there exists a map $c : \GL_h \times \left(\R^{h \times k} \setminus \{0\}\right) \to \GL_h$  satisfying the following identities for any nonzero  $Z \in \R^{ h\times k}$ and $g, g_1, g_2 \in \GL_h$:
	\begin{align}
	c(\id_h, Z) &= \id_h  \\
	c(g_1, g_2Z) c(g_2,Z) &= c(g_1g_2, Z)  \\
	c(g,Z){\sigma}(Z) &= {\sigma}(gZ) 
	\end{align}
\end{lemma}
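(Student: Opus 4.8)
The plan is to build $c$ one $\GL_h$-orbit at a time, for the left action $g\cdot Z = gZ$ on $\R^{h\times k}\setminus\{0\}$, generalizing the base-point/cocycle idea behind Proposition \ref{prop:rotations} and Theorem \ref{thm:non-linear-general}. First I would record the orbit structure: two matrices lie in the same orbit iff they have the same null space $\mathrm{null}(Z)=\{v\in\R^k : Zv=0\}$. Indeed, left multiplication by an invertible matrix preserves null spaces, and conversely, if $\mathrm{null}(Z)=\mathrm{null}(Z')$ one may choose a common index set of independent columns, match them (they remain independent and carry the same linear dependencies), and extend to an element of $\GL_h$ sending $Z$ to $Z'$. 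Hence orbits are indexed by the nonzero subspaces of $\R^k$ occurring as row spaces, and the full-rank matrices (null space $\{0\}$, which since $k\le h$ means rank exactly $k$) form a single, open, dense orbit $\mathcal{O}_{\mathrm{full}}$. It therefore suffices to define $c$ on each orbit separately.

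On $\mathcal{O}_{\mathrm{full}}$ I would give a uniform, base-point-free construction. Let $E_k\in\R^{h\times k}$ be the matrix whose columns are the first $k$ standard basis vectors of $\R^h$. For every full-rank $W\in\R^{h\times k}$ choose, once and for all, a matrix $R_W\in\GL_h$ whose first $k$ columns are the columns of $W$, i.e.\ $R_W E_k = W$; such an $R_W$ exists because the $k$ independent columns of $W$ extend to a basis of $\R^h$. Since $\sigma$ preserves full-rank matrices and $Z\mapsto gZ$ does too, both $\sigma(Z)$ and $\sigma(gZ)$ are full rank for $Z\in\mathcal{O}_{\mathrm{full}}$, so I may set $c(g,Z)=R_{\sigma(gZ)}R_{\sigma(Z)}^{-1}$. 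The three identities then drop out formally, exactly as in Theorem \ref{thm:non-linear-general}: $c(\id_h,Z)=\id_h$; the cocycle identity is $c(g_1,g_2Z)\,c(g_2,Z)=R_{\sigma(g_1g_2Z)}R_{\sigma(g_2Z)}^{-1}R_{\sigma(g_2Z)}R_{\sigma(Z)}^{-1}=R_{\sigma(g_1g_2Z)}R_{\sigma(Z)}^{-1}=c(g_1g_2,Z)$; and $c(g,Z)\sigma(Z)=R_{\sigma(gZ)}R_{\sigma(Z)}^{-1}\sigma(Z)=R_{\sigma(gZ)}E_k=\sigma(gZ)$, using $R_{\sigma(Z)}^{-1}\sigma(Z)=E_k$.

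For each remaining (rank-deficient) orbit $\mathcal{O}$ I would fall back on the base-point scheme. Fix a representative $Z_{\mathcal{O}}$ and, for every $Z\in\mathcal{O}$, a frame $\gamma_Z\in\GL_h$ with $\gamma_Z Z_{\mathcal{O}}=Z$; choose a primitive cocycle $c(-,Z_{\mathcal{O}}):\GL_h\to\GL_h$ where $c(g,Z_{\mathcal{O}})$ is an invertible matrix sending $\sigma(Z_{\mathcal{O}})$ to $\sigma(gZ_{\mathcal{O}})$, taken to depend only on the point $gZ_{\mathcal{O}}$ (so that $c(g_1,Z_{\mathcal{O}})=c(g_2,Z_{\mathcal{O}})$ whenever $g_1Z_{\mathcal{O}}=g_2Z_{\mathcal{O}}$). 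Then I would set $c(g,Z)=c(g\gamma_Z,Z_{\mathcal{O}})\,c(\gamma_Z,Z_{\mathcal{O}})^{-1}$, and the verification of the unit, multiplication, and intertwining identities is identical to the argument used for $\mathcal{O}_{\mathrm{full}}$ (the choice of $\gamma_Z$ drops out by the consistency condition).

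The main obstacle is the existence of this primitive cocycle on the rank-deficient orbits: the step demanding an invertible matrix sending $\sigma(Z_{\mathcal{O}})$ to $\sigma(gZ_{\mathcal{O}})$ is possible if and only if $\sigma(Z_{\mathcal{O}})$ and $\sigma(gZ_{\mathcal{O}})$ share the same null space, i.e.\ iff $\sigma$ carries the single orbit $\mathcal{O}$ into a single orbit. For $\mathcal{O}_{\mathrm{full}}$ this is immediate from the hypothesis, but for a rank-deficient $\mathcal{O}$ it does not follow from ``$\sigma$ preserves full-rank'' alone: the columns of $Z$ and $gZ$ obey identical linear dependencies, yet applying $\sigma$ column-wise generally destroys these dependencies in a way that varies with $g$, so $\mathrm{null}(\sigma(gZ))$ need not be constant along $\mathcal{O}$. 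Controlling this is exactly the difficulty this section foregrounds; the cleanest fix is to strengthen the hypothesis on $\sigma$ to ``maps each $\GL_h$-orbit into a $\GL_h$-orbit'' (which subsumes the needed null-space preservation), and a secondary route is to use continuity of $\sigma$ together with lower semicontinuity of rank to transfer control from $\mathcal{O}_{\mathrm{full}}$ to its closure. I would pursue the former as the primary line of attack.
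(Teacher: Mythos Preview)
The paper explicitly omits the proof of this lemma, saying only that ``a key tool is the fact that, for $k\le h$, any two matrices in $\left(\R^{h\times k}\right)^\circ$ are related by an element of $\GL_h$.'' Your construction on the full-rank orbit via the frames $R_W$ (with $R_W E_k=W$) is exactly the natural use of that tool, and your verification of the three identities there is correct. Your base-point extension scheme for other orbits is also set up correctly and matches the (commented-out) $k=1$ argument in the paper.

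Your concern about rank-deficient orbits is not a technicality to be patched; it is a genuine gap, and in fact the lemma as literally stated is \emph{false}. Take $h=k=2$ and let $\sigma$ act column-wise by $\sigma(z)=(1+|z|^2)\,z$. This rescales each column by a positive scalar, so it preserves full rank; the hypothesis holds. Now let
\[
Z=\begin{bmatrix}1&2\\0&0\end{bmatrix},\qquad g=\begin{bmatrix}3&0\\0&1\end{bmatrix}.
\]
Then $\sigma(Z)=\begin{bmatrix}2&10\\0&0\end{bmatrix}$ has kernel spanned by $(5,-1)^T$, while $\sigma(gZ)=\begin{bmatrix}30&222\\0&0\end{bmatrix}$ has kernel spanned by $(37,-5)^T$. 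Since any $c\in\GL_h$ with $c\,\sigma(Z)=\sigma(gZ)$ would force $\ker\sigma(Z)=\ker\sigma(gZ)$, no such $c$ exists. Note that $\sigma$ is continuous here, so your secondary route via continuity and lower semicontinuity of rank cannot rescue the statement either.

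What the paper actually uses afterward is the restriction to the non-degenerate locus where every $Z_i(\vtheta,X)$ is full rank; the domain $\R^{h\times k}\setminus\{0\}$ in the lemma appears to be a slip carried over from the $k=1$ case (where ``nonzero'' and ``full rank'' coincide). Your argument proves precisely the statement that is needed, namely the lemma with domain $\left(\R^{h\times k}\right)^\circ$. If one insists on extending to all nonzero $Z$, your primary fix---strengthening the hypothesis to ``$\sigma$ maps each $\GL_h$-orbit into a single $\GL_h$-orbit''---is exactly the right additional assumption, and your base-point construction then goes through on every orbit.
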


We omit a proof of this lemma. A key tool is the fact that, for $k \leq h$,   any two matrices in $\fulrk$ are related by an element of $\GL_h$. This lemma implies that, for a multi-layer network, if $\sigma_i$ preserves full rank matrices in $\R^{n_i \times k}$ for each $i$, then there is a non-linear group action as in Proposition \ref{prop:nonlinear-k=1}, where the appropriate version of the non-degenerate locus is:
$$ \left( \Par \times \R^{n_0 \times k}\right)^\circ = \{ (\vtheta, X) \ | \ Z_i(\vtheta, X) \in \R^{n_i\times k} \ \text{is of full rank for $i = 1, \dots, L-1$} \}.$$
However, as the following examples show, the condition that $\sigma$ preserves full rank matrices is not  satisfied in the case of common activation functions. 

\begin{example} 
	
\begin{enumerate}
	
	\item For $k >1$, the column-wise application of the usual sigmoid activation does not preserve full rank matrices. For example, for $k=2$, take:
	$$Z =  \begin{bmatrix}
	\sigma\inv\left( \frac{1}{5}\right)  &  \sigma\inv\left( \frac{2}{5}\right)  \\  \sigma\inv\left( \frac{2}{5}\right)  &  \sigma\inv\left( \frac{4}{5}\right) 
	\end{bmatrix} \simeq  \begin{bmatrix}
	-1.3863 &  -0.4055 \\
	-0.4055 &   1.3863
	\end{bmatrix}$$
	Then $\det(\sigma(Z) )= 0$ while $\det(Z)= -2.0862$. 
	
	\item For $k >1$, the column-wise application of hyperbolic tangent does not preserve full rank matrices. To see this, set $k=1$ and consider:
	$$Z =  \begin{bmatrix}
	\tanh\inv\left( \frac{1}{5}\right)  &  \tanh\inv\left( \frac{2}{5}\right)  \\  \tanh\inv\left( \frac{2}{5}\right)  &  \tanh\inv\left( \frac{4}{5}\right) 
	\end{bmatrix} \simeq  \begin{bmatrix}
	0.2027 &  0.4236 \\
	0.4236 &  1.0986
	\end{bmatrix}$$
	Then $\det(\tanh(Z) )= 0$ while $\det(Z)= 0.0432$.
	
	\item Let s be a real number with $0<s<1$. The corresponding  leaky ReLU activation function is given by $\sigma(z) = sz\min(0,z) + z\max(0,z)$. 		For $k >1$, the column-wise application of leaky ReLU tangent does not preserve full rank matrices. Indeed, for $k=2$, set:
	$$Z =  \begin{bmatrix}
	s  &  -1  \\ -1  &  s
	\end{bmatrix}$$
	Then $\det(\sigma(Z) )= \det\left( \begin{bmatrix}
	s & -s \\ -s & s
	\end{bmatrix}\right) = 0$ while $\det(Z)= s^2 -1 \neq 0$.
\end{enumerate}

\end{example}

Finally, in the case $k > h$,  the action of $\GL_h$ on full rank $h \times k$ matrices is not transitive. Hence,  there will generally be no matrix in $\GL_h$ taking $\sigma(Z)$ to  $\sigma(gZ)$, even if both are full rank. 

\subsection{Lipschtiz bounds}
\label{apsubsec:lipschitz}

\begin{proof}[Proof of Proposition \ref{prop:Lipschitz}] Let $(U,V,x)$ be in the non-degenerate locus, let $g \in \GL_h$, and let $ x_1, x_2 \in \R^n$. Using the Lipschitz constant of $\sigma$ and the definition of operator norms, we compute:
\begin{align*}
    | F^{(g,x)}_{(U,V)}( x_1 - x_2) | & \leq | U R_{\sigma(Vx)}R_{\sigma(gVx)}\inv \sigma(gV(x_1 - x_2)) | \\
    &\leq \eta \| U \| \|R_{\sigma(Vx)} \| \|R_{\sigma(gVx)}\inv\| \|g\| \| V\| |x_1 - x_2| \\
    & = \eta \| U \| |\sigma(Vx) | \|\frac{1}{|\sigma(gVx)|^2} R_{\sigma(gVx)}^T \| \|g\|  \| V\| |x_1 - x_2| \\
    & = \eta \| U \| \frac{|\sigma(Vx)|\|g\| }{|\sigma(gVx)|} \| V\| |x_1 - x_2| 
\end{align*}
The result follows.
\end{proof}

\section{Gradient descent and drifting conserved quantities}
\label{appendix:sgd-Q-bound}


While gradient flows are well approximated by gradient descent \citep{elkabetz2021continuous}, the conserved quantities of gradient descent are no longer conserved in gradient flow due to non-infinitesimal time steps. 
However, with small learning rate, we expect the change in the conserved quantities to be small. In this section, we first prove that the change of $Q$ is bounded by the square of learning rate for two layer linear networks, then show empirically that the change $Q$ is small for nonlinear networks. 



\subsection{Change in $Q$ in gradient descent (linear layers)}
\begin{proposition}
\label{prop:delta-Q-bound}
Consider the two layer linear network, where $U \in \R^{m \times h}, V \in \R^{h \times n}$ are the only parameters, and the loss function $\L$ is a function of $UV$. In gradient descent with learning rate $\eta$, the change in the conserved quantity $Q=\Tr\br{U^T U - V V^T}$ at step $t$ is bounded by 
\begin{align}
    |Q_{t+1} - Q_{t}| \leq \eta^2 \left| \frac{d \L(t)}{dt} \right|.
\end{align}
\end{proposition}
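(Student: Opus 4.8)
\emph{Proof plan.} The plan is to substitute the gradient-descent update into $Q_{t+1}$, observe that every term linear in the learning rate $\eta$ cancels — this cancellation is exactly the statement that $Q$ is an infinitesimal conserved quantity, cf.\ Proposition~\ref{prop:QM-sym} — and then bound the residual $\eta^2$ term by the gradient-flow rate of change of $\L$. Since $\L$ depends on $(U,V)$ only through the product $P = UV$, the chain rule gives $\grad_U \L = N V^T$ and $\grad_V \L = U^T N$, where $N := \grad_P \L$ is evaluated at $P = U_t V_t$. Hence the update at step $t$ reads $U_{t+1} = U_t - \eta\, N V_t^T$ and $V_{t+1} = V_t - \eta\, U_t^T N$.

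Next I would expand $\Tr(U_{t+1}^T U_{t+1})$ and $\Tr(V_{t+1} V_{t+1}^T)$, using cyclicity and transpose-invariance of the trace; in particular $\Tr(V_t N^T U_t) = \Tr(U_t^T N V_t^T)$, so both quantities contain the \emph{same} linear term $-2\eta\, \Tr(U_t^T N V_t^T)$. Upon forming $Q_{t+1} = \Tr(U_{t+1}^T U_{t+1}) - \Tr(V_{t+1} V_{t+1}^T)$ these linear terms cancel, leaving
\begin{equation*}
Q_{t+1} - Q_t = \eta^2 \left( \Tr(N^T N\, V_t^T V_t) - \Tr(N N^T\, U_t U_t^T) \right) = \eta^2 \left( \|N V_t^T\|_F^2 - \|U_t^T N\|_F^2 \right),
\end{equation*}
where the second equality is $\|A\|_F^2 = \Tr(A^T A)$ applied to $A = N V_t^T$ and $A = U_t^T N$. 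On the other hand, along the gradient flow $\dot\vtheta = -\grad_\vtheta \L$ one has, at the point $(U_t, V_t)$, that $\tfrac{d\L(t)}{dt} = -\|\grad_{U_t}\L\|_F^2 - \|\grad_{V_t}\L\|_F^2 = -\|N V_t^T\|_F^2 - \|U_t^T N\|_F^2$.

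To finish, set $a = \|N V_t^T\|_F^2 \ge 0$ and $b = \|U_t^T N\|_F^2 \ge 0$; then $|Q_{t+1} - Q_t| = \eta^2 |a - b| \le \eta^2 (a + b) = \eta^2 \left| \tfrac{d\L(t)}{dt} \right|$, using the elementary bound $|a-b| \le a+b$ for nonnegative reals. The one genuinely load-bearing step is the exact cancellation of the $O(\eta)$ cross-terms, which is where the hypothesis that $\L$ factors through $UV$ is used and which must be checked with care regarding the trace identities and transposes; the chain-rule computation of the gradients, the binomial expansion, and the final inequality are all routine.
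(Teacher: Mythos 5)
Your proposal is correct and follows essentially the same route as the paper's proof: substitute the gradient-descent update, use the chain-rule structure $\grad_U\L = N V^T$, $\grad_V\L = U^T N$ (with $N=\grad_{UV}\L$) to cancel the $O(\eta)$ trace terms, and bound the remaining $\eta^2$ difference of squared gradient norms by their sum, identified with $\left|\tfrac{d\L}{dt}\right|$ along gradient flow. No gaps; the trace/transpose manipulations you flag as the load-bearing step are exactly the identities the paper verifies.
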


\begin{proof}
Let $U_{t}$ and $V_{t}$ be the value of $U$ and $V$ at time $t$  in a gradient descent. The update rule is
\begin{align}
    U_{t+1} &= U_t - \eta\frac{\ro \L}{\ro U},
    &V_{t+1} &= V_t - \eta\frac{\ro \L}{\ro V}
\end{align}

Consider the two layer linear reparametrization  $W=UV$.
\begin{align}
    Q_{t} &= \Tr\br{U_t^T U_t - V_t V_t^T} \cr
    Q_{t+1} &= \Tr\br{U_{t+1}^T U_{t+1} - V_{t+1} V_{t+1}^T}
\end{align}

Substituting in $U_{t+1}$ and $V_{t+1}$, expanding $Q_{t+1}$, and subtracting by $Q_{t}$, we have
\begin{align}
    Q_{t+1} - Q_{t} = \Tr &\left[\eta^2\pa{\frac{\ro \L}{\ro U_t}}^T \frac{\ro \L}{\ro U_t} - \eta \pa{\frac{\ro \L}{\ro U_t}}^T U_t - \eta U_t^T \frac{\ro \L}{\ro U_t}\right. \cr
    &- \left. \eta^2\frac{\ro \L}{\ro V_t} \pa{\frac{\ro \L}{\ro V_t}}^T + \eta \frac{\ro \L}{\ro V_t} V_t^T + \eta V_t \pa{\frac{\ro \L}{\ro V_t}}^T\right].  \label{eq:Q-bound1}
\end{align}
Note that 
\begin{align}
    \pa{\frac{\ro \L}{\ro U_t}}^T U_t = (\grad\L V_t^T)^T U_t = V_t \grad\L^T U_t = V_t \pa{\frac{\ro \L}{\ro V_t}}^T,
\end{align}
and similarly
\begin{align}
    U_t^T \frac{\ro \L}{\ro U_t} = \frac{\ro \L}{\ro V_t} V_t^T.
\end{align}
Therefore, \eqref{eq:Q-bound1} simplifies to
\begin{align}
    Q_{t+1} - Q_{t} &= \eta^2\Tr\br{\pa{\frac{\ro \L}{\ro U_t}}^T \frac{\ro \L}{\ro U_t} - \frac{\ro \L}{\ro V_t} \pa{\frac{\ro \L}{\ro V_t}}^T } \cr
    &= \eta^2\pa{\Tr\br{\pa{\frac{\ro \L}{\ro U_t}}^T \frac{\ro \L}{\ro U_t}} - \Tr\br{\pa{\frac{\ro \L}{\ro V_t}}^T \frac{\ro \L}{\ro V_t}}},
\end{align}
and the variation of Q in each step is bounded by the convergence rate:
\begin{align}
    |Q_{t+1} - Q_{t}| &= \eta^2 \left| \Tr\br{\pa{\frac{\ro \L}{\ro U_t}}^T \frac{\ro \L}{\ro U_t}} - \Tr\br{\pa{\frac{\ro \L}{\ro V_t}}^T \frac{\ro \L}{\ro V_t}}\right| \cr
    &\leq \eta^2 \left|\Tr\br{\pa{\frac{\ro \L}{\ro U_t}}^T \frac{\ro \L}{\ro U_t}} + \Tr\br{\pa{\frac{\ro \L}{\ro V_t}}^T \frac{\ro \L}{\ro V_t}}\right| \cr
    &= \eta^2 \left| \frac{d \L}{dt} \right|
\end{align}
\end{proof}

\subsection{Empirical observations}
In gradient flow, the conserved quantity $Q$ is constant by definition. In gradient descent, $Q$ varies with time. In order to see how applicable our theoretical results are in gradient descent, we investigate the amount of variation in $Q$ in gradient descent using two-layer neural networks. 

Since $Q$ is the difference between the two terms $f_1(U)=\frac{1}{2} \Tr[U^TU]$ and $f_2(V)=\sum_{a,j} \int_{x_0}^{V_{aj}} dx \frac{\sigma(x)}{\sigma'(x)}$, we normalize $Q$ by the initial value of $f_1(U)$ and $f_2(V)$, i.e.,
\begin{align*}
    \Tilde{Q} & = \frac{\left|\frac{1}{2} \Tr[U^TU] - \sum_{a,j} \int_{x_0}^{V_{aj}} dx \frac{\sigma(x)}{\sigma'(x)}\right|} {\left|\frac{1}{2} \Tr[U_0^TU_0]\right| + \left|\sum_{a,j} \int_{x_0}^{V_{0_{aj}}} dx \frac{\sigma(x)}{\sigma'(x)}\right|}
\end{align*}
and denote the amount of change in $\Tilde{Q}$ as
\begin{align}
    \Delta\Tilde{Q}(t) &= \Tilde{Q}(t) - \Tilde{Q}(0)
\end{align}

We run gradient descent on two-layer networks with whitened input with the following objective
\begin{align}
    \text{argmin}_{U, V} \{ \L(U, V)= \|Y - U\sigma(V^T)\|_F^2 \}
    \label{eq:elementwise-objective-exp}
\end{align}
where $\sigma$ is the identity function, ReLU, sigmoid, or tanh. $Y \in \R^{5 \times 10}$, $U \in \R^{5 \times 50}$ and $V \in \R^{10 \times 50}$ have random Gaussian initialization with zero mean. We repeat the gradient descent with learning rate 0.1, 0.01, and 0.001.

The variation $\Delta\Tilde{Q}(t)$ and loss is shown in Fig.\ref{fig:elementwise-Q-dynamics}. The amount of change in $Q$ is small relative to the magnitude of $f_1(U)$ and $f_2(V)$, indicating that conserved quantities in gradient flow are approximately conserved in gradient descent. The error in $Q$ grows with step size, as $\Delta\Tilde{Q}(t)$ is larger with the largest learning rate we used, although it has the same magnitude as those of smaller learning rates. We also observe that $Q$ stays constant after loss converges. 

\begin{figure}[ht!]
\centering
\subfigure[linear]{\label{fig:a}\includegraphics[width=0.24\columnwidth]{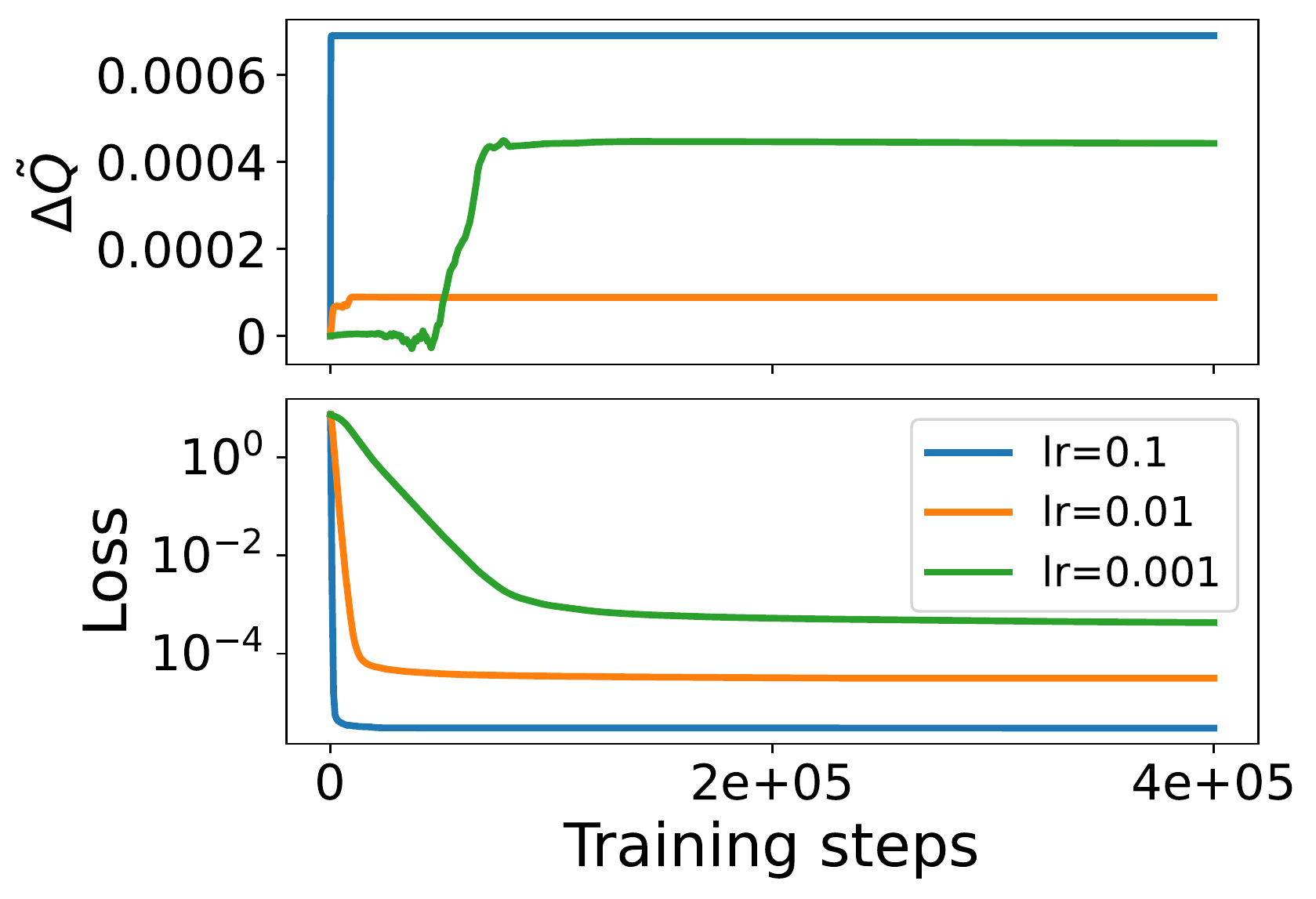}}
\subfigure[ReLU]{\label{fig:b}\includegraphics[width=0.24\columnwidth]{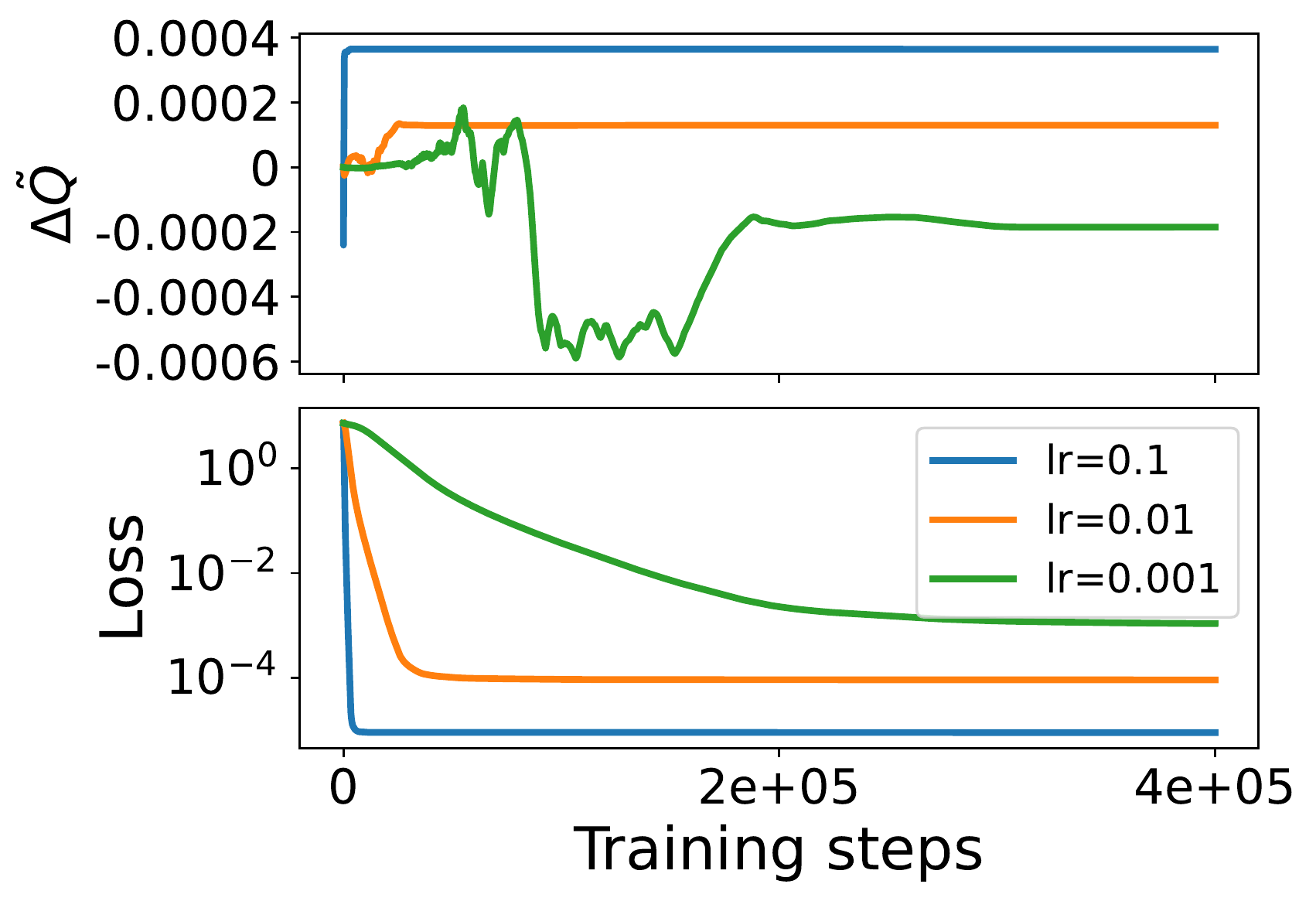}}
\subfigure[tanh]{\label{fig:c}\includegraphics[width=0.24\columnwidth]{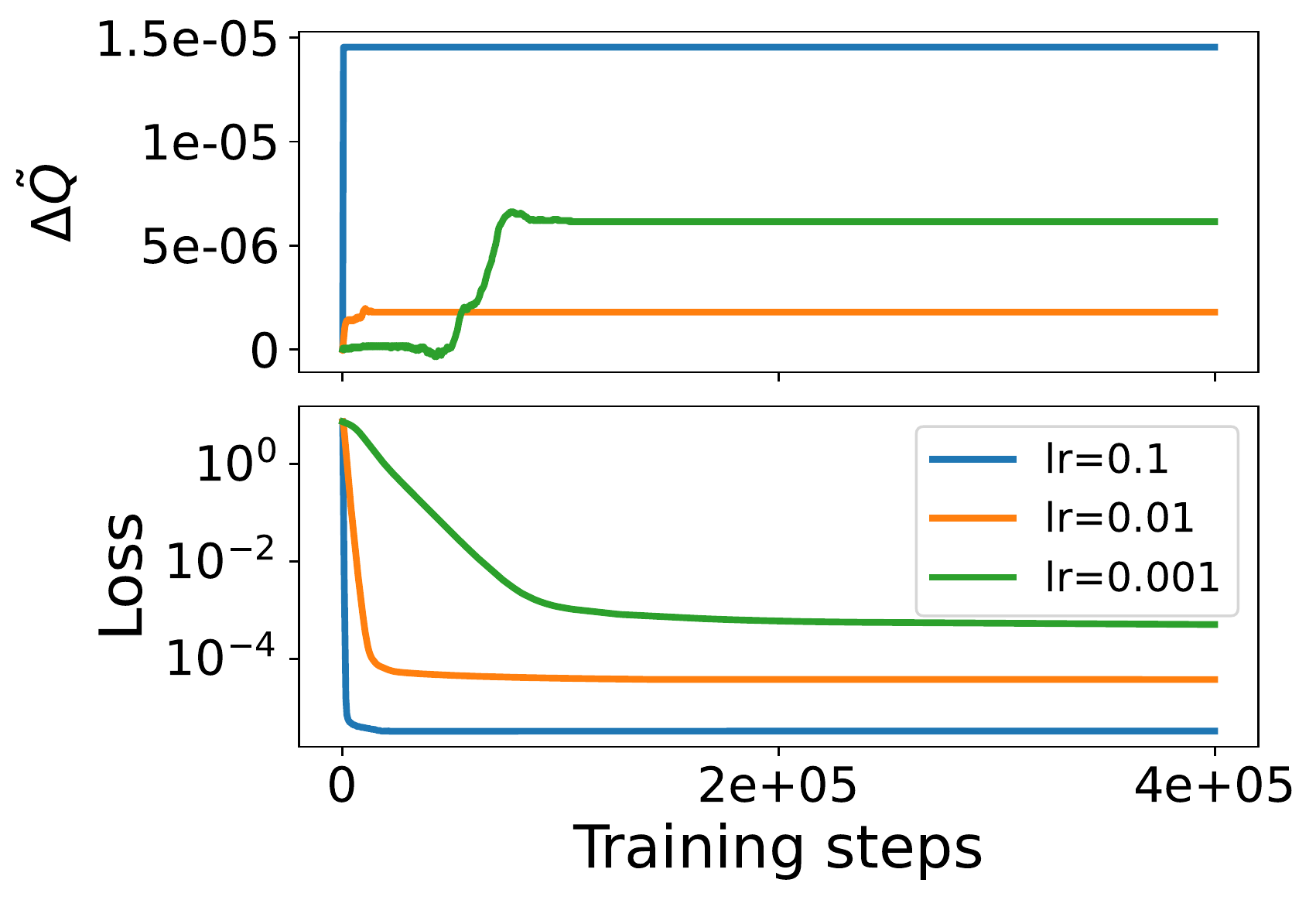}}
\subfigure[sigmoid]{\label{fig:d}\includegraphics[width=0.24\columnwidth]{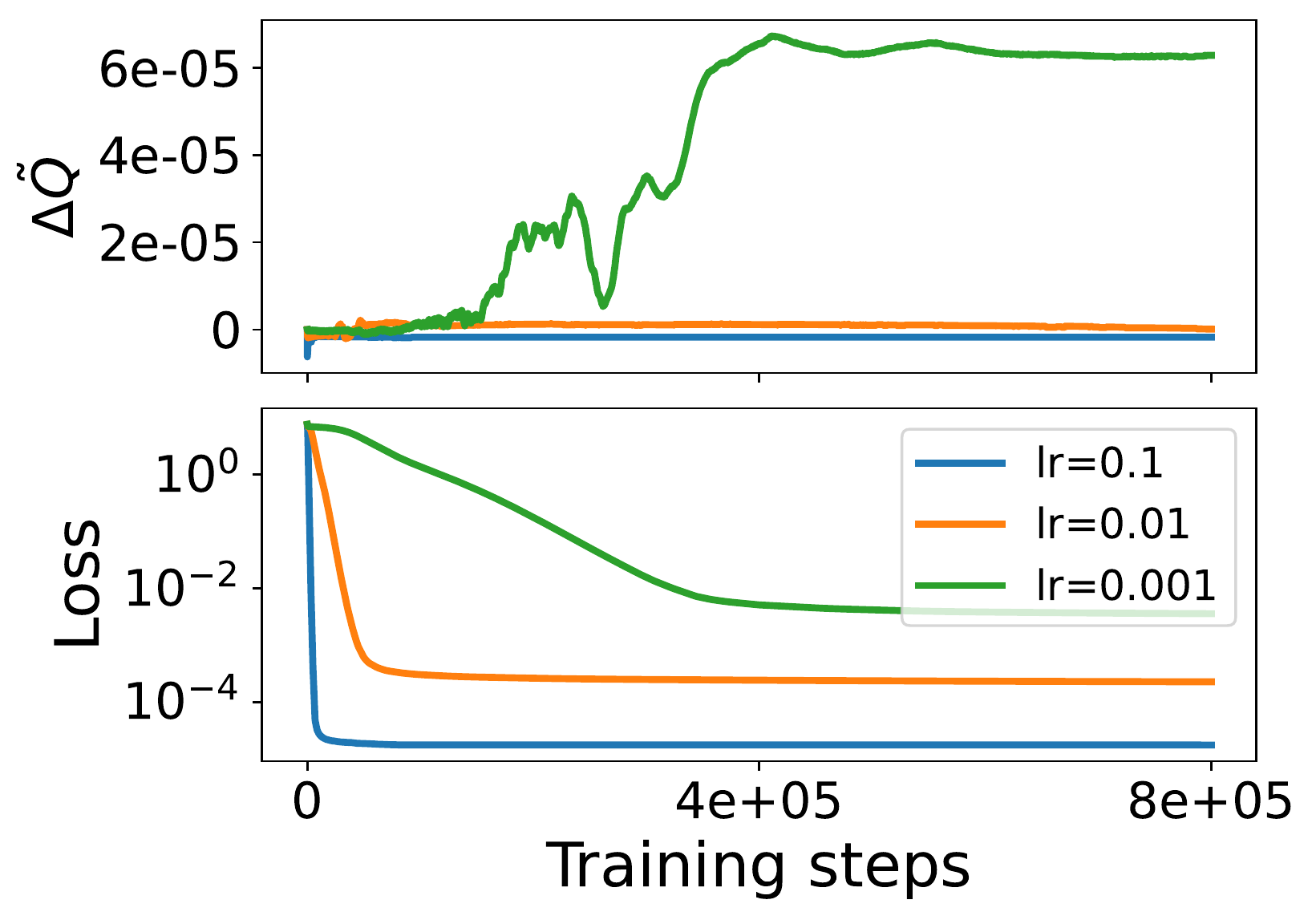}}
\caption{Dynamics of conserved quantities in GD. The amount of change in $Q$ is small relative to its magnitude, and $Q$ converges when loss converges.}
\label{fig:elementwise-Q-dynamics}
\end{figure}

\section{Distribution of $Q$ under Xavier initialization}
\label{appendix:Q-distribution-init}

We first consider a linear two-layer neural network $UVX$, where $U \in \R^{m\times h}, V \in \R^{h \times n}$, and $X \in \R^{n \times k}$. We choose the following form of the conserved quantity:
\begin{align}
    Q & = {1\over 2} \Tr[U^TU - VV^T].
    \label{eq:Q-linear}
\end{align}
Xavier initialization keeps the variance of each layer's output the same as the variance of the input. Under Xavier initialization \citep{glorot2010understanding}, each element in a given layer is initialized independently, with mean 0 and variance equal to the inverse of the layer's input dimension:
\begin{align}
    U_{ij} = \mathcal{N}\left(0, \frac{1}{h}\right) && V_{ij} = \mathcal{N}\left(0, \frac{1}{n}\right)
\end{align}
The expected value of $Q$ is 
\begin{align}
\label{eq:Q-init-expect}
    \E[Q] = Var(U_{ij}) \times m \times h + Var(V_{ij}) \times h \times n = m - h.
\end{align}
Figure \ref{fig:Q-distribution-linear} shows the distribution of $Q$ for 2-layer linear NN with different layer dimensions. For each dimension tuples $(m, h, n)$, we constructed 1000 sets of parameters using Xavier initialization. The centers of the distributions of $Q$ match Eq. \eqref{eq:Q-init-expect}.

\begin{figure}[h]
\centering
\subfigure[$m,h,n=100,100,100$]{\includegraphics[width=0.32\textwidth]{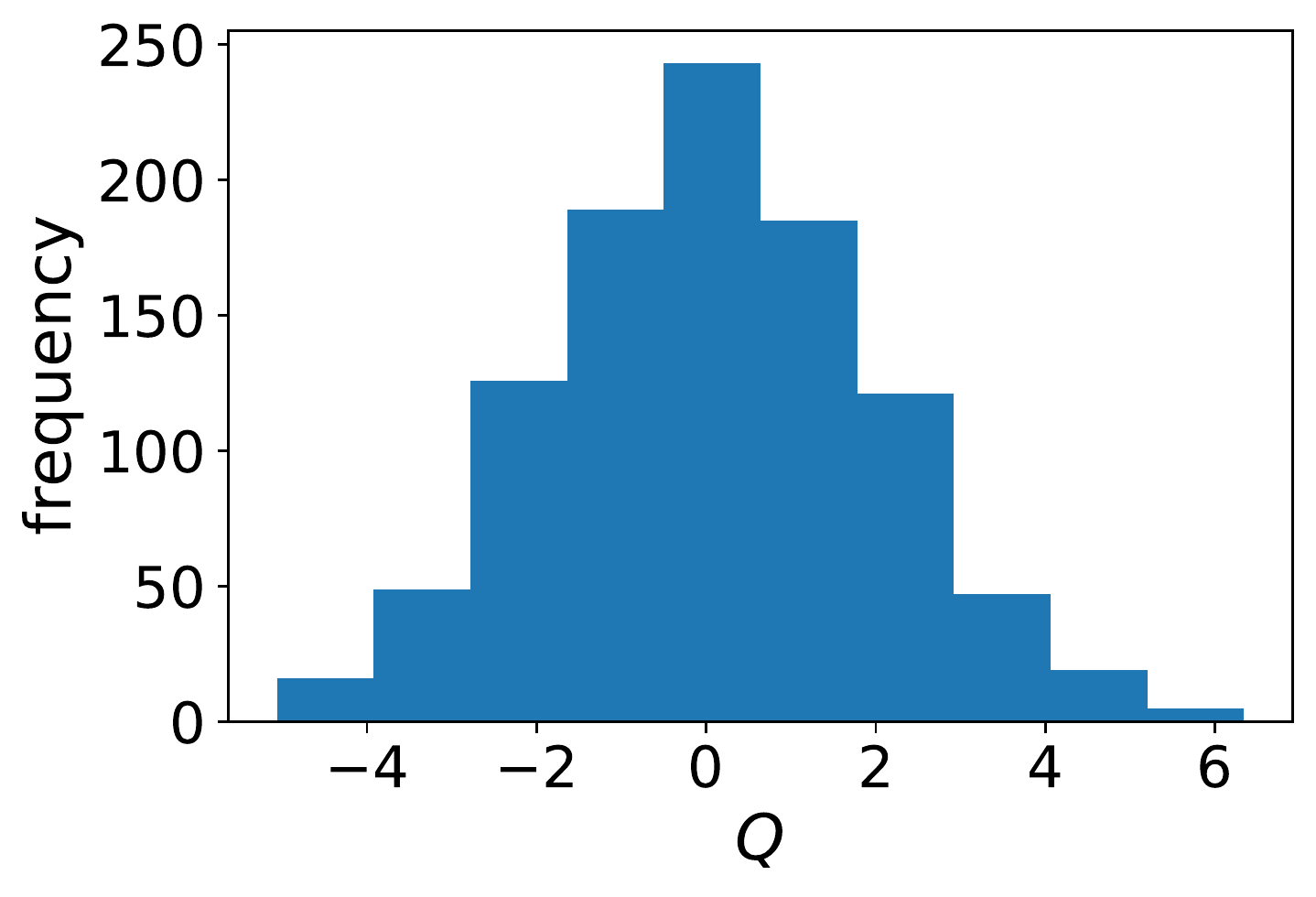}}
\subfigure[$m,h,n=100,200,100$]{\includegraphics[width=0.32\textwidth]{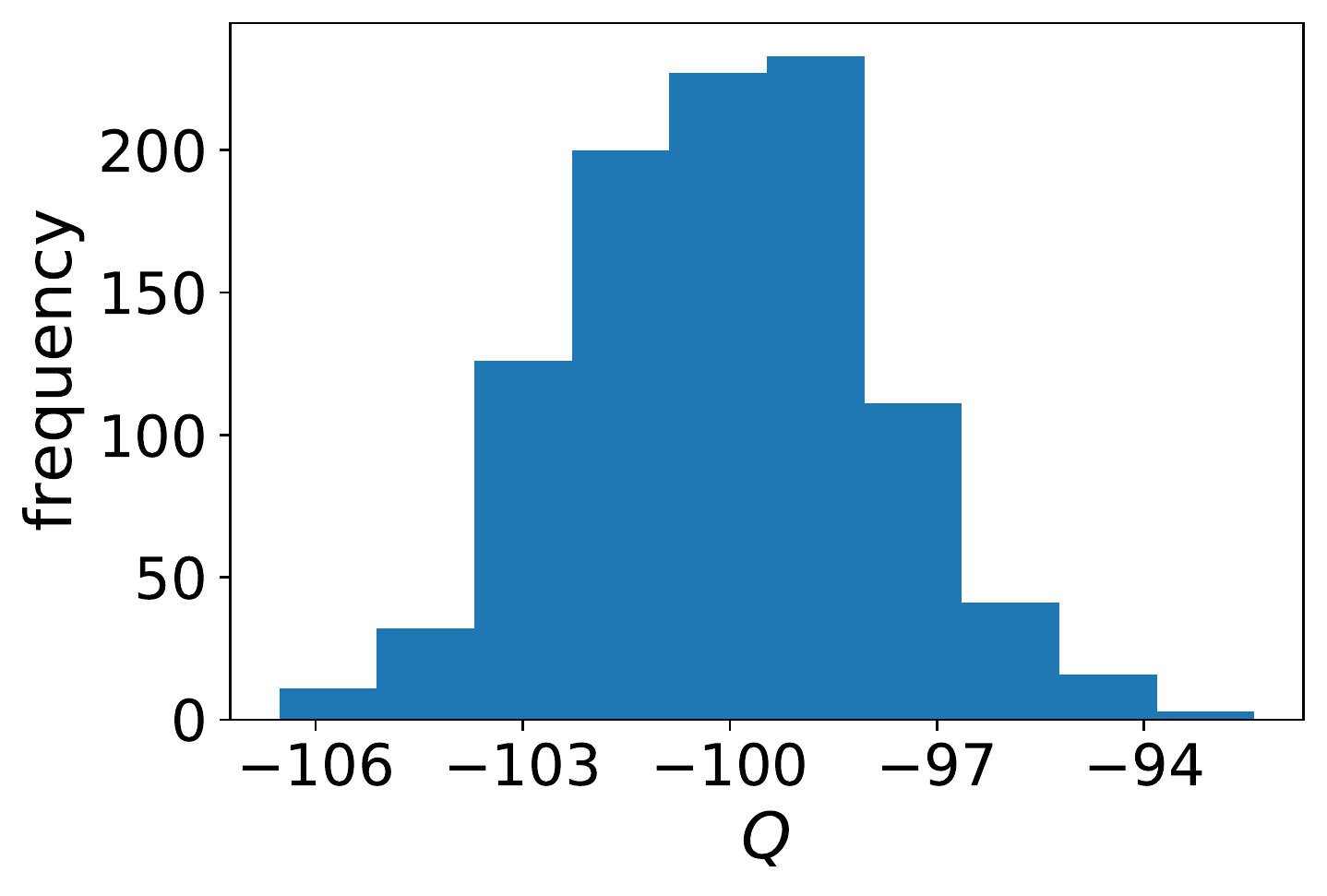}}
\subfigure[$m,h,n=200,100,100$]{\includegraphics[width=0.32\textwidth]{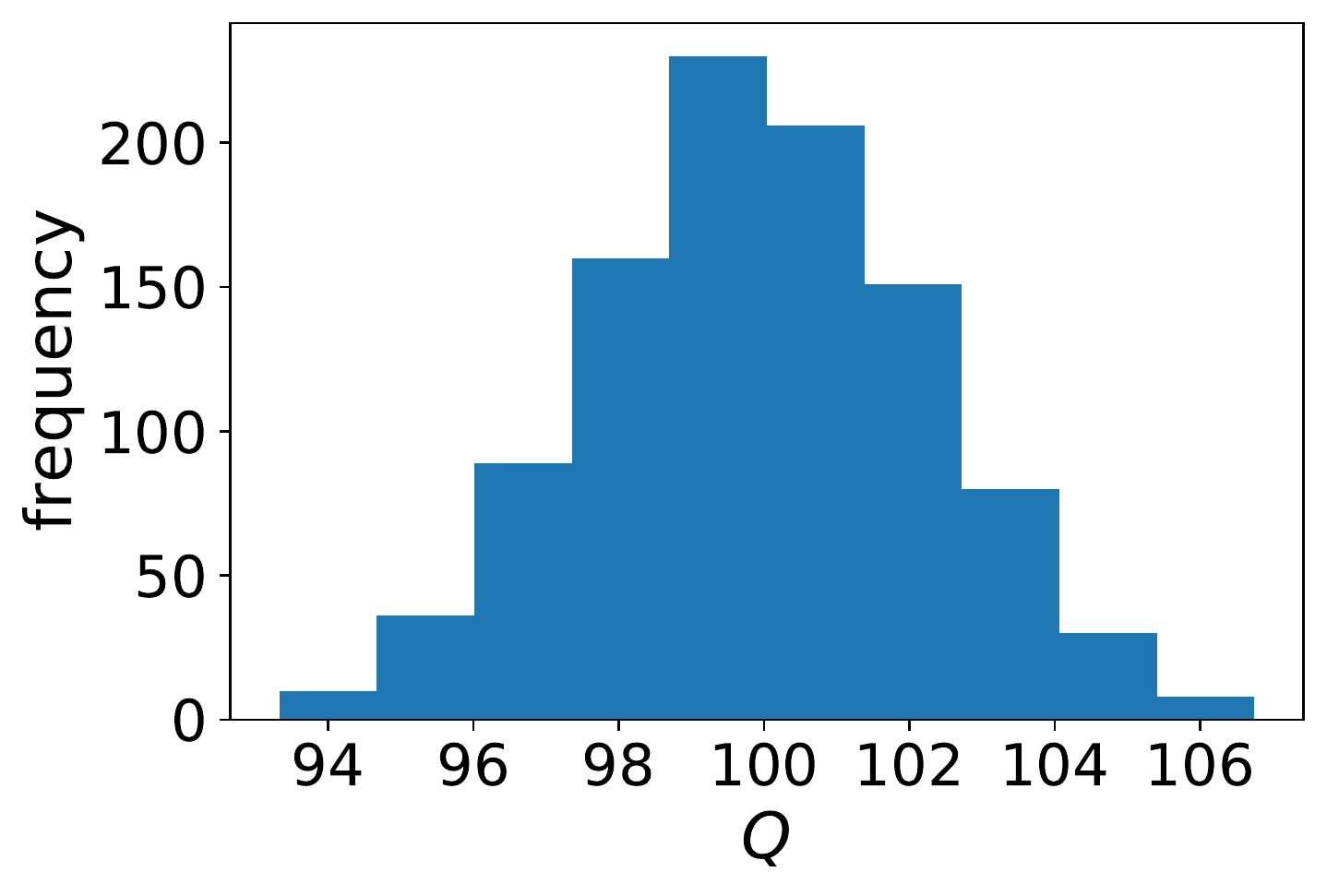}}
\caption{Distribution of $Q$ for 2-layer linear NN with different layer dimensions.}
\label{fig:Q-distribution-linear}
\end{figure}

Next, we consider the nonlinear two-layer neural network $U\sigma(VX)$, where $\sigma: \R \xrightarrow[]{} \R$ is an element-wise activation function. For simplicity, we assume whitened input ($X = I$). We choose the following form of the conserved quantity:
\begin{align}
    Q & = {1\over 2} \Tr[U^TU] - \sum_{a,j} \int_{0}^{V_{aj}} dx {\sigma(x)\over \sigma'(x)}
    \label{eq:Q-nonlinear}
\end{align}
Figure \ref{fig:Q-distribution-nonlinear} shows the distribution of $Q$ for 2-layer linear NN with different nonlinearities, each with 1000 sets of parameters created under Xavier initialization. The shapes of the distributions are similar to that of linear networks. The value of $Q$ is usually concentrated around a small range of values. Since the range of $Q$ is unbounded, the Xavier initialization limits the model to a small part of the global minimum. 

\begin{figure}[h]
\centering
\subfigure[ReLU]{\includegraphics[width=0.32 \textwidth]{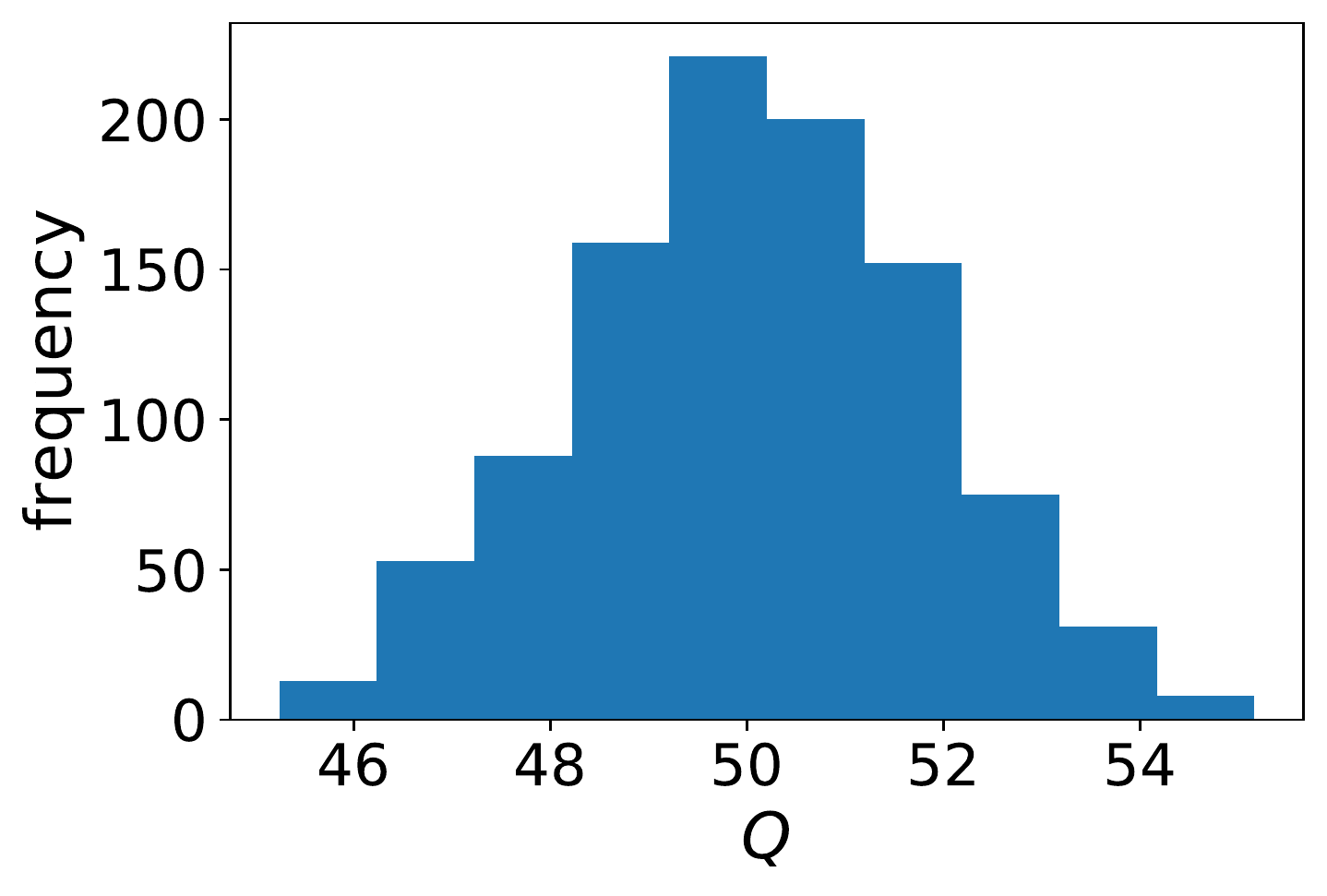}}
\subfigure[tanh]{\includegraphics[width=0.32\textwidth]{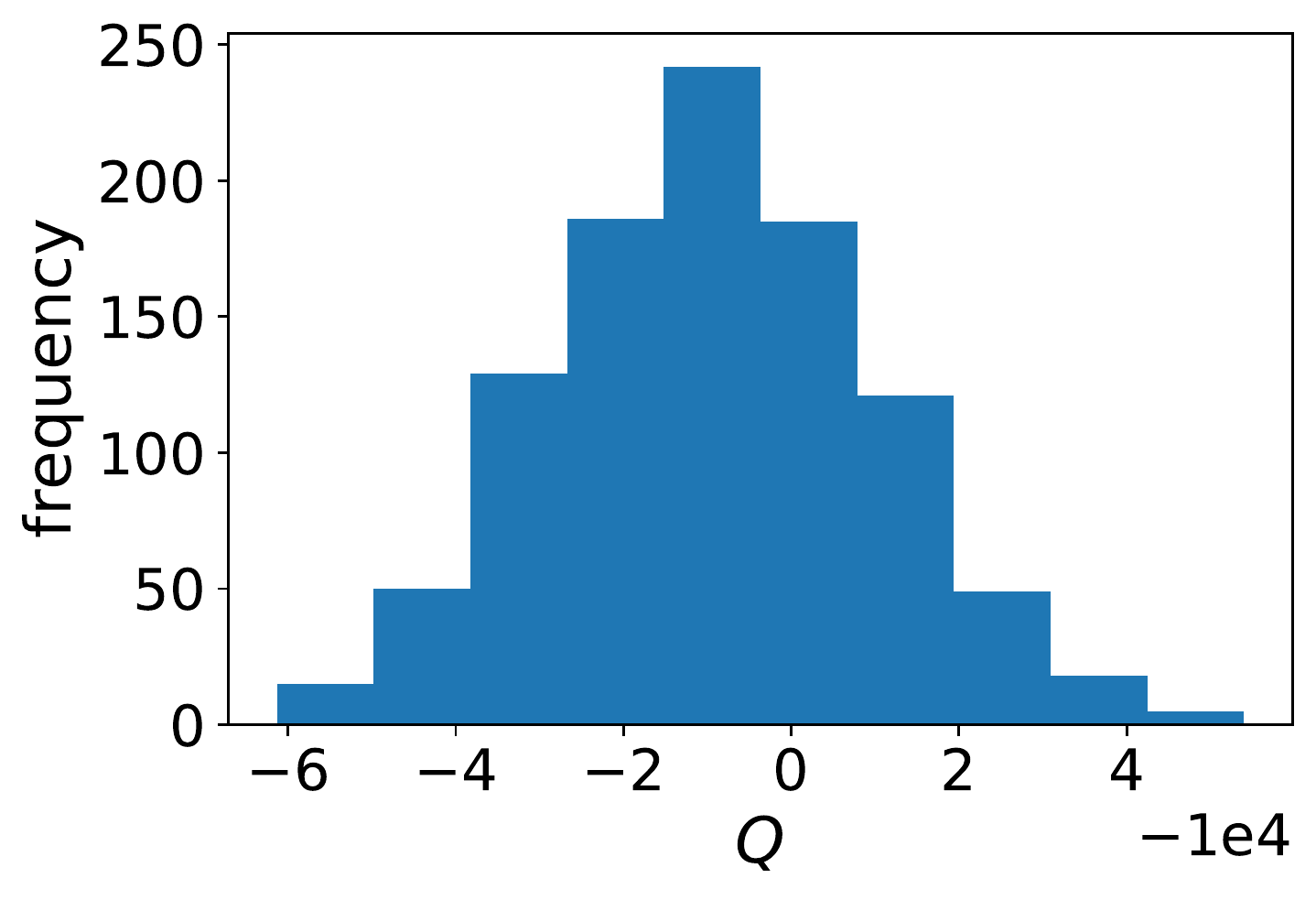}}
\subfigure[sigmoid]{\includegraphics[width=0.32\textwidth]{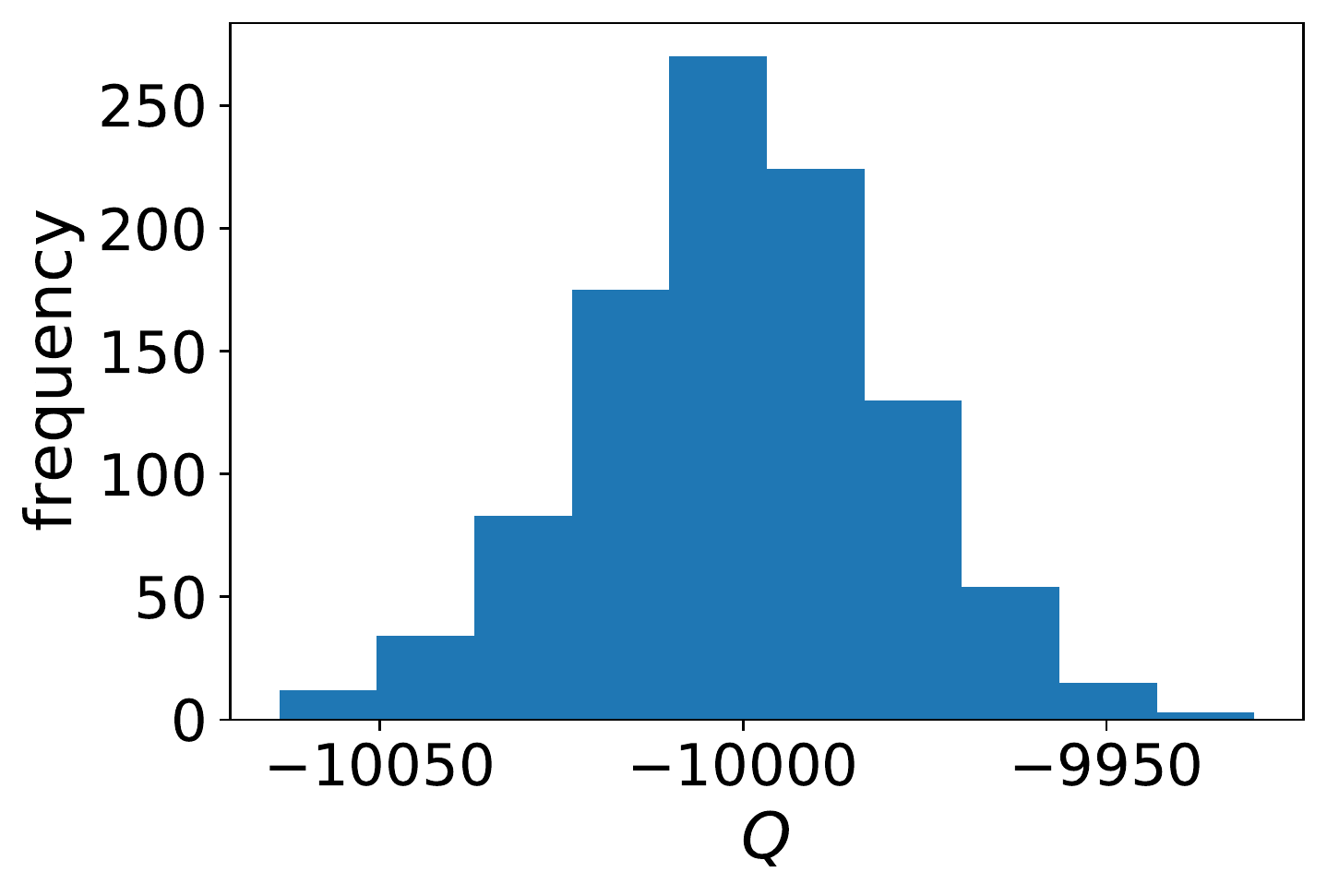}}
\caption{Distribution of $Q$ for 2-layer linear NN with different nonlinearities, with parameter dimensions $m=h=n=100$.}\label{fig:Q-distribution-nonlinear}
\end{figure}

\section{Conserved quantity and convergence rate}
\label{appendix:Q-convergence}

The values of conserved quantities are unchanged throughout the gradient flow. Since the conserved quantities parameterize trajectories, initializing parameters with certain conserved quantity values accelerates convergence. 
For two-layer linear reparametrization, \cite{tarmoun2021understanding} derived the explicit relation between layer imbalance and convergence rate.
We derive the relation between conserved quantities and convergence rate for two example optimization problems and provide numerical evidence that initializing parameters with optimal conserved quantity values accelerates convergence.

\subsection{Example 1: ellipse}
We first show that the convergence rate is related to the conserved quantity in a toy optimization problem. Consider the following loss function with $a \in \R$:
\begin{align}
    \L(w_1, w_2) &= w_1^2 + aw_2^2 \cr
    \grad \L &= (2w_1, 2aw_2)
\end{align}
Assuming gradient flow,
\begin{align}
    \frac{dw_1}{dt} &= -\grad_{w_1} \L = -2w_1 & \frac{dw_2}{dt} &= -\grad_{w_2} \L = -2aw_2
\end{align}
Then $w_1,w_2$ are governed by the following differential equations:
\begin{align}
    w_1(t) &= w_{1_0} e^{-2t} & w_2(t) &= w_{2_0} e^{-2at}
\end{align}
where $w_{1_0}, w_{2_0}$ are initial values of $w_1$ and $w_2$. We can find conserved quantities by using an ansatz $Q=f(w_1^i w_2^k)$ and solving $\grad Q \cdot \grad L = 0$ for $i,k$. Below we use the following form of conserved quantity: 
\begin{align}
    Q = \frac{w_1^{2a}}{w_2^2} = \frac{w_{1_0}^{2a}}{w_{2_0}^2}
\end{align}
To show the effect of $Q$ on the convergence rate, we fix $L(0)$ and derive how $Q$ affects $L(t)$. Let $L(0) = w_{1_0}^2 + aw_{2_0}^2 = L_0$. Let $w_{2_0}$ continue to be an independent variable. Then $w_{1_0}^2 = L_0 - aw_{2_0}^2$. Substitute in $w_{1_0}^2$, the loss at time $t$ is 
\begin{align}
    L(t) = w_1(t)^2 + aw_2(t)^2 = (L_0 - aw_{2_0}^2)e^{-4t} + aw_{2_0}^2 e^{-4at}
\end{align}
and $Q$ becomes 
\begin{align}
    Q = \frac{w_{1_0}^{2a}}{w_{2_0}^2} = \frac{(L_0 - aw_{2_0}^2)^a}{w_{2_0}^2}
\end{align}

The derivative of $L$ in the direction of $Q$ is 
\begin{align}
    \partial_Q L(t) &= \frac{dL(t)}{dw_{2_0}} \frac{dw_{2_0}}{dQ} = \frac{dL(t)}{dw_{2_0}} \pa{\frac{dQ}{dw_{2_0}}}^{-1} \cr
    &= \pa{-2aw_{2_0} e^{-4t} + 2aw_{2_0} e^{-4at}} \pa{\frac{a(L_0 - aw_{2_0}^2)^{a-1}(-2aw_{2_0})w_{2_0}^2 - 2w_{2_0} (L_0 - aw_{2_0}^2)^a}{w_{2_0}^4}}^{-1} \cr
    &= \frac{\pa{-2aw_{2_0} e^{-4t} + 2aw_{2_0} e^{-4at}}w_{2_0}^4}{a(L_0 - aw_{2_0}^2)^{a-1}(-2aw_{2_0})w_{2_0}^2 - 2w_{2_0} (L_0 - aw_{2_0}^2)^a} \cr
    &= \frac{2aw_{2_0}^5 \pa{e^{-4at} - e^{-4t}}} {2w_{2_0} (L_0 - aw_{2_0}^2)^{a-1} \left(- a^2 w_{2_0}^2 - (L_0 - aw_{2_0}^2)\right)}
\end{align}

In general, $\partial_Q L(t) \neq 0$, meaning that the loss at time $t$ depends on $Q$. Since we have fixed the initial loss, the convergence rate $L(t) - L(0)$ also depends on $Q$. Special cases where $\partial_Q L(t) = 0$ include $a=1$ (circle), $a=0$ (collapsed dimension), and certain initializations such as $w_{2_0}=0$ (local maximum of gradient magnitude). 

\subsection{Example 2: radial activation functions}
\label{appendix:Q-convergence-radial-example}
In this example, we find the conserved quantities and their relation with convergence rate for two-layer reparametrization with radial activation functions under spectral initialization. 

Define radial function $g: \R^{m \times n} \xrightarrow{} \R^{m \times n}$ as
\begin{align}
    g(W)_{ij} = h\pa{|W_{i}|}W_{ij},
\end{align}
where $|W_{i}| = \pa{\sum_k W_{ik}^2}^\frac{1}{2}$ is the norm of the $i^{th}$ row of $W$, and $h:\R \xrightarrow[]{} \R$ outputs a scalar.

Consider the following objective:
\begin{align}
    \text{argmin}_{U, V} \{ \L(U, V)=\frac{1}{2} \|Y - Ug(V^T)\|_F^2 \}
    \label{eq:radial-objective}
\end{align}
with spectral initializations 
\begin{align*}
    U_0 = \Phi \ba{U}_0, V_0 = \Psi \ba{V}_0,
\end{align*}
where $\Phi, \Psi$ come from the singular value decomposition $Y=\Phi\Sigma_Y\Psi^T$, and $\ba{U}_0, \ba{V}_0$ are random diagonal matrices. 

\begin{proposition}
\label{prop:radial-invariant}
Under the gradient flow $U = -\grad_U \L$ and $V = -\grad_V \L$, the following quantity is an invariant:
\begin{align}
    Q & = \frac{1}{2} \Tr[U^TU] - \sum_{i} \int_{x_0}^{\bar{V}_{ii}} dx \frac{g(x)}{g'(x)}
    \label{eq:Q-sigma-V-sol-radial}
\end{align}
\end{proposition}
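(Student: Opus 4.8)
The plan is to reduce the matrix dynamics to a family of decoupled scalar ODEs using the spectral initialization, and then verify the conservation law one mode at a time. First I would record the relevant equivariance of the row-wise radial map $g$: for any orthogonal matrix $O$ of the appropriate size, $g(WO) = g(W)O$, since right multiplication by $O$ preserves the Euclidean norm of each row. Combined with the singular value decomposition $Y = \Phi \Sigma_Y \Psi^T$, this shows that on the ``aligned'' locus where $U = \Phi \bar U$ and $V = \Psi \bar V$ with $\bar U, \bar V$ diagonal, the residual factors as $Y - U g(V^T) = \Phi\left(\Sigma_Y - \bar U\, g(\bar V)\right)\Psi^T$, so that the loss decouples into a sum of independent scalar losses, $\L = \tfrac12 \sum_i \left(\sigma_{Y,i} - \bar U_{ii}\, g(\bar V_{ii})\right)^2$, where $g$ is now read as the scalar function $x \mapsto h(|x|)\,x$.

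The second step is to show that gradient flow started from a spectral initialization remains on the aligned locus for all time. By uniqueness of solutions of the gradient-flow ODE, it is enough to check that the vector field $(-\nabla_U \L, -\nabla_V \L)$ is tangent to the aligned locus at each of its points. For $U$ this is immediate, since $\nabla_U \L = -(Y - U g(V^T))\, g(V^T)^T = -\Phi\left(\Sigma_Y - \bar U g(\bar V)\right) g(\bar V)$ is $\Phi$ times a diagonal matrix. For $V$ one uses the block-by-rows structure of the Jacobian of a radial activation — $dg_W$ sends the $i$-th row of a perturbation into the $i$-th row of the output via a symmetric (scalar-plus-rank-one) operator — to conclude that $\nabla_V \L$ is $\Psi$ times a diagonal matrix at aligned points. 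I expect this $\nabla_V$ computation, carefully tracking the chain rule through $g$ while keeping track of which factors stay diagonal, to be the main technical obstacle; the rest is routine.

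Finally, on the aligned locus write $u = \bar U_{ii}$, $v = \bar V_{ii}$, $s = \sigma_{Y,i}$ for a fixed mode $i$; gradient flow then reads $\dot u = (s - u g(v))\, g(v)$ and $\dot v = (s - u g(v))\, u\, g'(v)$, and we assume $g'(v) \neq 0$ along the trajectory so that the integrand below is defined. A direct computation gives
\begin{align*}
\frac{d}{dt}\left[\frac12 u^2 - \int_{x_0}^{v} \frac{g(x)}{g'(x)}\,dx\right] &= u\,\dot u - \frac{g(v)}{g'(v)}\,\dot v \\
&= u\,(s - u g(v))\, g(v) - \frac{g(v)}{g'(v)}\,(s - u g(v))\, u\, g'(v) = 0.
\end{align*}
Summing this over the modes $i$ and using $\Tr[U^T U] = \Tr[\bar U^2] = \sum_i \bar U_{ii}^2$ by orthogonality of $\Phi$, the sum of the per-mode invariants is exactly $Q = \tfrac12 \Tr[U^T U] - \sum_i \int_{x_0}^{\bar V_{ii}} \frac{g(x)}{g'(x)}\,dx$, which is therefore constant along gradient flow, as claimed.
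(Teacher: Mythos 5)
Your proof is correct, and its core strategy is the same as the paper's: use the row-wise equivariance $g(W O)=g(W)O$ for orthogonal $O$ together with the SVD $Y=\Phi\Sigma_Y\Psi^T$ to reduce, under spectral initialization, to diagonal matrices and decoupled per-mode dynamics. The differences are in how the two remaining steps are verified. First, you explicitly prove that the flow stays on the aligned locus by checking tangency of $(-\nabla_U\L,-\nabla_V\L)$ and invoking ODE uniqueness, using the structural fact that the Jacobian of a radial activation acts row-by-row through a symmetric scalar-plus-rank-one operator; the paper instead simply asserts that $\Phi$ and $\Psi$ are unchanged along the flow (after noting $\partial\L/\partial\Phi=\partial\L/\partial\Psi=0$), so your argument makes rigorous a point the paper glosses over — do carry out the $\nabla_V$ computation you sketch, since it is the one place where the row-wise structure is genuinely needed. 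Second, you verify conservation by directly differentiating the per-mode quantity $\tfrac12 u^2-\int^{v} g/g'$ along the decoupled scalar ODEs $\dot u=(s-ug(v))g(v)$, $\dot v=(s-ug(v))\,u\,g'(v)$ (the same ODEs the paper only introduces later, in the convergence-rate proposition), whereas the paper's proof stays at the matrix level and checks the orthogonality condition $\langle\nabla\L,\nabla Q\rangle=0$ via the ansatz $\partial Q/\partial\bar U=\bar U$, $(\partial Q/\partial\bar V)\,g'(\bar V)=-g(\bar V)$, in line with its general conserved-quantity framework. Your route is more elementary and self-contained; the paper's fits the $\langle\nabla\L,\nabla Q\rangle=0$ formalism used throughout. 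Both arguments share the implicit hypothesis $g'\neq 0$ (and nonvanishing rows) along the trajectory, which you state explicitly.
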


\begin{proof}
Since $g$ is a radial function on rows and $\Psi^T$ is an orthogonal matrix, $g(\ba{V}^T \Psi^T) = g(\ba{V}^T)\Psi^T$. With spectral initialization, the loss function can be reduced to only involving diagonal matrices:
\begin{align}
    \L &= \frac{1}{2} \|Y - Ug(V^T)\|_F^2 \cr
    &= \frac{1}{2} \|\Phi\Sigma\Psi^T - \Phi \ba{U}g[(\Psi \ba{V})^T]\|_F^2 \cr
    &= \frac{1}{2} \|\Phi\Sigma\Psi^T - \Phi \ba{U}g(\ba{V}^T)\Psi^T\|_F^2 \cr
    &= \frac{1}{2} \|\Phi \pa{\Sigma - \ba{U}g(\ba{V}^T)}\Psi^T\|_F^2 \cr
    &= \frac{1}{2} \|\Sigma - \ba{U}g(\ba{V}^T)\|_F^2
\end{align}
Since $\ba{V}$ is a diagonal matrix, $g$ is now an element wise function on $\ba{V}$. Let $ \ba{W} = \ba{U}g(\ba{V}^T)$. The gradients for $\ba{U}$ and $\ba{V}$ are
\begin{align}
    \frac{\partial\L}{\partial \ba{U}} &= \grad_{\ba{W}}\L g(\ba{V})^T \cr
    \frac{\partial\L}{\partial \ba{V}} &=  \grad_{\ba{W}}\L^T \ba{U}  \odot g'(\ba{V})
\end{align}
where $g'(x) = dg(x)/dx$ is the derivative of the nonlinearity. Additionally, since $\L$ does not depend on $\Phi$ and $\Psi$,
\begin{align}
    \frac{\partial\L}{\partial \Phi} &= \frac{\partial\L}{\partial \Psi} = 0
\end{align}
Since the rows of $\Phi, \Psi$ are orthogonal, 
\begin{align}
    \frac{\partial\L}{\partial U} = \frac{\partial\L}{\partial \ba{U}} \Phi^T  &= \grad_{\ba{W}}\L g(\ba{V})^T  \Phi^T \cr
    \frac{\partial\L}{\partial V} = \frac{\partial\L}{\partial \ba{V}} \Psi^T &= \pa{\grad_{\ba{W}}\L^T \ba{U}  \odot g'(\ba{V})} \Psi^T
\end{align}

$\Phi$ and $\Psi$ are not changed in gradient flow, so $\frac{\partial Q}{\partial U} = \frac{\partial Q}{\partial \ba{U}} \Phi^T $ and $\frac{\partial Q}{\partial V} = \frac{\partial Q}{\partial \ba{V}} \Psi^T $. Define inner product on matrices as $\langle X, Y \rangle = \Tr[X^TY]$. For $Q$ to be a conserved quantity, we need $\langle \grad\L, \grad Q \rangle = 0$:
\begin{align}
   \langle \grad\L, \grad Q \rangle &= \langle{\ro \L\over \ro U}, {\ro Q\over \ro U}\rangle + \langle{\ro \L\over \ro V}, {\ro Q\over \ro V}\rangle \cr
    &= \langle \grad_{\ba{W}}\L g(\ba{V})^T \Phi^T, {\ro Q\over \ro \ba{U}}\Phi^T\rangle + \langle\pa{\grad_{\ba{W}}\L^T \ba{U}  \odot g'(\ba{V})} \Psi^T, {\ro Q\over \ro \ba{V}} \Psi^T\rangle \cr
    &= \langle\grad_{\ba{W}}\L g(\ba{V})^T, {\ro Q\over \ro \ba{U}}\rangle + \langle\pa{\grad_{\ba{W}}\L^T \ba{U}  \odot g'(\ba{V})}, {\ro Q\over \ro \ba{V}}\rangle \cr
    &= \Tr\br{ \ro_{\ba{U}^T} Q \grad_{\ba{W}}\L g(\ba{V})^T +  \ba{U}^T \grad_{\ba{W}}\L (\ro_{V} Q \odot g'(\ba{V})) } = 0
    \label{eq:Q-UV-dL-sigma-radial}
\end{align}
Following the same procedure as for elementwise functions, to have a $Q$ which satisfies \eqref{eq:Q-UV-dL-sigma-radial} it is sufficient to have
\begin{align}
    {\ro Q\over \ro \ba{U}_{ia} }& = f(\ba{U},\ba{V}) \ba{U}_{ia}
    &{\ro Q\over \ro \ba{V}_{aj} }g'(\ba{V})_{aj}& = -f(\ba{U},\ba{V}) g(\ba{V})_{aj}
    & f(\ba{U},\ba{V})&\in \R 
    \label{eq:dQ-dV-sigma-radial}
\end{align}
For simplicity, let $f(\ba{U},\ba{V})=1$.
Then, \eqref{eq:dQ-dV-sigma-radial} is satisfied by 
\begin{align}
    Q & = \frac{1}{2} \Tr[\bar{U}^T\bar{U}] - \sum_{i} \int_{x_0}^{\bar{V}_{ii}} dx \frac{g(x)}{g'(x)}
\end{align}
\end{proof}

\cite{tarmoun2021understanding} shows that the conserved quantity $Q$ appears as a term in the convergence rate of the matrix factorization gradient flow. We observe a similar relationship between $Q$ and convergence rate when the loss function is augmented with a radial activation function, as shown in the following proposition.
\begin{proposition}
\label{prop:radial-convergence-example}
Consider the objective function and spectral initialization defined in Proposition \ref{prop:radial-invariant}. Let $h\pa{|W_{i}|} = |W_{i}|^{-2}$, and $X = Ug(V^T) = \Phi\Sigma_X\Psi^T$. Then, the eigencomponent of $X$ approaches the corresponding eigencomponent of $Y$ at a rate of 
\begin{align} 
    \dot{\sigma}_i^X = \frac{1}{\lambda_i}(\sigma_i^Y - \sigma_i^X)({\sigma_i^X}^2 + 1)^2,
\end{align}
where $\sigma_i^X = diag(\Sigma_{X})_i$, $\sigma_i^Y=diag(\Sigma_{Y})_i$, and $\lambda_i = \bar{U}_{ii}^2 + \bar{V}_{ii}^2$ are conserved quantities.
\end{proposition}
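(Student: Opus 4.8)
The plan is to reduce the problem to the decoupled one-dimensional dynamics implicit in the proof of Proposition \ref{prop:radial-invariant}, and then carry out an elementary computation. Because $g$ acts radially on the rows of its argument and $\Phi,\Psi$ are orthogonal, spectral initialization confines the gradient flow to the family $U_t = \Phi\bar U_t$, $V_t = \Psi\bar V_t$ with $\bar U_t,\bar V_t$ diagonal: the proof of Proposition \ref{prop:radial-invariant} shows that $\partial\L/\partial\Phi = \partial\L/\partial\Psi = 0$ and that the gradients $\partial\L/\partial U$, $\partial\L/\partial V$ remain in this family, so it is invariant under the flow. On this family the loss collapses to $\L = \tfrac12\|\Sigma_Y - \bar U g(\bar V)\|_F^2$, and with $h(r)=r^{-2}$ the scalar form of $g$ is $g(v)=v|v|^{-2}=v^{-1}$; writing $u_i=\bar U_{ii}$, $v_i=\bar V_{ii}$, $s_i=\sigma_i^Y$, the loss decouples across $i$ into $\L_i=\tfrac12(s_i-u_i/v_i)^2$. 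Furthermore $X=Ug(V^T)=\Phi\,\mathrm{diag}(u_i/v_i)\,\Psi^T$ is an SVD of $X$ (up to signs, which are locally constant), so $\sigma_i^X=u_i/v_i$ for all $t$.

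Next I would write down the per-index gradient-flow equations. With residual $\delta_i:=s_i-u_i/v_i=\sigma_i^Y-\sigma_i^X$, a short computation from $\L_i$ gives
\begin{align*}
\dot u_i=-\partial_{u_i}\L_i=\frac{\delta_i}{v_i},\qquad \dot v_i=-\partial_{v_i}\L_i=-\frac{\delta_i\,u_i}{v_i^2}.
\end{align*}
From these, $\tfrac{d}{dt}(u_i^2+v_i^2)=2u_i\dot u_i+2v_i\dot v_i=2\delta_i(u_i/v_i)-2\delta_i(u_i/v_i)=0$, which re-derives that each $\lambda_i=u_i^2+v_i^2$ is conserved. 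This is the per-index refinement of the conserved quantity $Q$ of Proposition \ref{prop:radial-invariant}: for the present $g$ one has $g/g'=-v$, hence $Q=\tfrac12\sum_i(u_i^2+v_i^2)+\text{const}$, and the decoupling shows each summand is separately conserved.

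Then I would differentiate $\sigma_i^X=u_i/v_i$ along the flow and substitute the equations above:
\begin{align*}
\dot\sigma_i^X=\frac{\dot u_i}{v_i}-\frac{u_i\dot v_i}{v_i^2}=\frac{\delta_i}{v_i^2}+\frac{\delta_i u_i^2}{v_i^4}=\frac{\delta_i}{v_i^2}\Bigl(1+\frac{u_i^2}{v_i^2}\Bigr)=\delta_i\,v_i^{-2}\bigl(1+(\sigma_i^X)^2\bigr).
\end{align*}
The concluding step is the algebraic identity $v_i^{-2}=\bigl(1+(\sigma_i^X)^2\bigr)/\lambda_i$, valid since $1+(u_i/v_i)^2=(u_i^2+v_i^2)/v_i^2=\lambda_i/v_i^2$. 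Substituting and recalling $\delta_i=\sigma_i^Y-\sigma_i^X$ yields $\dot\sigma_i^X=\tfrac{1}{\lambda_i}(\sigma_i^Y-\sigma_i^X)\bigl((\sigma_i^X)^2+1\bigr)^2$, as claimed.

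The main (and fairly mild) obstacle I anticipate is the bookkeeping needed to justify that the flow genuinely remains on the spectrally-aligned diagonal family for all $t$ — so that $\sigma_i^X=u_i/v_i$ is legitimate throughout and not merely at initialization — together with the sign and absolute-value subtleties hidden in the radial factor $|v|^{-2}$ and in identifying $u_i/v_i$ with the $i$-th singular value. Once the diagonal reduction is in place, the rest is a short one-variable calculation.
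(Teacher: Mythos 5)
Your proposal is correct and follows essentially the same route as the paper's proof: decouple the flow under spectral initialization into per-index scalar ODEs, differentiate $\sigma_i^X = \bar u_i g(\bar v_i) = u_i/v_i$ along the flow, and use the conserved $\lambda_i = u_i^2 + v_i^2$ to eliminate $v_i^{-2}$ via $v_i^{-2} = (1+(\sigma_i^X)^2)/\lambda_i$. The only cosmetic differences are that you specialize $g(v)=v^{-1}$ from the outset and re-derive the per-index conservation of $\lambda_i$ directly from the scalar ODEs, whereas the paper works with general $g$ until the final substitution and invokes Proposition \ref{prop:radial-invariant} for the invariant.
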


\begin{proof}
Similar to \cite{tarmoun2021understanding}, components can be decoupled, and we have a set of differential equations on scalars:
\begin{align}
    \dot{\ba{u}_i} &= [\sigma_i^Y - \ba{u}_i g(\ba{v}_i)] g(\ba{v}_i) \cr
    \dot{\ba{v}_i} &= [\sigma_i^Y - \ba{u}_i g(\ba{v}_i)] \ba{u}_i \frac{dg(\ba{v}_i)}{d\ba{v}_i}
\end{align}
We also have
\begin{align}
    \dot{g}(\ba{v}_i) &= \frac{dg}{d\ba{v}_i}\frac{d\ba{v}_i}{dt} = [\sigma_i^Y - \ba{u}_i g(\ba{v}_i)] \ba{u}_i \pa{\frac{dg(\ba{v}_i)}{d\ba{v}_i}}^2.
\end{align}
Let $\sigma_i^X = \ba{u}_i g(\ba{v}_i)$. Then
\begin{align}
    \dot{\sigma}_i^X &= \dot{\ba{u}_i} g(\ba{v}_i) + \ba{u}_i \dot{g}(\ba{v}_i) \cr
    &= \br{\sigma_i^Y - \ba{u}_i g(\ba{v}_i)} \br{g(\ba{v}_i)^2 + \ba{u}_i^2 \pa{\frac{dg(\ba{v}_i)}{d\ba{v}_i}}^2}.
    \label{eq:radial-x-dot}
\end{align}
Since $\ba{V}$ is a diagonal matrix, $g$ is now an element wise function on $\ba{V}$. Specifically, $g(\ba{v}_i) = \frac{1}{\ba{v}_i}$. According to Proposition \ref{prop:radial-invariant}, the following quantity is invariant:
\begin{align}
    {1\over 2} \ba{u}_i^2 - \int dx {g(x)\over g'(x)} = {1\over 2} \ba{u}_i^2 - \int dx {\ba{v}_i^{-1}\over -\ba{v}_i^{-2}} = {1\over 2} \ba{u}_i^2 + {1\over 2} \ba{v}_i^2
    \label{eq:Q-radial-sol}
\end{align}
Since any function of the invariant is also invariant, we will use the following form:
\begin{align}
    Q = \ba{U}^T\ba{U} + \ba{V}^T\ba{V},
\end{align}
and define 
\begin{align}
    \lambda_i = Q_{ii} = \ba{u}_i^2 + \ba{v}_i^2
    \label{radial-q}
\end{align}
Using the $g$ that we defined, 
\begin{align}
    \sigma_i^X = \ba{u}_i g(\ba{v}_i) = \ba{u}_i \ba{v}_i^{-1}.
    \label{radial-x}
\end{align}
In order to relate $\sigma^X$ and $Q$, we first write $\ba{u}_i$ and $\ba{v}_i$ as functions of $\sigma_i^X$ ad $Q$ using \eqref{radial-q} and \eqref{radial-x}:
\begin{align}
    \ba{u}_i^2 &= \frac{\lambda_i {\sigma_i^X}^2}{{\sigma_i^X}^2 + 1},
    &\ba{v}_i^2 &= \frac{\lambda_i}{{\sigma_i^X}^2 + 1}.
\end{align}
Then, substitute $\ba{u}_i$, $\ba{v}_i$, $g(\ba{v}_i)$, and $\frac{dg(\ba{v}_i)}{d\ba{v}_i}$ into \eqref{eq:radial-x-dot}, and we have 
\begin{align}
    \dot{\sigma}_i^X &= \br{\sigma_i - \ba{u}_i g(\ba{v}_i)} \br{g(\ba{v}_i)^2 + \ba{u}_i^2 \pa{\frac{dg(\ba{v}_i)}{d\ba{v}_i}}^2} \cr
    &= \br{\sigma_i^Y - \ba{u}_i g(\ba{v}_i)} \br{\pa{\frac{1}{\ba{v}_i}}^{2} + \ba{u}_i^2 \pa{-\ba{v}_i^{-2}}^2} \cr
    &= \br{\sigma_i^Y - \ba{u}_i g(\ba{v}_i)} \br{(\ba{v}_i^{2})^{-1} + \ba{u}_i^2 (\ba{v}_i^{2})^{-2}} \cr
    &= \br{\sigma_i^Y - \sigma_i^X} \br{\pa{\frac{\lambda_i}{{\sigma_i^X}^2 + 1}}^{-1} + \frac{\lambda_i {\sigma_i^X}^2}{{\sigma_i^X}^2 + 1} \pa{\frac{\lambda_i}{{\sigma_i^X}^2 + 1}}^{-2}} \cr
    &= \br{\sigma_i^Y - \sigma_i^X} \br{\frac{{\sigma_i^X}^2 + 1}{\lambda_i} + \frac{{\sigma_i^X}^2({\sigma_i^X}^2 + 1)}{\lambda_i}} \cr
    &= \br{\sigma_i^Y - \sigma_i^X} \br{\frac{{\sigma_i^X}^4 + 2{\sigma_i^X}^2 + 1}{\lambda_i}} \cr
    &= \frac{1}{\lambda_i}(\sigma_i^Y - \sigma_i^X)({\sigma_i^X}^2 + 1)^2
    \label{eq:Q-convergence-example}
\end{align}
\end{proof}

Proposition \ref{prop:radial-convergence-example} relates the rate of change in parameters $\dot{\sigma}_i^X$ and the conserved quantity $\lambda_i$. To get a more explicit expression of how $\lambda_i$ affects convergence rate, we will derive a bound for $|\sigma_i^Y - \sigma_i^X|$, which describes the distance between trainable parameters to their desired value. 
\begin{proposition}
The difference between the singular values of $Ug(V^T)$ and $Y$ is bounded by
\begin{align}
    |\sigma_i^X - \sigma_i^Y| \leq |\sigma_i^X(0) - \sigma_i^Y| e^{-\frac{t}{\lambda_i}}.
    \label{eq:radial-bound}
\end{align}
\end{proposition}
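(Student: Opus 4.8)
The plan is to reduce the bound to a one-dimensional differential inequality for the error and close it with Gr\"onwall. Introduce the (signed) error $\delta_i(t) = \sigma_i^X(t) - \sigma_i^Y$, so that the claim is exactly $|\delta_i(t)| \leq |\delta_i(0)| e^{-t/\lambda_i}$. Note $\sigma_i^Y$ is a constant along the flow, hence $\dot\delta_i = \dot\sigma_i^X$, and Proposition \ref{prop:radial-convergence-example} gives
\begin{equation}
\dot\delta_i = \frac{1}{\lambda_i}(\sigma_i^Y - \sigma_i^X)\bigl(({\sigma_i^X})^2 + 1\bigr)^2 = -\frac{1}{\lambda_i}\,\delta_i\,\bigl(({\sigma_i^X})^2 + 1\bigr)^2 .
\end{equation}

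Next I would pass to the squared error to avoid sign issues. Multiplying by $2\delta_i$,
\begin{equation}
\frac{d}{dt}\bigl(\delta_i^2\bigr) = 2\delta_i\dot\delta_i = -\frac{2}{\lambda_i}\,\delta_i^2\,\bigl(({\sigma_i^X})^2 + 1\bigr)^2 .
\end{equation}
The key elementary observation is that $\bigl(({\sigma_i^X})^2+1\bigr)^2 \geq 1$ for all $t$, so, using also $\lambda_i = \bar U_{ii}^2 + \bar V_{ii}^2 > 0$ (generic spectral initialization), we obtain the differential inequality
\begin{equation}
\frac{d}{dt}\bigl(\delta_i^2\bigr) \leq -\frac{2}{\lambda_i}\,\delta_i^2 .
\end{equation}

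Finally I would integrate this via the Gr\"onwall / comparison lemma: from $\frac{d}{dt}\bigl(e^{2t/\lambda_i}\delta_i^2\bigr) = e^{2t/\lambda_i}\bigl(\frac{d}{dt}\delta_i^2 + \frac{2}{\lambda_i}\delta_i^2\bigr) \leq 0$, the quantity $e^{2t/\lambda_i}\delta_i^2(t)$ is nonincreasing, so $\delta_i^2(t) \leq \delta_i^2(0)\,e^{-2t/\lambda_i}$; taking square roots yields $|\sigma_i^X(t) - \sigma_i^Y| \leq |\sigma_i^X(0) - \sigma_i^Y|\,e^{-t/\lambda_i}$, as claimed. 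I do not anticipate a serious obstacle here; the only point requiring a little care is justifying $\lambda_i > 0$ and noting that the factor $\bigl(({\sigma_i^X})^2+1\bigr)^2$ only \emph{helps} (it can only speed up the decay), so discarding it down to $1$ is safe and gives precisely the stated rate.
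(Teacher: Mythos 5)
Your proof is correct, and it takes a somewhat different route from the paper's. The paper argues by ODE comparison: it observes that the nonlinear factor $\bigl((\sigma_i^X)^2+1\bigr)^2\geq 1$ only speeds up the drift toward $\sigma_i^Y$, introduces the auxiliary linear flow $\dot b = \tfrac{1}{\lambda_i}(\sigma_i^Y - b)$ with the same initial condition, solves it explicitly as $b(t)=\sigma_i^Y+(b(0)-\sigma_i^Y)e^{-t/\lambda_i}$, and concludes $|\sigma_i^X-\sigma_i^Y|\leq|b-\sigma_i^Y|$ from the statement that the true trajectory ``will always be closer'' to the target than $b$. That comparison step is stated somewhat informally and, as written (the displayed inequality $\dot\sigma_i^X\geq\tfrac{1}{\lambda_i}(\sigma_i^Y-\sigma_i^X)$), is only literally valid when $\sigma_i^X\leq\sigma_i^Y$; the case $\sigma_i^X>\sigma_i^Y$ needs the reversed inequality and a symmetric argument. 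Your route — multiplying by $2\delta_i$ to pass to the squared error, discarding the factor $\bigl((\sigma_i^X)^2+1\bigr)^2\geq 1$, and closing with Gr\"onwall — sidesteps this sign case analysis entirely and yields the same rate, so it is arguably the cleaner and more self-contained argument; what the paper's version buys in exchange is an explicit comparison trajectory and a transparent picture of the monotone approach to $\sigma_i^Y$. Your side remarks are also consistent with the paper: $\lambda_i=\bar U_{ii}^2+\bar V_{ii}^2>0$ under generic spectral initialization (and is conserved by Proposition \ref{prop:radial-invariant}), and the nonlinear factor can only accelerate the decay, never slow it.
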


\begin{proof}
Note that
\begin{align}
    \dot{\sigma_i^X} &= \frac{1}{\lambda}(\sigma_i^Y - \sigma_i^X)({\sigma_i^X}^2 + 1)^2 \geq \frac{1}{\lambda_i}(\sigma_i^Y - \sigma_i^X)
\end{align}
Consider the following two differential equations, with same initialization $a(0) = b(0)$:
\begin{align}
    \dot{a} &= \frac{1}{\lambda}(\sigma - a)(a^2 + 1)^2 \cr
    \dot{b} &= \frac{1}{\lambda}(\sigma - b)
\end{align}
In these equations, both $a$ and $b$ moves from $a(0)=b(0)$ to $\sigma$ monotonically. Since $\dot{a} \geq \dot{b}$ at every $a=b$, $a$ will always be closer to $\sigma$ than $b$ does. We can explicitly solve for $b$, which yields $b(t) = \sigma + (b(0) - \sigma) e^{-\frac{t}{\lambda}}$. Then the distance between $b$ and $\sigma$ is $|b - \sigma| = |b(0) - \sigma| e^{-\frac{t}{\lambda}}$. Using $|b - \sigma|$, we can bound $|a - \sigma|$:
\begin{align}
    |a - \sigma| \leq |b - \sigma| = |b(0) - \sigma| e^{-\frac{t}{\lambda}}
\end{align}
Therefore, 
\begin{align}
    |\sigma_i^X - \sigma_i^Y| \leq |\sigma_i^X(0) - \sigma_i^Y| e^{-\frac{t}{\lambda_i}}
\end{align}
\end{proof}

Since $\lambda$ is a conserved quantity, its value set at initialization remains unchanged throughout the gradient flow. Therefore, we are able to optimize the convergence rate by choosing a favorable value for $\lambda$ at initialization. In this example, smaller $\lambda_i$'s lead to faster convergence.

\subsection{Experiments}
\label{appendix:Q-convergence-experiment}
We compare the convergence rate of two-layer networks initialized with different $Q$ values. 
We run gradient descent on two-layer networks with whitened input with the following objective
\begin{align}
    \text{argmin}_{U, V} \{ \L(U, V)= \|Y - U\sigma(V^T)\|_F^2 \}
    \label{eq:elementwise-objective-exp}
\end{align}
where $\sigma$ is the identity function, ReLU, sigmoid, or tanh. Matrices $Y \in \R^{5 \times 10}$, $U \in \R^{5 \times 50}$ and $V \in \R^{10 \times 50}$ have random Gaussian initialization with zero mean. We repeat the gradient descent with learning rate 0.1, 0.01, and 0.001.
The learning rate is set to $10^{-3}$, as we do not observe significant changes in the shape of learning curves at smaller learning rates. $U$ and $V$ are initialized with different variance, which leads to different initial values of $Q$. 

\begin{figure}[h!]
\centering
\subfigure[linear]{\label{fig:a}\includegraphics[width=0.24\columnwidth]{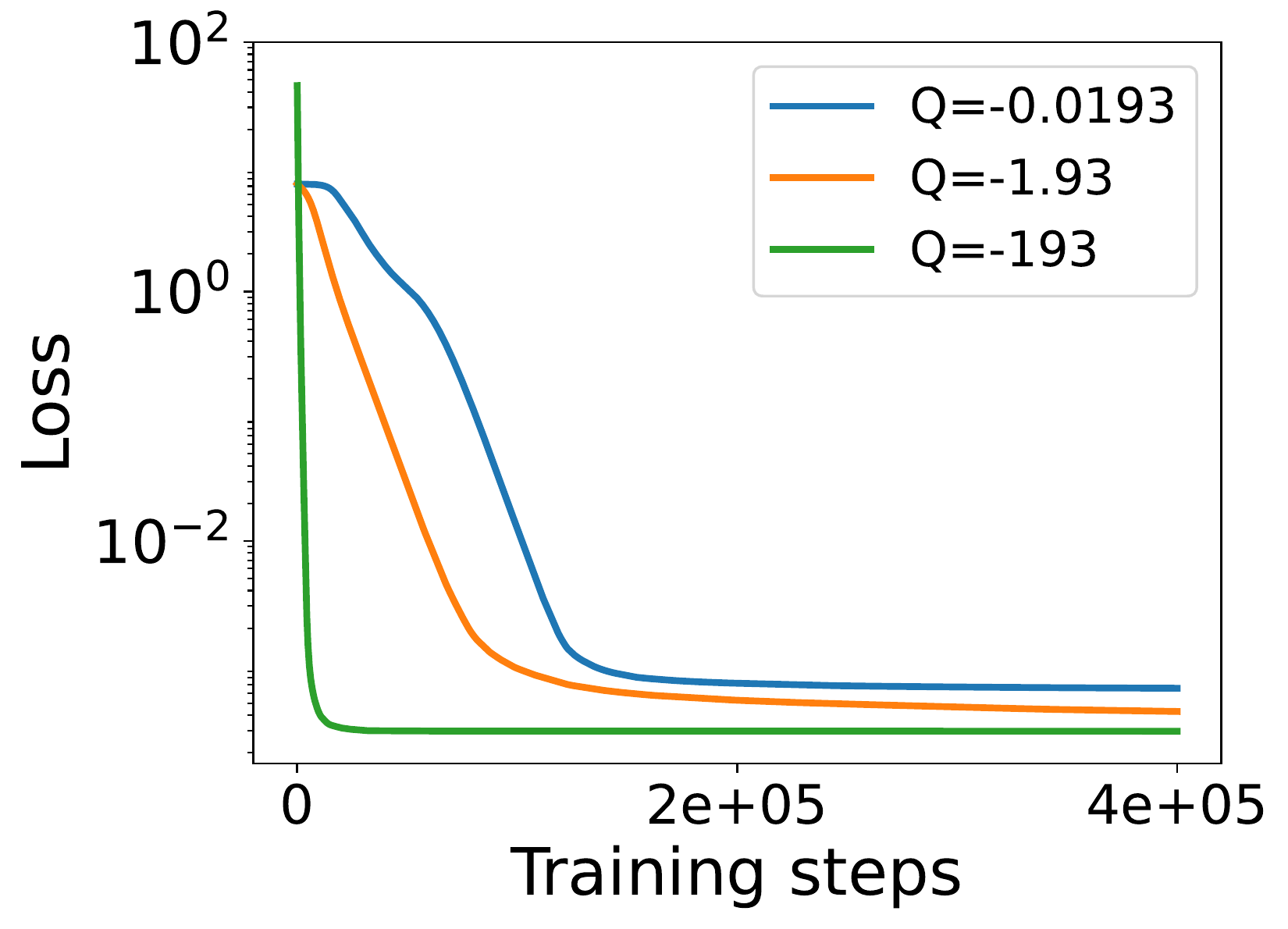}}
\subfigure[ReLU]{\label{fig:b}\includegraphics[width=0.24\columnwidth]{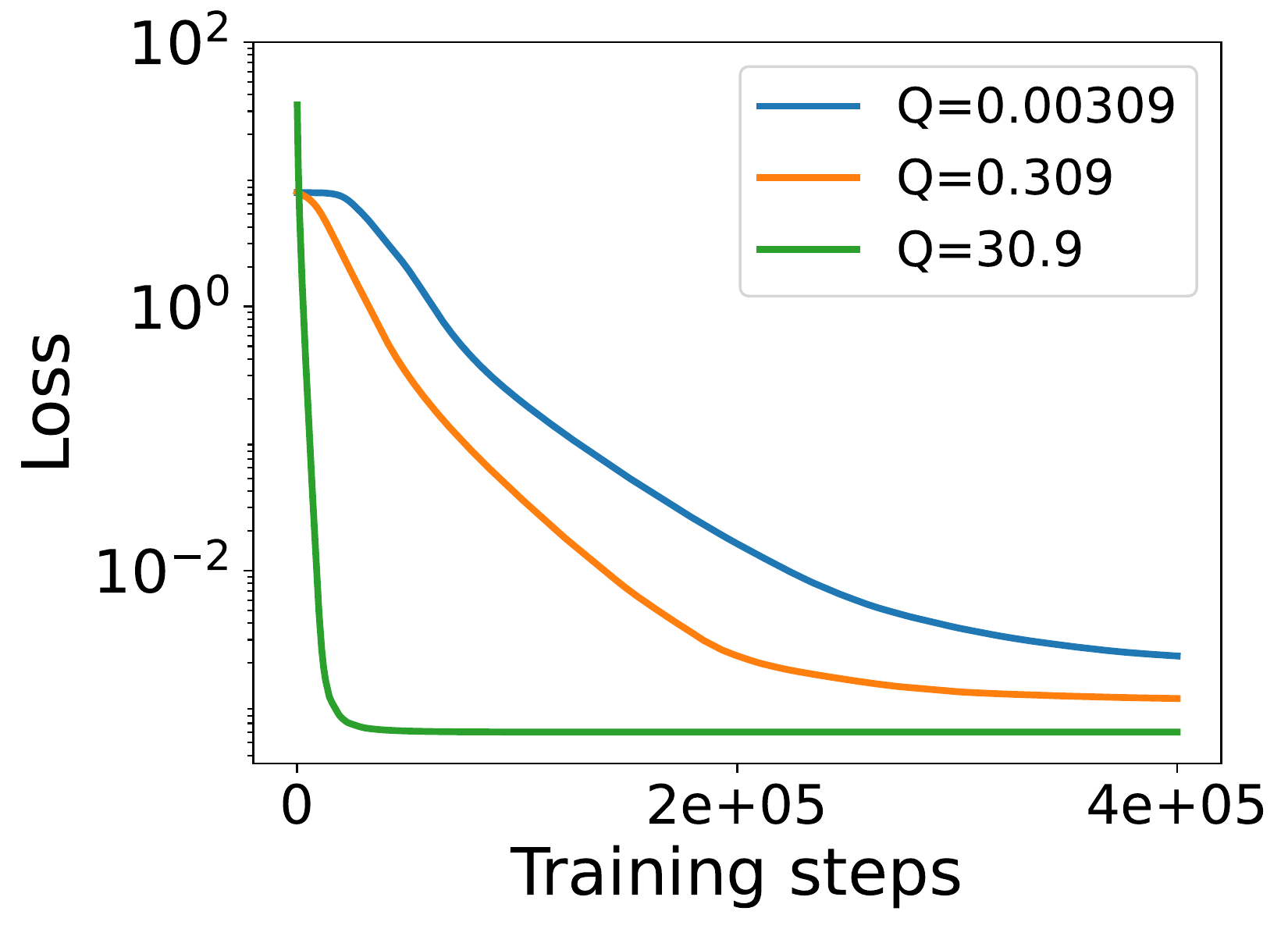}}
\subfigure[tanh]{\label{fig:c}\includegraphics[width=0.24\columnwidth]{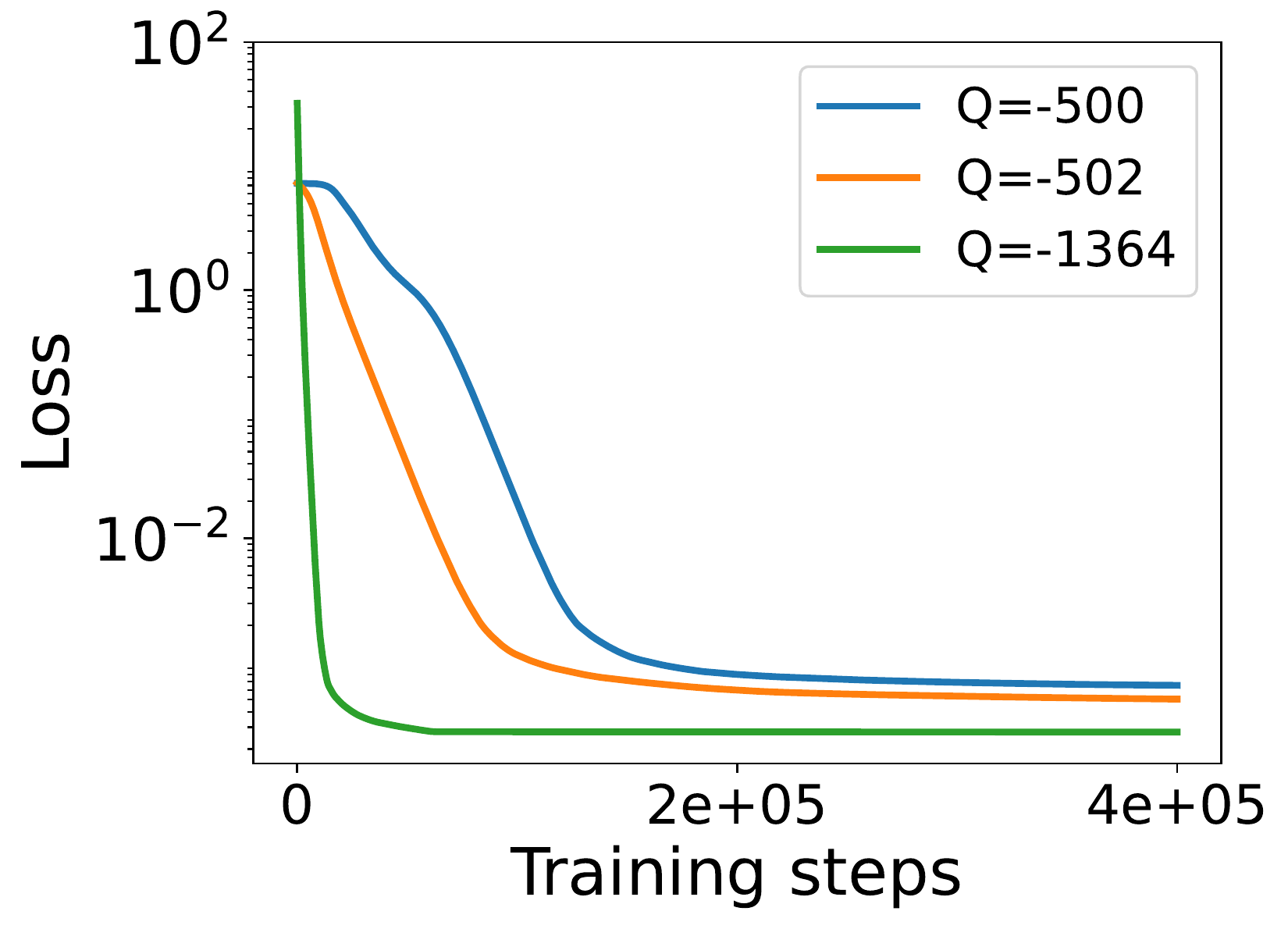}}
\subfigure[sigmoid]{\label{fig:d}\includegraphics[width=0.24\columnwidth]{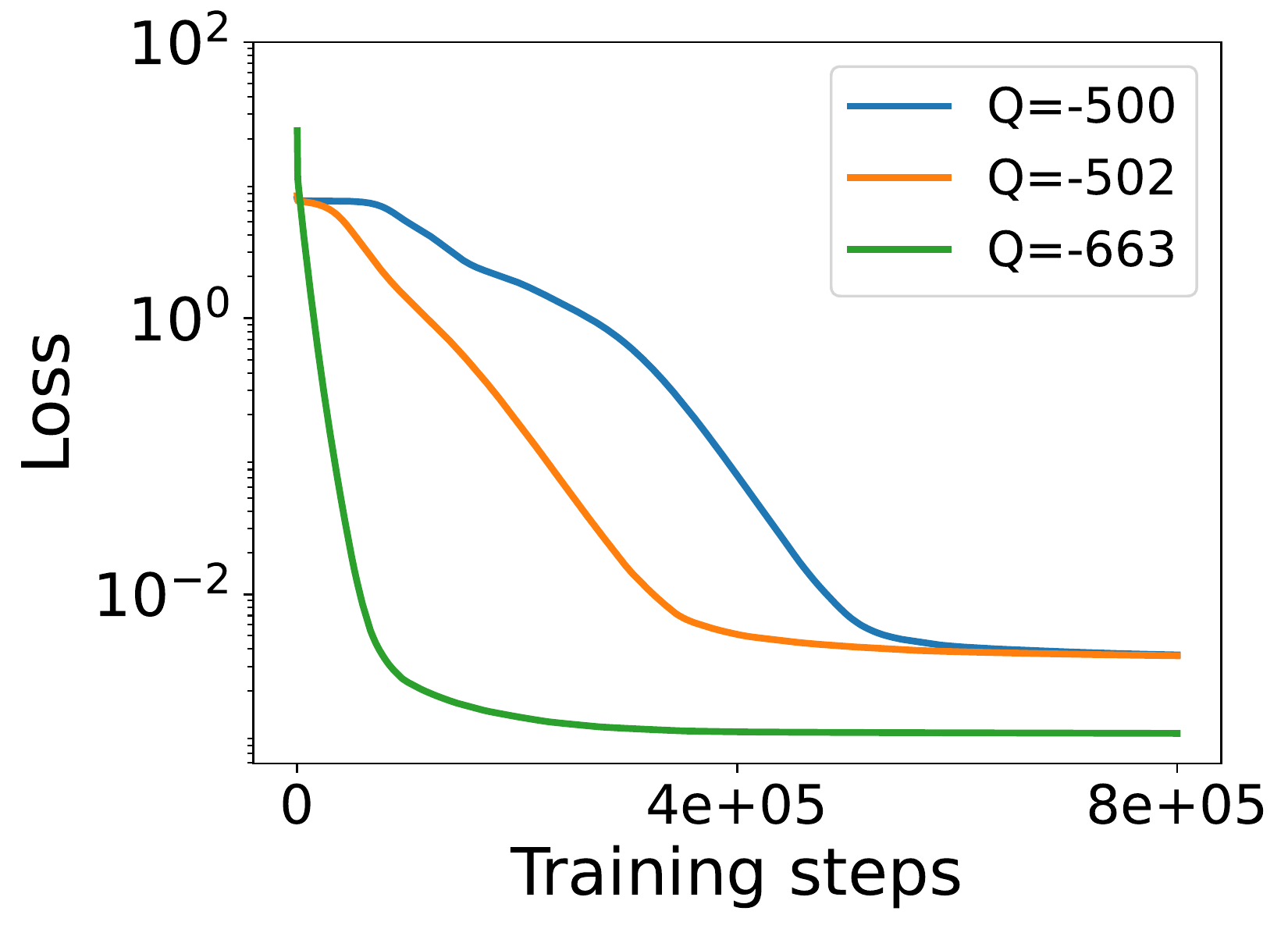}}
\caption{Training curves of two-layer networks initialized with different $Q$. The value of $Q$ affects convergence rate.}
\label{fig:elementwise-convergence}
\end{figure}

As shown in Fig.\ref{fig:elementwise-convergence}, the number of steps required for the loss curves to drop to near convergence level is correlated with $Q$ in both linear and element-wise nonlinear networks. This result provides empirical evidence that initializing parameters with optimal values for $Q$ accelerates convergence.

We then demonstrate the effect of conserved quantity values on the convergence rate of radial neural networks. Fig.\ref{fig:radial-convergence} shows the training curve for loss function defined in Proposition \ref{prop:radial-convergence-example}. We initialize parameters $U \in \R^{5 \times 5}$ and $V \in \R^{10 \times 5}$ with 4 different values of $Q$ and the learning rate is set to $10^{-5}$. As predicted in Eq. \ref{eq:radial-bound}, convergence is faster when $Q = \Tr[U^TU + V^TV]$ is small. 

\begin{figure}[ht!]
\begin{center}
\centerline{\includegraphics[width=0.4\columnwidth]{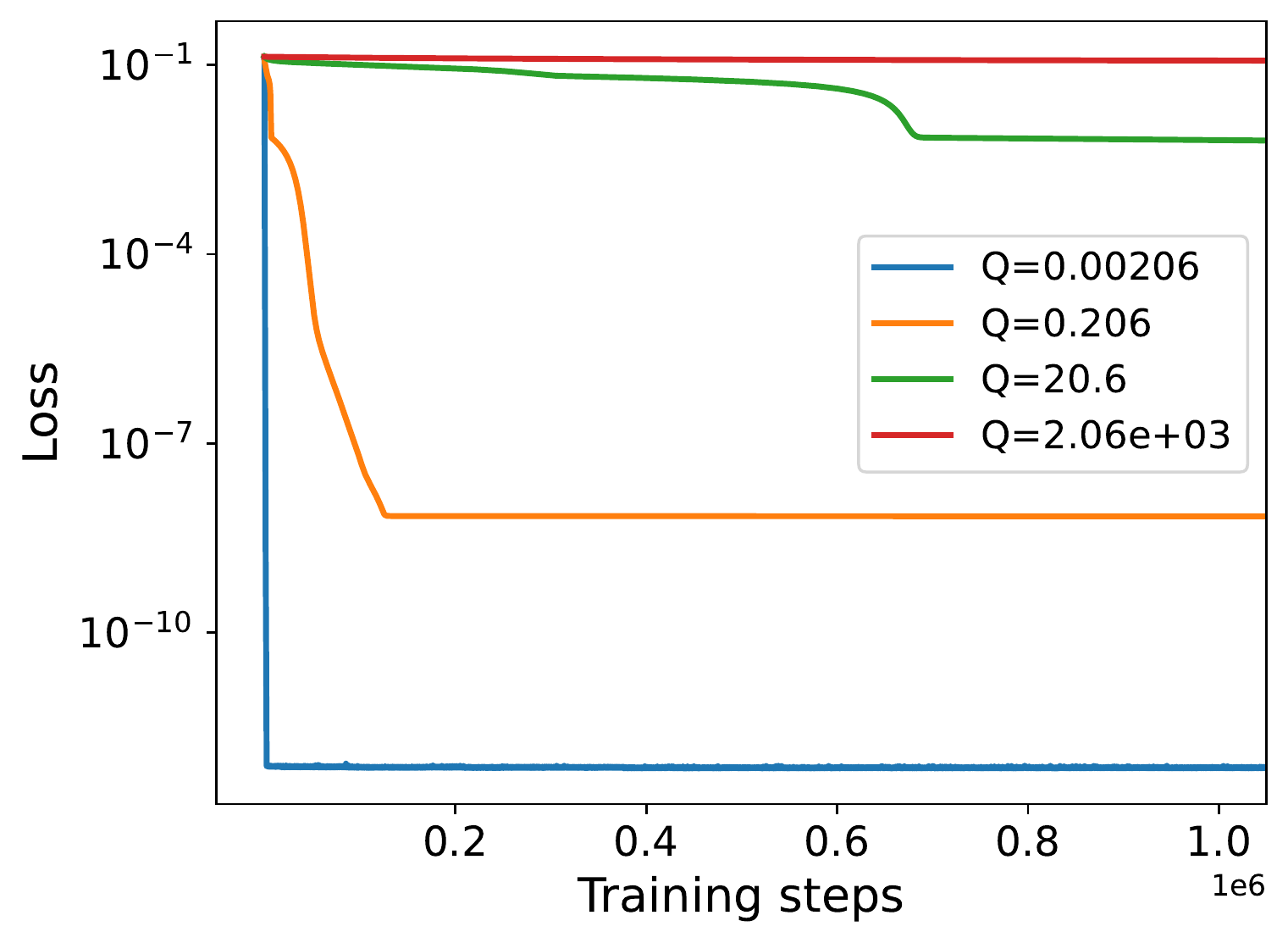}}
\vskip -0.1in
\caption{Training curve for the loss function defined in Proposition \ref{prop:radial-convergence-example}. Smaller value of $Q = \Tr[U^TU + V^TV]$ at initialization leads to faster convergence. }
\label{fig:radial-convergence}
\end{center}
\end{figure}

\section{Conserved quantity and generalization ability}
\label{appendix:Q-generalization}

Conserved quantities parameterize the minimum of neural networks and are related to the eigenvalues of the Hessian at minimum. 
Recent theory and empirical studies suggest that sharp minimum do not generalize well \citep{hochreiter1997flat, keskar2017large, petzka2021relative}.
Explicitly searching for flat minimum has been shown to improve generalization bounds and model performance \citep{chaudhari2017entropy, foret2020sharpness, kim2022fisher}. 
We derive their relationship for the simplest two-layer network, and show empircally that conserved quantity values affect sharpness.
Like convergence rate, a systematic study of the relationship between conserved quantity and generalization ability of the solution is an interesting future direction.

\begin{figure}[h!]
	\centering
	\includegraphics[width=0.4\textwidth]{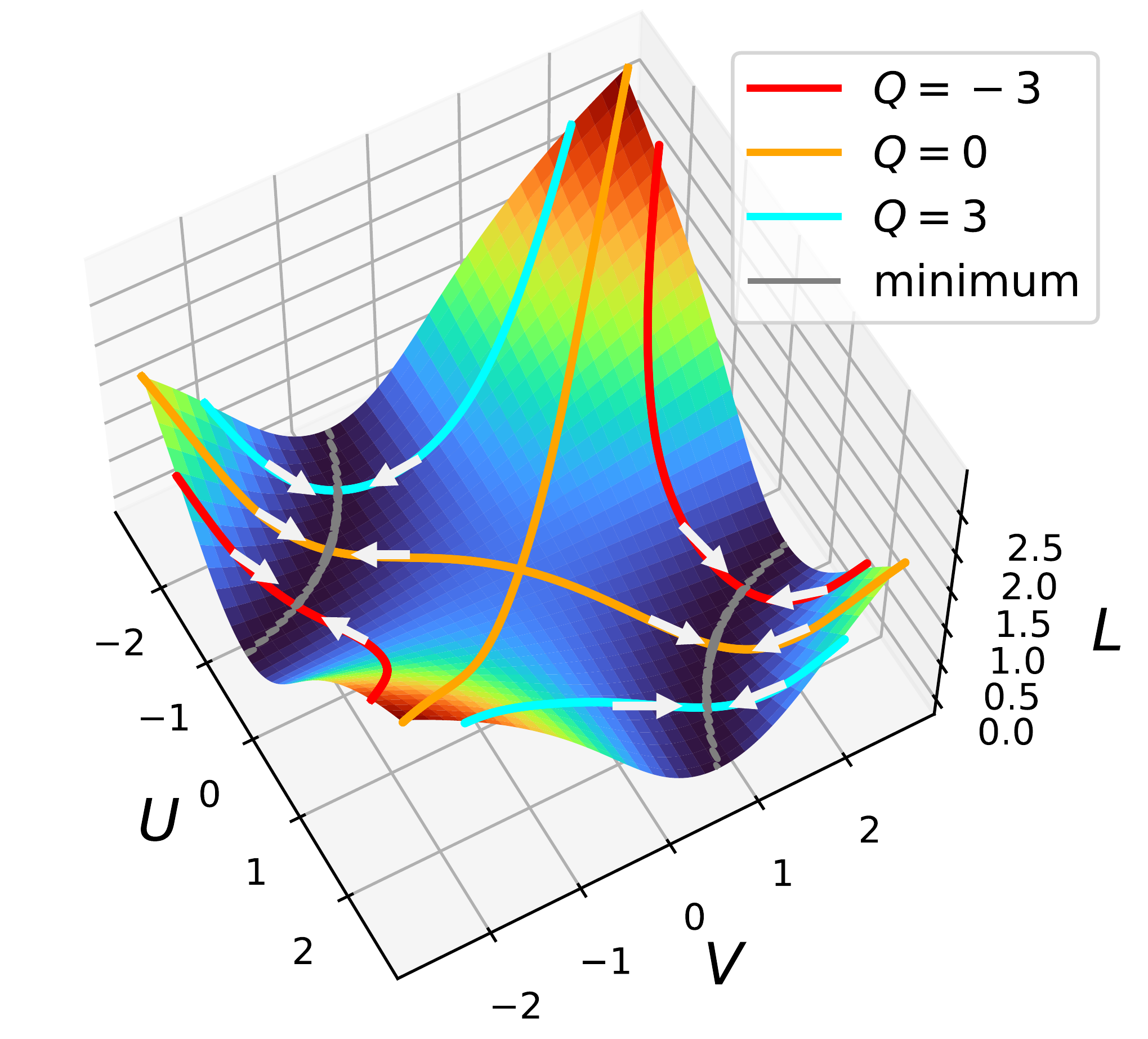}
	\caption{Gradient flow for $L(U,V) = \frac{1}{2}\|Y-UVX\|^2$, where $U, V \in \R$, $Y=2$, and $X=1$. Trajectories corresponding to different values of $Q$ intersect the minima at different points. }
	\label{fig:Q-param}
\end{figure}

\subsection{Example: two-layer linear network with 1D parameters}
\label{appendix:Q-generalization-example}
We again consider the two-layer linear network with loss $\L=\frac{1}{2}\|Y-UVX\|^2$. For simplicity, we work with one dimensional parameters $U, V \in \R$ and assume $X=Y=1$ in this example. We show that at the point to which the gradient flow converges, the eigenvalues of the Hessian are related to the value of the conserved quantity. 

The gradients and Hessian of $\L$ are
\begin{align}
    \grad \L = 
    \begin{bmatrix}
    -(Y - UVX)VX \\
    -(Y - UVX)UX \\
    \end{bmatrix}
    && \H = 
    \begin{bmatrix}
    V^2X^2 & -YX+2UVX^2 \\
    -YX+2UVX^2 & U^2X^2
    \end{bmatrix}
\end{align}
At the minima, $U,V$ are related by $UVX = Y$. Recall that $Q = U^2 - V^2$ is a conserved quantity. From the above two equations, we can write $U,V$ as functions of $Q$. Taking the solution $U=\sqrt{\frac{1}{2}(Q+\sqrt{Q^2 + 4})},V = \sqrt{\frac{1}{2}(-Q+\sqrt{Q^2 + 4})}$ and substitute in $X=Y=1$, we have
\begin{align}
    \H = 
    \begin{bmatrix}
    \frac{1}{2}(-Q + \sqrt{Q^2 + 4}) & 1 \\
    1 & \frac{1}{2}(Q + \sqrt{Q^2 + 4})
    \end{bmatrix},
\end{align}
and the eigenvalues of $\H$ are
\begin{align}
    \lambda_1 = 0, && 
    \lambda_2 = 2\sqrt{Q^2 + 4}.
\end{align}

We have shown that $Q$ is related the eigenvalues of the Hessian at the minimum. Since the eigenvalues determines the curvature, $Q$ also determines the sharpness of the minimum, which is believed to related to model's generalization ability. The result in this example can also be observed in Figure \ref{fig:Q-param-valley}, where the minimum of the $Q=0$ trajectory lies at the least sharp point of the loss valley.

\subsection{Experiments: two-layer networks}
The goal of this section is to explore the relation between $Q$ and the sharpness of the trained model. We measure sharpness by the magnitude of the eigenvalues of the Hessian, which are related to the curvature at the minima.
We use the same loss function \eqref{eq:elementwise-objective-exp} in Section \ref{appendix:Q-convergence-experiment}. The parameters are $U \in \R^{10 \times 50}$ and $V \in \R^{5 \times 50}$, each initialized with zero mean and various standard deviations that lead to different $Q$'s. We first train the models using gradient descent. We then use the vectorized parameters in the trained model to compute the eigenvalues of the Hessian. 

The linear model extends the example in Section \ref{appendix:Q-generalization-example} to higher dimension parameter spaces. 700 out of the 750 eigenvalues are around 0 (with magnitude $\leq 10^{-3})$, which verifies the dimension of the minima in Proposition \ref{prop:GL-orbit-dim}. After removing the small eigenvalues, the center of the eigenvalue distribution correlates positively with the value of $Q$ (Figure \ref{fig:Q-sharpness-a}). In models with nonlinear activations, $Q$ is still related to eigenvalue distributions, although the relations seem to be more complicated. 

\begin{figure}[h!]
\centering
\subfigure[linear]{\label{fig:Q-sharpness-a}\includegraphics[width=0.24\columnwidth]{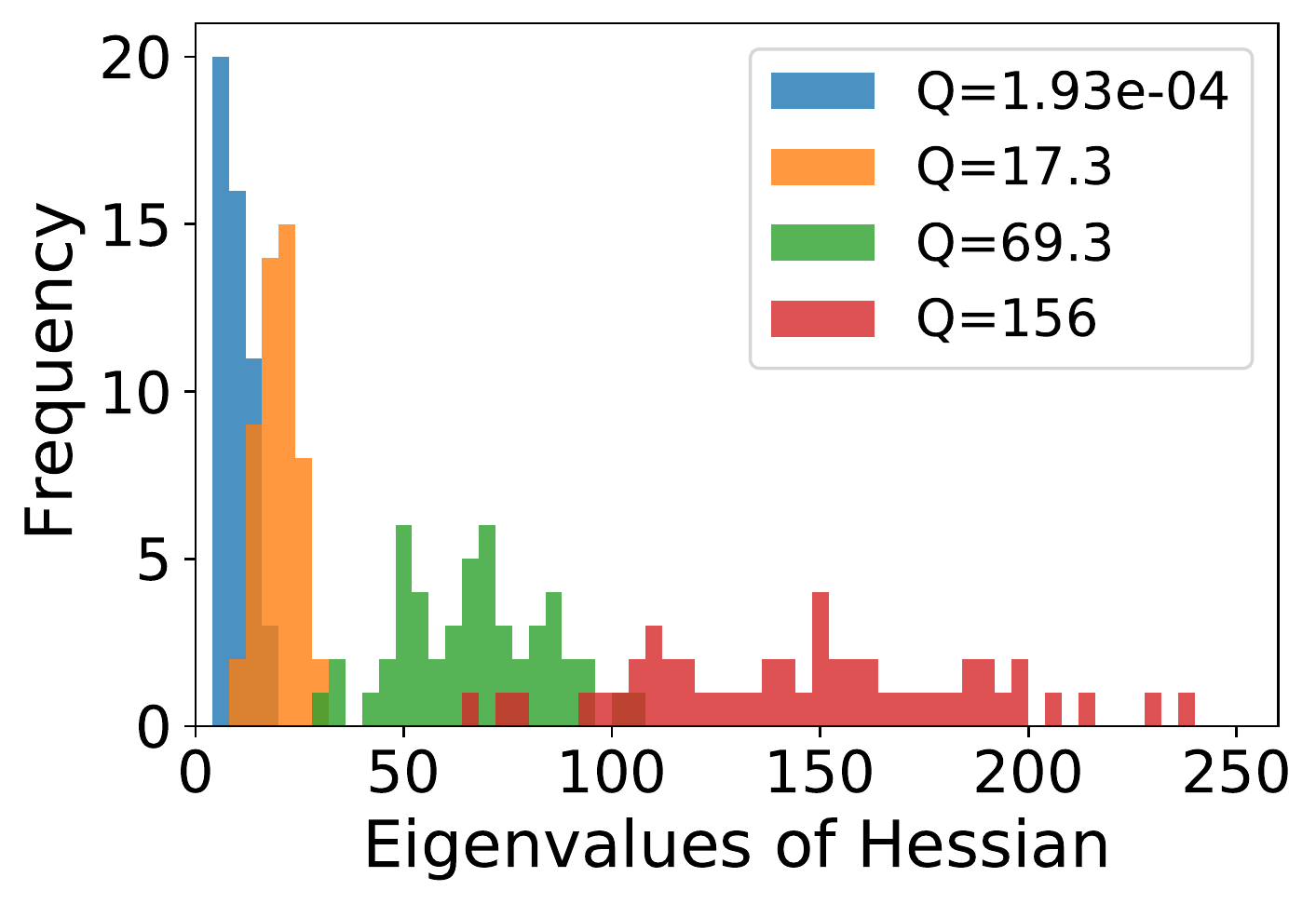}}
\subfigure[ReLU]{\label{fig:Q-sharpness-b}\includegraphics[width=0.24\columnwidth]{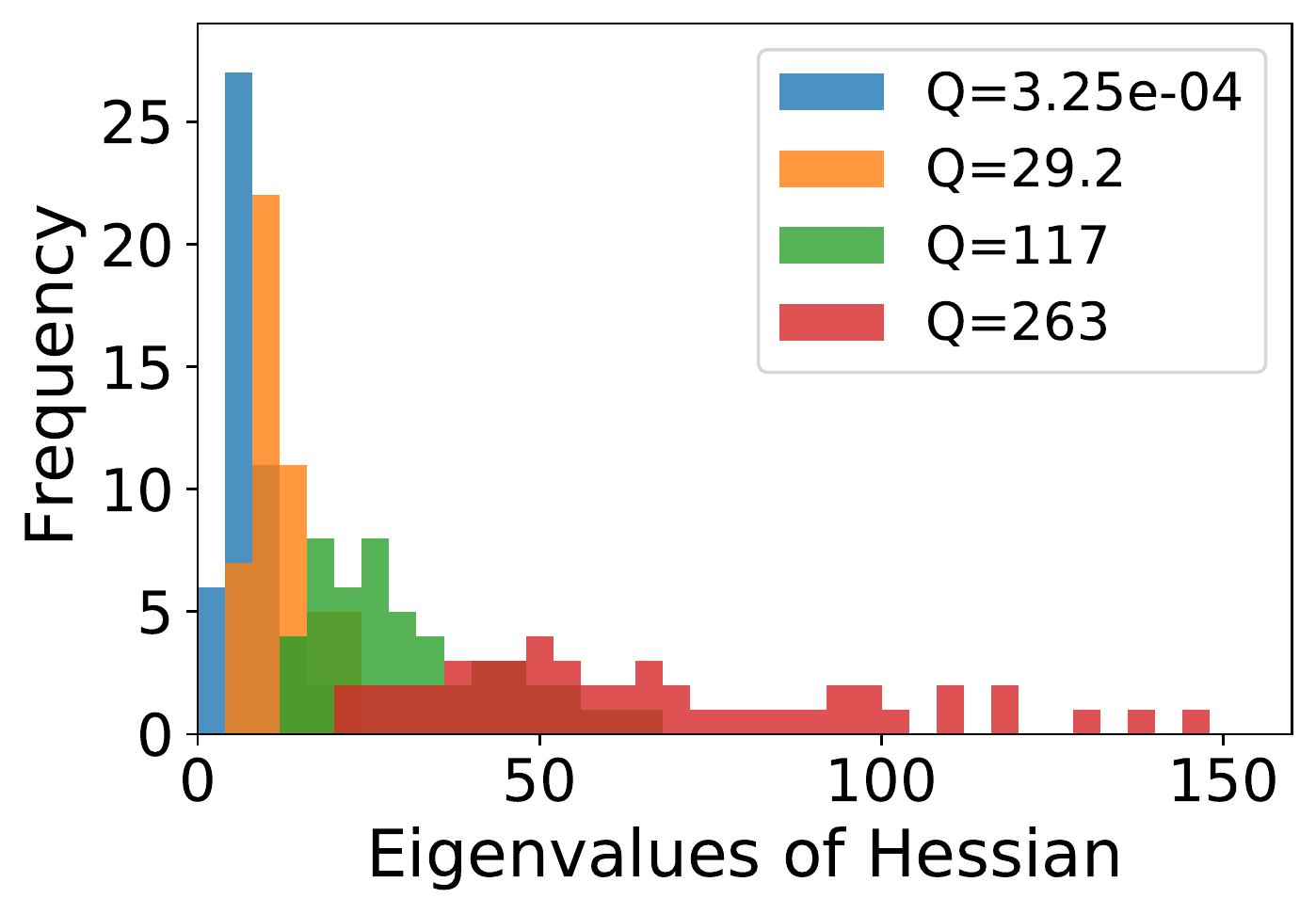}}
\subfigure[tanh]{\label{fig:Q-sharpness-c}\includegraphics[width=0.24\columnwidth]{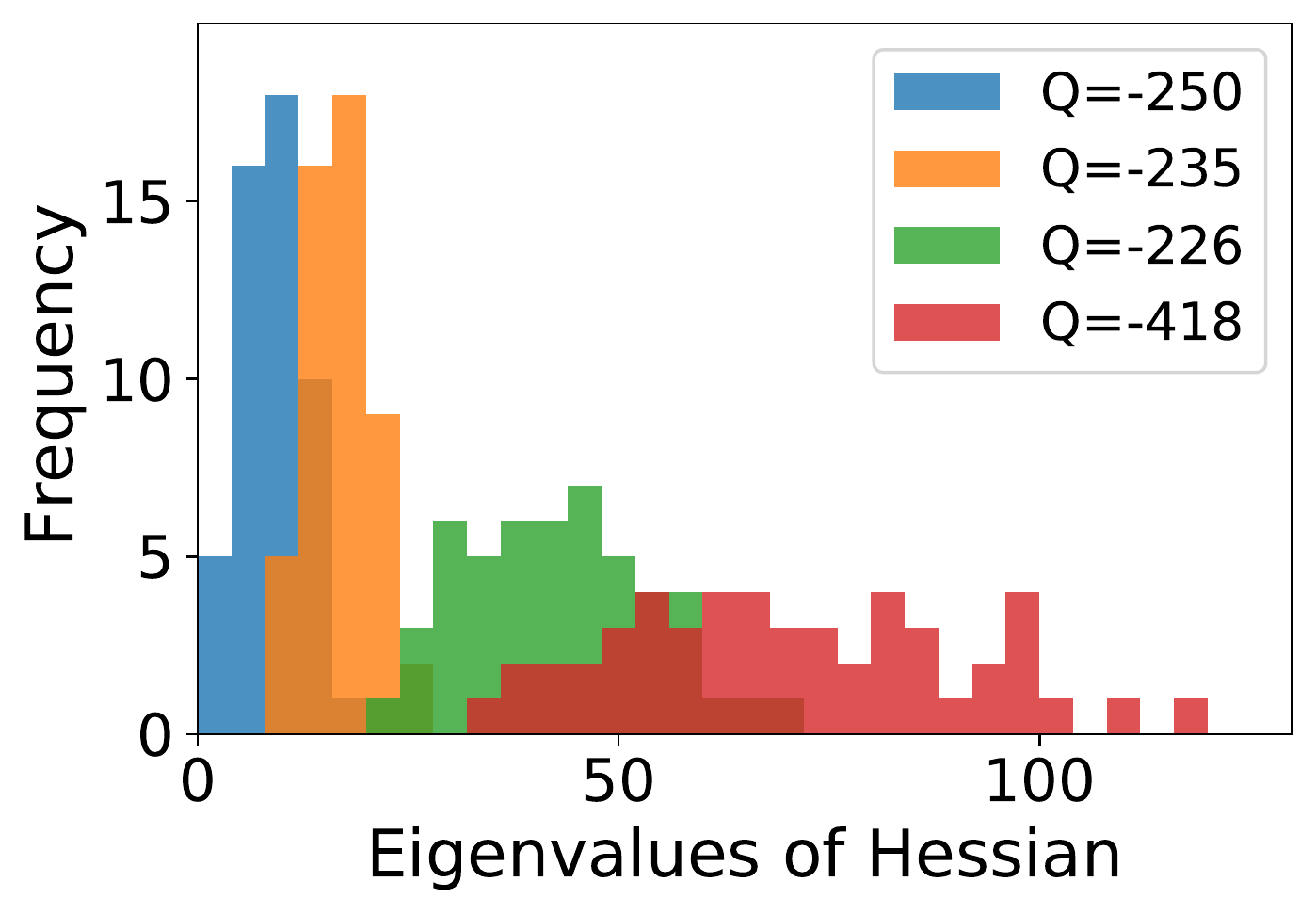}}
\subfigure[sigmoid]{\label{fig:Q-sharpness-d}\includegraphics[width=0.24\columnwidth]{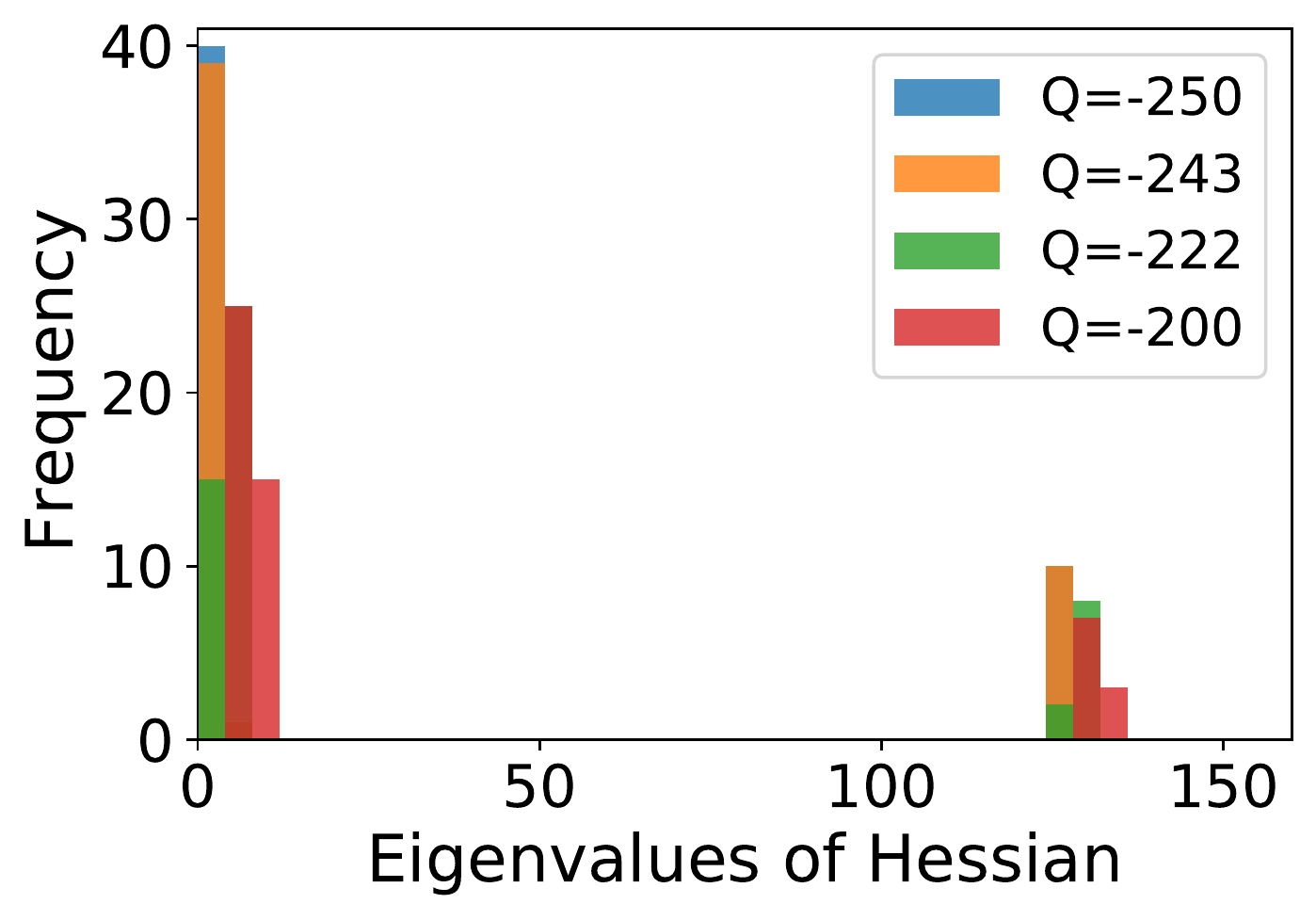}}
\caption{Eigenvalues of the Hessian from trained models initialized with different conserved quantity values ($Q$). The distribution of the eigenvalues and the value of $Q$ appear to be related.}
\label{fig:Q-sharpness}
\end{figure}

\section{Ensemble models}
\label{appendix:ensemble}
In neural networks, the optima of the loss functions are connected by curves or volumes, on which the loss is almost constant \citep{freeman2017topology, garipov2018loss, draxler2018essentially, benton2021loss, izmailov2018averaging}.
Various algorithms have been proposed to find these low-cost curves, which provides a low-cost way to create an ensemble of models from a single trained model. 
Using our group actions, we propose a new way of constructing models with similar loss values. We show that even with stochasticity in the data, the loss is approximately unchanged under the group action (Appendix \ref{appendix:ensemble}). This provides an efficient alternative to build ensemble models, since the transformation only requires random elements in the symmetry group, without any searching or additional optimization.

We implement our group actions by modifying the activation function between two consecutive layers. Let $H=VX$ be the output of the previous layer. The group action on the weights $U, V$ is 
\begin{align}
    g \cdot (U, V) = (U\pi(g,H), gV)
\end{align}
where $\pi(g,H) = \sigma(H)\sigma(gH)^\dagger$. 
The new activation implements the symmetry group action
\begin{align}
    U\sigma(H) \to U\pi(g,H)\sigma(gH)
\end{align}
by wrapping the transformations around an activation function $\sigma' (x) = \pi(g,x)\sigma(gx) $, so that $U\sigma'(H) =  U\pi(g,H)\sigma(gH)$. 

We test the group action on CIFAR-10. The model contains a convolution layer with kernel size 3, followed by a max pooling, a fully connected layer, a leaky ReLU activation, and another fully connected layer. The group action is on the last two fully connected layers. 
After training a single model, we create transformed models using $g = I + \eps M$, where $M \in \R^{32\times 32}$ is a random matrix and $\eps$ controls the magnitude of movement in the parameter space. We then use the mode of the transformed models' prediction as the final output.

We compare the ensemble formed by group actions to four ensembles formed by various random transformation. Let $g = I + \eps M$. The random baselines are:
\begin{itemize}
    \item `group': $(U, V) \mapsto (U \pi(g, H), gV)$. This is the model created by group actions.
    \item `$g^{-1}$': $(U, V) \mapsto (U g^{-1}, gV)$.
    \item `random': $(U, V) \mapsto (U g', g V)$, where $g' = I + \eps D$ and $D$ is a random diagonal matrix.
    \item `shuffle': $(U, V) \mapsto (U \pi'(g, H), gV)$, where $\pi'(g, H)$ is constructed by randomly shuffling $\pi(g, H)$.
    \item `interpolated permute' or 'perm\_interp': $(U, V) \mapsto (U \left(\frac{I + \frac{\eps}{2}(I + S)}{I + \eps}\right)^{-1}, \frac{I + \frac{\eps}{2}(I + S)}{I + \eps}V)$, where $S \in \R^{32 \times 32}$ is a random permutation matrix.
\end{itemize}


Figure \ref{fig:cnn_cifar_leakyrelu2_main} shows the accuracy of the ensembles compared to single models. The ensemble formed by group actions preserves the model accuracy for small $\eps$ and has smaller accuracy drop at larger $\eps$. 
The ensemble model also improves robustness against Fast Gradient Signed Method (FGSM) attacks (Figure \ref{fig:cnn_cifar_leakyrelu_fgsm_h32_g100_eps1}). Under FGSM attacks with various strength, the ensemble model created using group actions consistently performs better than the baselines with random transformations. However, the same improvement is not observed under Projected Gradient Descent (PGD) attacks. 

\begin{figure}[ht!]
\centering
\includegraphics[width=1.0\textwidth]{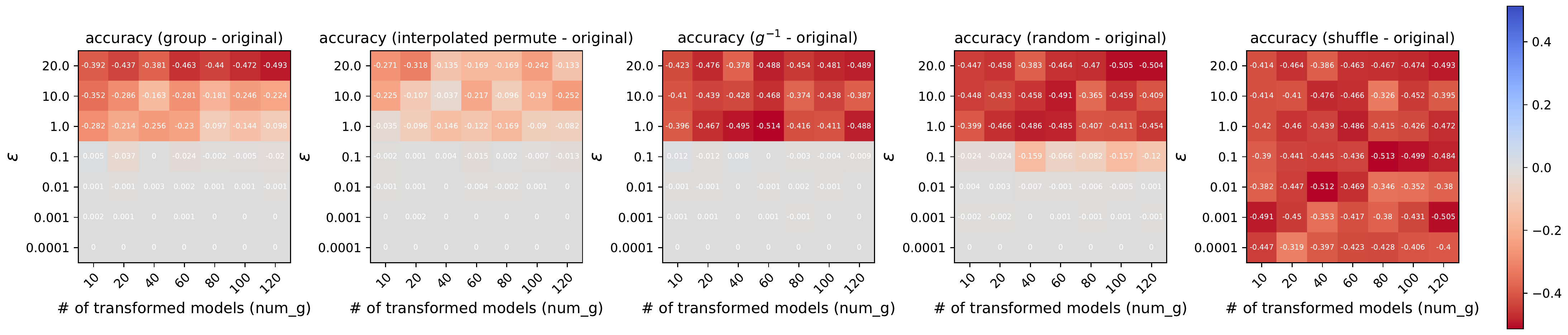}
\caption{Change in accuracy compared to the original single model when using the ensemble model and 4 baselines. The red color indicates degradation in model performance. The ensemble created by group actions has similar loss values when $\eps$ is small.}
\label{fig:cnn_cifar_leakyrelu2_main}
\end{figure}

\makeatletter
\setlength{\@fptop}{0pt}
\makeatother
\begin{figure}[ht!]
\centering
\subfigure[FGSM]{\label{fig:fgsm}\includegraphics[width=0.45\columnwidth]{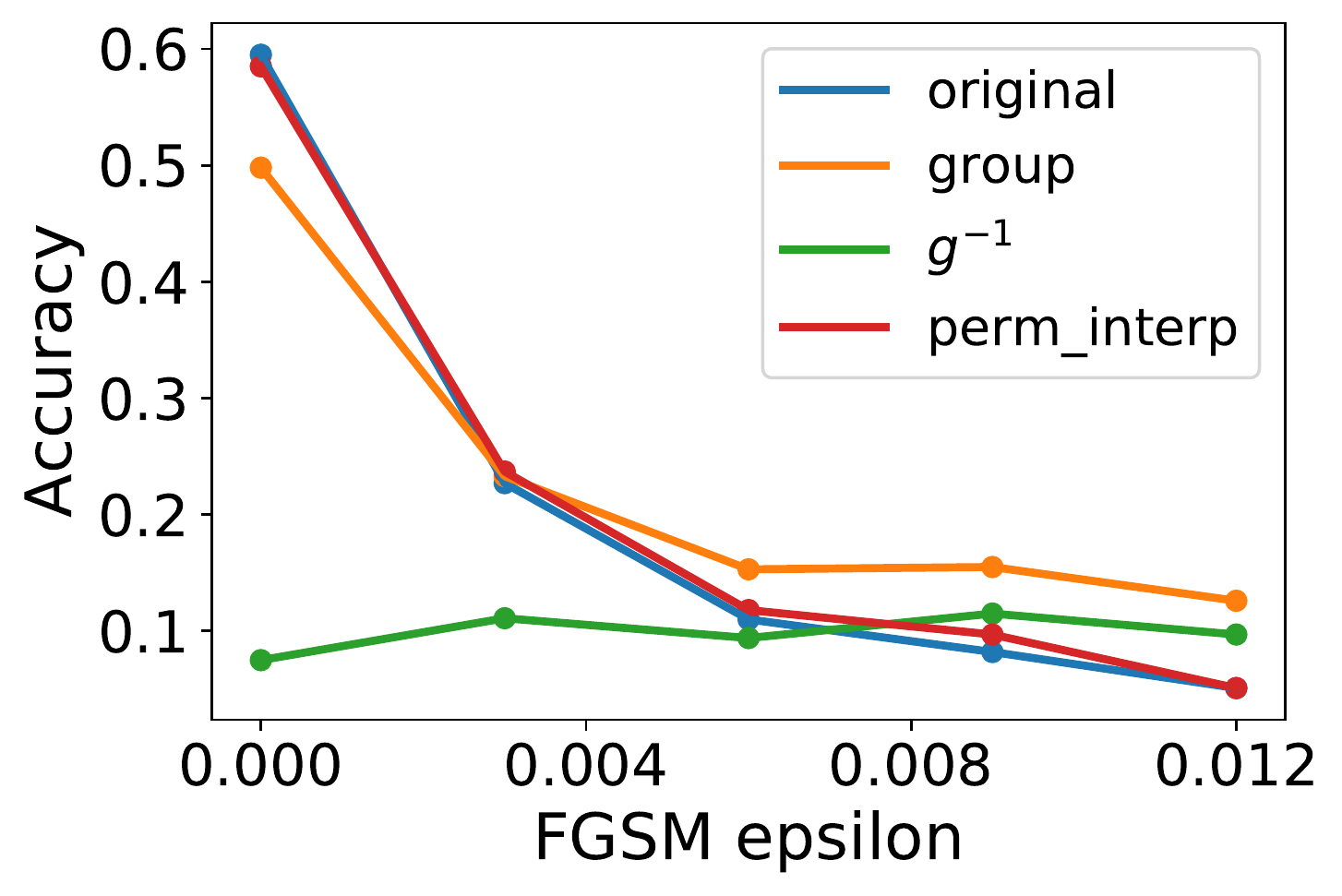}}
\subfigure[PGD]{\label{fig:fgsm}\includegraphics[width=0.45\columnwidth]{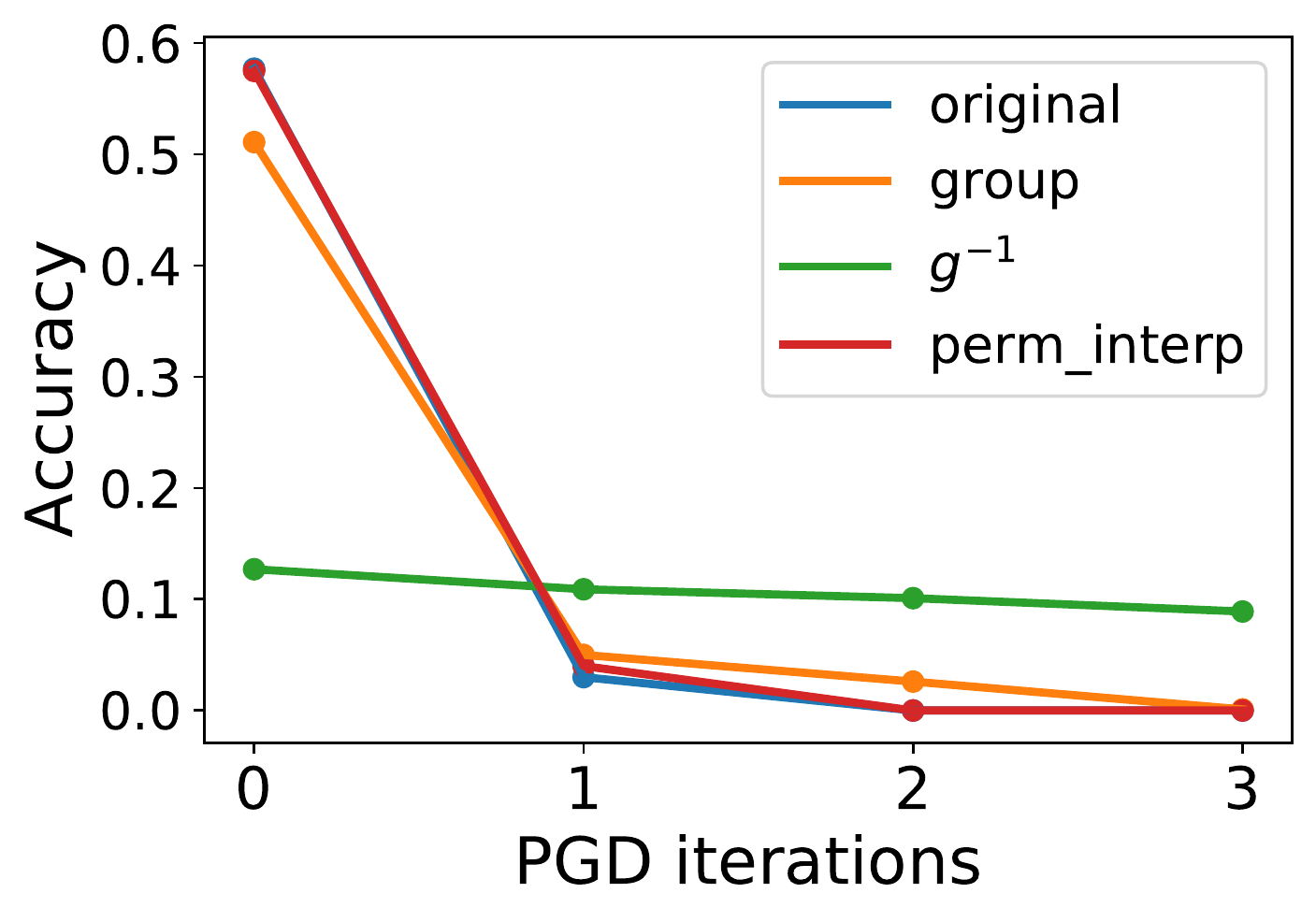}}
\caption{Adversarial attacks on the original model and the ensemble models with various strengths. In FGSM, the group ensemble model improves robustness. In PGD, the ensemble has negligible effects on robustness.}
\label{fig:cnn_cifar_leakyrelu_fgsm_h32_g100_eps1}
\end{figure}

\end{document}